\documentclass{article}

\PassOptionsToPackage{numbers,sort,compress}{natbib}



    \usepackage[final]{neurips_2025}


\usepackage[utf8]{inputenc}
\usepackage[T1]{fontenc}

\usepackage[hidelinks]{hyperref} 
\usepackage{url}
\usepackage{booktabs} 
\usepackage{amsfonts}
\usepackage{amsmath}
\usepackage{amssymb}
\usepackage{mathtools}
\usepackage{microtype}   
\usepackage{subcaption}
\usepackage{amsthm}
\usepackage[capitalize,noabbrev]{cleveref}
\usepackage{xcolor}
\usepackage{bbm}
\usepackage{stackrel}
\usepackage{csquotes}
\usepackage{enumitem}
\usepackage[splitrule]{footmisc}
\usepackage{microtype}
\usepackage{graphicx}
\usepackage{booktabs} 
\usepackage{caption}

\usepackage{makecell, tabularx, multirow, xcolor}
\usepackage{colortbl}
\usepackage{etoc}
\usepackage{titletoc}
\usepackage{adjustbox}
\usepackage{wrapfig}
\usepackage{etoolbox}
\usepackage{siunitx}

\usepackage[noend]{algpseudocode}
\usepackage{algorithm}

\definecolor{darkorange25512714}{RGB}{255,127,14}
\definecolor{steelblue31119180}{RGB}{31,119,180}

\setlist[itemize]{leftmargin=*,itemsep=0.25pt,topsep=0.25pt,partopsep=0pt}
\setlist[enumerate]{leftmargin=*,itemsep=0.25pt,topsep=0.25pt,partopsep=0pt}


\setlength{\tabcolsep}{0.4em}
\newcommand{\R}{\mathbb{R}}
\newcommand{\N}{\mathbb{N}}

\newcommand{\Z}{\mathcal{Z}}

\newcommand{\E}{\mathbbm{E}}
\renewcommand{\P}{\mathbbm{P}}
\newcommand{\Q}{\mathbbm{Q}}
\newcommand{\V}{\operatorname{Var}}
\newcommand{\Var}{\operatorname{Var}}
\newcommand{\KL}{D_{\mathrm{KL}}}

\newcommand{\VV}{V}

\DeclareMathOperator*{\argmin}{arg\,min}
\DeclareMathOperator*{\argmax}{arg\,max}
\makeatletter
\DeclareRobustCommand{\cev}[1]{%
  {\mathpalette\do@cev{#1}}%
}
\newcommand{\do@cev}[2]{%
  \vbox{\offinterlineskip
    \sbox\z@{$\m@th#1 x$}%
    \ialign{##\cr
      \hidewidth\reflectbox{$\m@th#1\vec{}\mkern4mu$}\hidewidth\cr
      \noalign{\kern-\ht\z@}
      $\m@th#1#2$\cr
    }%
  }%
}

\makeatother

\newcommand{\dd}{\mathrm{d}}

\theoremstyle{plain}
\newtheorem{theorem}{Theorem}[section]
\newtheorem{proposition}[theorem]{Proposition}
\newtheorem{lemma}[theorem]{Lemma}
\newtheorem{corollary}[theorem]{Corollary}
\theoremstyle{definition}
\newtheorem{definition}[theorem]{Definition}

\newtheorem{remark}[theorem]{Remark}

\usepackage{comment}
\usepackage[color=white,disable]{todonotes}

\newcommand{\highprio}[1]{}
\newcommand{\lowprio}[1]{}

\usepackage{caption}
\captionsetup{size=small}
\captionsetup[algorithm]{font=small}

\renewcommand{\paragraph}[1]{\textbf{#1}}
\crefname{section}{Sec.}{Secs.}
\crefname{subsection}{Sec.}{Secs.}
\crefname{proposition}{Prop.}{Props.}
\crefname{figure}{Fig.}{Figs.}
\crefname{appendix}{App.}{Apps.}
\crefname{table}{Tab.}{Tabs.}
\crefname{corollary}{Cor.}{Cors.}
\crefname{theorem}{Thm.}{Thms.}
\crefname{equation}{Eq.}{Eqs.}

\makeatletter

\newcommand{\@setequationsize}[1]{%
  \AtBeginEnvironment{equation}{#1}%
  \AtBeginEnvironment{equation*}{#1}%
  \AtBeginEnvironment{align}{#1}%
  \AtBeginEnvironment{align*}{#1}%
  \AtBeginEnvironment{gather}{#1}%
  \AtBeginEnvironment{gather*}{#1}%
  \AtBeginEnvironment{multline}{#1}%
  \AtBeginEnvironment{multline*}{#1}%
  \AtBeginEnvironment{flalign}{#1}%
  \AtBeginEnvironment{flalign*}{#1}%
  \AtBeginEnvironment{displaymath}{#1}
}

\newcommand{\MakeAllEquationsSmall}{%
  \@setequationsize{\small}%
}
\newcommand{\MakeAllEquationsNormal}{%
  \@setequationsize{\normalsize}%
}

\MakeAllEquationsSmall

\linespread{0.98}
\parskip=5pt
\newcommand{\zerodisplayskips}{%
  \setlength{\abovedisplayskip}{1mm}%
  \setlength{\belowdisplayskip}{1mm}%
  \setlength{\abovedisplayshortskip}{0.5mm}%
  \setlength{\belowdisplayshortskip}{0.5mm}}
\appto{\normalsize}{\zerodisplayskips}
\appto{\small}{\zerodisplayskips}
\appto{\footnotesize}{\zerodisplayskips}

\def\section{\@startsection{section}{1}{\z@}{-0.05in}{0.02in}
             {\large\bf\raggedright}}
\def\subsection{\@startsection{subsection}{2}{\z@}{-0.05in}{0.01in}
                {\normalsize\bf\raggedright}}
\def\subsubsection{\@startsection{subsubsection}{3}{\z@}{-0.05in}{0.01in}
                {\normalsize\sc\raggedright}}
\makeatother

\title{Trust Region Constrained Measure Transport in Path Space for Stochastic Optimal Control and Inference}

\makeatletter
\renewcommand*{\@fnsymbol}[1]{%
  \ifcase#1\or \dagger\or \ddagger\or
    \mathsection\or \mathparagraph\or \|\or **\or \dagger\dagger \or \ddagger\ddagger \else\@ctrerr\fi}
\makeatother

\author{%
Denis Blessing\thanks{Correspondence to \texttt{denis.blessing@kit.edu}. \quad $^*$Equal contribution.}\textsuperscript{\,\,$,1$} \ 
    Julius Berner\textsuperscript{$*,2$} \ Lorenz Richter\textsuperscript{$*,3,4$} \ Carles Domingo-Enrich\textsuperscript{$*,5$} \\
    \textbf{Yuanqi Du\textsuperscript{$6$} \ Arash Vahdat\textsuperscript{$2$} \ Gerhard Neumann\textsuperscript{$1$}} \vspace{0.2cm} \\
    $^{1}$Karlsruhe Institute of Technology \ 
    $^{2}$NVIDIA \
    $^{3}$Zuse Institute Berlin \
    $^{4}$dida \\
    $^{5}$Microsoft Research New England \
    $^{6}$Cornell University
}

\begin{document}

\maketitle

\begin{abstract}
Solving stochastic optimal control problems with quadratic control costs can be viewed as approximating a target path space measure, e.g. via gradient-based optimization. In practice, however, this optimization is challenging in particular if the target measure differs substantially from the prior. In this work, we therefore approach the problem by iteratively solving constrained problems incorporating trust regions that aim for approaching the target measure gradually in a systematic way. It turns out that this trust region based strategy can be understood as a geometric annealing from the prior to the target measure, where, however, the incorporated trust regions lead to a principled and educated way of choosing the time steps in the annealing path. We demonstrate in multiple optimal control applications that our novel method can improve performance significantly, including tasks in diffusion-based sampling, transition path sampling, and fine-tuning of diffusion models.
\end{abstract}%
\section{Introduction}
\lowprio{Commenting this out since the NeurIPS email said we should not deviate from the submitted abstract. \\[1em]
JB: Many problems in the context of stochastic optimal control (SOC) can be cast as approximating a target distribution starting from an initial base distribution, e.g., via gradient-based optimization. However, this optimization is challenging if the target differs substantially from the base distribution. In this work, we propose trust region methods as a principled way of defining well-behaved geometric annealings between the base and target distributions. In particular, the trust region bound of the constrained optimization problem directly translates to a bound on the desired variance of the importance weights among adjacent steps. If the target distribution is a path space measure, we show that these constrained optimization problems lead to a sequence of intermediate SOC problems that can be efficiently solved using buffers and versions of SOC matching or log-variance divergences. We demonstrate that our novel method can significantly improve performance across a wide range of applications, ranging from classical SOC and sampling problems, to transition path sampling of molecular systems and reward fine-tuning of text-to-image diffusion models.
\\[1em]}

Even though the theory of stochastic optimal control (SOC) dates back several decades \cite{bellman57,fleming1975deterministic}, it has recently attracted renewed interest within the machine learning community. Building on novel formulations that are well-suited for gradient-based optimization (see~\cite{domingo2024taxonomy} for an overview) and drawing connections to diffusion models~\cite{de2021diffusion,berner2022optimal,pavon2022local}, recent work has led to significant progress in the numerical approximation of high-dimensional control problems using neural networks \cite{nuesken2021solving,domingoenrich2024stochastic}. Related problems are crucial in many practical applications, ranging from sampling problems (e.g., in  statistical physics~\cite{henin2022enhanced,faulkner2024sampling}, Bayesian statistics~\cite{neal1993probabilistic,gelman2013bayesian}, and reinforcement learning~\cite{celik2025dime}) to fine-tuning of diffusion models~\cite{didi2023framework,domingoenrich2025adjoint,venkatraman2024amortizing}.
In this work, we aim to further advance SOC approximation methods by taking inspiration from trust region methods used in optimization~\cite{peters2010relative,abdolmaleki2015model,schulman2015trust,thalmeier2020adaptive,otto2021differentiable}, resulting in a principled framework from the perspective of measure transport in path space.

\paragraph{Stochastic optimal control.} SOC problems (with quadratic control costs) describe optimization problems of the form
\begin{align}
\label{eq:soc_intro}
    \min_{u \in \mathcal{U}}\  \E \left[ \int_0^T \left( \tfrac{1}{2} \|u\|^2 + f \right)(X^u_s,s) \, \dd s + g(X^u_T)\right] \ \ \text{with} \ \  \begin{cases} \dd X^u_s = \left(b + \sigma u\right)(X^u_s,s) \dd s + \sigma(s) \dd W_s \\ X_0 \sim p_0, \end{cases}
\end{align}
where one optimizes the control $u$ of the stochastic differential equation (SDE).
Since the law of the SDE solution $X^u$
induces a so-called \emph{path measure} $\P^u$ on the space of continuous trajectories (specifying how likely a certain trajectory is), finding the optimal control is equivalent to finding an optimal target path space measure $\Q$. From the SOC literature it is known that the likelihood of $\Q$ w.r.t.\@ $\P^u$ can be expressed in closed-form (see \cite{dai1996connections} and~\eqref{eq:opt_change_of_measure} below), which allows to minimize divergences\footnote{Note that the cost functional~\eqref{eq:soc_intro} corresponds (up to the normalizing constant) to the reverse Kullback-Leibler (KL) divergence $D=\KL$.} $D(\P^u,\Q)$ via gradient-based optimization (also termed \textit{iterative diffusion optimization}).

\paragraph{Trust region methods.} However, if the target $\Q$ is rather different from the initialization $\P^{u_0}$ (typically the uncontrolled process with $u_0=\mathbf{0}$), many algorithms face challenges with high variances or mode discovery when directly minimizing $D(\P^u,\Q)$, especially in high dimensions. To this end, we propose to approach the target measure gradually by a sequence $(\P^{u_i})_i$, where in the $i$-th step we add the constraint $\KL(\P^u | \P^{{u_{i-1}}}) \le \varepsilon$ to the cost functional~\eqref{eq:soc_intro}, with $u_{i-1}$ being the approximated optimal control from the previous iteration and $\varepsilon > 0$ a chosen trust region bound. We prove that the intermediate measures $\P^{u_i}$ define a geometric annealing between the prior $\P^{u_0}$ and target measure $\Q$, where the annealing step-sizes are chosen optimally, in the sense of having an approximately constant change in Fisher-Rao distance (\Cref{prop: Optimal change of measure as geometric annealing,prop:properties_of_annealing}). 
Finding an optimal annealing schedule is paramount for the convergence speed of many measure transport and sampling methods~\cite{syed2024optimised},
and understanding physical processes~\cite{salamon1983thermodynamic,crooks2007measuring}. While the direct computation of Fisher-Rao distances can be challenging, we show that trust region methods lead to a simple way of obtaining equidistant steps in an information-geometric sense.
Moreover, we show that the Lagrangian of the constrained problem can be written as another SOC problem and that the optimal Lagrangian multiplier can be obtained via a dual optimization problem without additional computational overhead (\Cref{sec: Constrained Stochastic Optimal Control}). Finally, we adapt successful approaches based on SOC matching~\cite{domingoenrich2024stochastic,domingoenrich2025adjoint} and log-variance divergences~\cite{nuesken2021solving} to the constrained SOC problem to get a practical algorithm (\Cref{sec:learning_constrained_control}). 

\paragraph{Applications.} The resulting \emph{trust region stochastic optimal control} method can be viewed as an extension of various existing algorithms, yielding significant improvements on a range of applications (\Cref{sec:applications}). In particular, we consider (i) deep learning approaches to classical SOC problems (extending~\cite{nuesken2021solving,domingoenrich2024stochastic}) enabling the usage of cross-entropy losses in high dimensions, (ii) diffusion-based sampling from unnormalized densities (extending~\cite{vargas2023denoising,richter2023improved}) enabling efficient sampling from high-dimensional, multimodal densities with substantially fewer target evaluations, (iii) transition path sampling in molecular dynamics (extending~\cite{holdijk2022path,seong2024transition}) yielding notably higher transition hit rates, and (iv) reward fine-tuning of text-to-image models (extending~\cite{domingoenrich2025adjoint}) achieving comparable performance while requiring significantly fewer simulations.

\paragraph{Contributions.} Our contributions can be summarized as follows:
\begin{itemize}
    \item We develop a general framework for solving measure transport with trust regions and apply it to SOC problems using iterative diffusion optimization. 
    \item We prove that our framework leads to a sequence of SOC problems whose solutions define an equispaced annealing between initialization and optimum w.r.t.\@ the Fisher-Rao distance.
    \item Relying on different loss functionals, we propose two practical instantiations of our framework and 
    demonstrate state-of-the-art performance
    on a series of applications, ranging from sampling from unnormalized densities to transition path sampling and reward fine-tuning of text-to-image models.
\end{itemize}

\paragraph{Notation.}
We denote by $\mathcal{U} \subset C(\R^d \times [0,T] ; \R^d)$ the set of admissible controls and by $\mathcal{P}$ the set of all probability measures on 
$C([0,T],\R^d)$. We define the path space measure $\P \in \mathcal{P}$ as the law of a $\R^d$-valued stochastic process $X = (X_t)_{t\in[0,T]}$ 
and we denote by $\P_s$ the marginal distribution at time $s$.
 We refer to~\Cref{app:assumptions_and_auxiliary} for further details on our notation and assumptions.

\section{Trust region constrained measure transport for optimal control}
\label{sec:trust_region_soc}
\highprio{JB: it would be much cleaner to write it for general measures, so we do not need to argue that $\mathcal{U}$ needs to be expressive enough to contain the solutions.}
The idea of \textit{iterative diffusion optimization} in optimal control based on path space measures is to consider loss functionals of the form
\begin{equation}
\label{eq: loss via divergence}
    \mathcal{L}(u) = D(\P^u, \Q)
\end{equation}
and minimize them with gradient-descent algorithms \cite{nuesken2021solving}. The loss functional \eqref{eq: loss via divergence} yields implementable algorithms for SOC problems since the optimal path measure $\Q$ of~\eqref{eq:soc_intro} can be stated explicitly via the Radon-Nikodym derivative
\begin{equation}
\label{eq:opt_change_of_measure}
    \frac{\dd \Q}{\dd \P}(X) = \frac{e^{-\mathcal{W}(X,0)}}{\Z(X_0)} \quad \text{with}\quad \mathcal{W}(X,t) = \int_t^T f(X_s,s) \, \dd s + g(X_T),
\end{equation}
where $\Z \coloneqq \E \big[e^{-\mathcal{W}(X,0)}|X_0\big]$ and $\P$ is the path measure of the uncontrolled process $X = X^\mathbf{0}$; see~\Cref{app:soc}. In this work, we extend this attempt by using trust regions that shall make sure that the optimization is conducted in a more ``regulated'' fashion, where the essential idea is to divide the global problem into smaller (reasonably chosen) chunks. We quantify this in~\Cref{prop:properties_of_annealing} below.
 To this end, we consider the iterative optimization scheme defined by
\begin{equation}
\label{eq: constrained optimization}
    u_{i+1} = \argmin_{u \in \mathcal{U}} \KL\left(\P^u | \Q \right) \quad \text{s.t.} \quad \KL(\P^u | \P^{{u_i}}) \le \varepsilon,
\end{equation}
for any $i \in \N$,
where $\varepsilon > 0$ defines a trust region w.r.t.\@ to the previous control iterate and where we often set $u_0 = \mathbf{0}$ (and thus $\P^{u_0}=\P$).
This corresponds to dividing the overall optimization problem into parts according to their distance measured in the KL divergence between the respective preceding and succeeding path measures. 
Due to the convexity of the KL divergence, we can show that in all but the last step we actually have an equality constraint in~\eqref{eq: constrained optimization}; see~\Cref{app:solution_to_tr_opt}. Thus, there exists an $I \in \N$ such that $u_I = u^*$ is the optimal control of the global control problem defined in \eqref{eq:soc_intro}.

\begin{remark}[Controlling the variance of importance weights]
\label{rem: controlling the variance}
    The constraint $\KL(\P^u | \P^{{u_i}}) \le \varepsilon$ can be motivated by the goal to control the variance of importance weights $\Var_{\P^{u_i}}\left(\dd \P^{u_{i+1}} / \dd \P^{u_i}\right)$, which can be explained by the inequality $\Var_{\P^{u_i}}\left( \dd \P^{u_{i+1}} /\dd \P^{u_{i}} \right) \ge e^{\KL(\P^{u_{i+1}} | \P^{u_{i}})}-1$, see, e.g., \cite{hartmann2024nonasymptotic}. For small $\varepsilon$ 
    (which is a common choice in practice)
    we typically observe $\Var_{\P^{u_i}}\left(\dd \P^{u_{i+1}} / \dd \P^{u_i} \right) \approx 2 \varepsilon$ (see \Cref{app:diffusion_exp}), which can be explained by a Taylor expansion and assuming that $\dd \P^{u_{i+1}}/ \dd \P^{u_i} \approx 1$. 
    Low variance of importance weights is directly related to efficiency of many measure transport methods and too high variance makes it practically impossible to obtain reliable results.
    Note also that the reverse KL divergence allows for explicit expressions for the resulting constrained problem (see \Cref{sec: Constrained Stochastic Optimal Control}) and we leave alternative divergences for future research. 
\end{remark}

\begin{wrapfigure}[18]{r}[0pt]{0.35\textwidth} 
        \centering
        \vspace*{-1.5em}
        \centering
        \includegraphics[width=0.35\textwidth,trim={0 0 0 10pt}, clip]{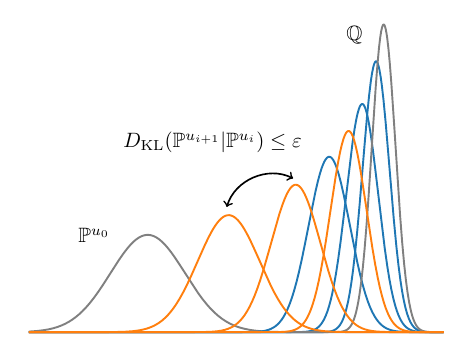}
        \vspace*{-1.9em}
        \caption{Illustration of a sequence of distributions $(\P^{u_i})_i$ resulting from our trust region method (orange) and a measure transport corresponding to non-equispaced geometric annealing (blue), leading to high variance in the importance weights for the initial steps.}
        \vspace{-0.4em}
        \label{fig:smoothed_ess}
\end{wrapfigure}

 In practice, under suitable regularity assumptions, we can approach the above constrained optimization problem using a relaxed Lagrangian formalism. To this end, we consider the loss functionals
\begin{equation}
\label{eq: Lagrangian loss}
    \mathcal{L}^{(i)}_\mathrm{TR}(u, \lambda) = \KL\left(\P^u | \Q \right) + \lambda\left( \KL(\P^u | \P^{u_i} ) - \varepsilon \right),
\end{equation}
where $\lambda \geq 0$ is a Lagrange multiplier, and solve the saddle point problems
\begin{equation}
\label{eq: Langrangian optimization}
  \max_{\lambda \ge 0} \, \min_{u \in\mathcal{U}}   \mathcal{L}_{\mathrm{TR}}^{(i)}(u, \lambda).
 \end{equation}
We note that $\mathcal{L}_\mathrm{TR}^{(i)}$ is convex in $u$ by convexity of the KL divergence (see~\Cref{app:solution_to_tr_opt}) and concave in $\lambda$ since it can be expressed as the pointwise minimum $\min_{u} \mathcal{L}_{\mathrm{TR}}^{(i)}(u, \lambda)$ among a family of linear functions of $\lambda$. 
Thus, \eqref{eq: Langrangian optimization} has unique optima which we denote by $u_{i+1}$ and $\lambda_{i}$, respectively. We can now show the following evolution of the optimal measures.

\begin{proposition}[Optimal change of measure as geometric annealing]
\label{prop: Optimal change of measure as geometric annealing}
Let $\Q$ be the optimal path measure defined in \eqref{eq:opt_change_of_measure}. The 
intermediate optimal path measures corresponding to \eqref{eq: constrained optimization} then satisfy\footnote{For notational convenience we assume an $X_0$-independent normalizing constant here and hereafter, which is possible whenever the optimal tilting of the initial density $p_0$ is known, cf. \Cref{app: initial value dependence Z}.}
   \begin{equation}
    \frac{\mathrm d \P^{u_{i+1}}}{\mathrm d \P^{u_i}} \propto \left(\frac{\mathrm d \Q}{\mathrm d \P^{u_i}}\right)^{\frac{1}{1 + \lambda_i}}
   \end{equation}
   and the optimal change of measure w.r.t.\@ the base measure $\P$ is given by\footnote{As usual, the empty product is defined as $1$ such that $\beta_0=0$.}
   \begin{equation}
   \label{eq: geometric annealing path measures}
    \frac{\dd {\P}^{u_{i}}}{\dd{\mathbb{P}}}(X) \propto \left(\frac{\dd \Q}{\dd{\mathbb{P}}}(X)\right)^{\beta_i}\left(\frac{\dd \P^{u_0}}{\dd{\mathbb{P}}}(X)\right)^{1-\beta_i} \quad \text{with} \quad \beta_i = 1-\prod_{j=0}^{i-1}\tfrac{\lambda_j}{1+\lambda_j}.
    \end{equation}
\end{proposition}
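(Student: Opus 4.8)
The plan is to prove the first displayed identity by a \emph{completing-the-KL} trick applied to the Lagrangian at its optimal multiplier, and then to obtain \eqref{eq: geometric annealing path measures} by an induction that telescopes the Radon--Nikodym derivatives against the base measure $\P$.

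First I would fix $i$ and work with the optimal multiplier $\lambda_i \ge 0$ from \eqref{eq: Langrangian optimization}. Since $u_i$ is strictly feasible for $\KL(\P^u | \P^{u_i}) \le \varepsilon$ whenever $\varepsilon > 0$ (it attains value $0$), Slater's condition holds, and together with the convexity of $u \mapsto \mathcal{L}_{\mathrm{TR}}^{(i)}(u,\lambda)$ noted above this yields strong duality; hence $u_{i+1}$ is the unique minimizer of $u \mapsto \mathcal{L}_{\mathrm{TR}}^{(i)}(u,\lambda_i)$, equivalently of $\nu \mapsto \KL(\nu | \Q) + \lambda_i \KL(\nu | \P^{u_i})$ over path measures $\nu = \P^u$. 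The key algebraic step is the identity
\[
\KL(\nu|\Q) + \lambda_i\,\KL(\nu|\P^{u_i}) = (1+\lambda_i)\,\KL(\nu|\mu_i) + \mathrm{const},
\]
where $\mu_i$ is the normalized geometric interpolation defined by $\tfrac{\dd \mu_i}{\dd \P^{u_i}} \propto \big(\tfrac{\dd \Q}{\dd \P^{u_i}}\big)^{1/(1+\lambda_i)}$ and the constant does not depend on $\nu$; this is checked by expanding both sides in $\log\tfrac{\dd\nu}{\dd\P^{u_i}}$ and $\log\tfrac{\dd\Q}{\dd\P^{u_i}}$. Since $\KL(\cdot\,|\mu_i) \ge 0$ with equality iff the argument equals $\mu_i$, the minimizer is $\P^{u_{i+1}} = \mu_i$, which is exactly the first claimed identity. (If the constraint is inactive in the final step, $\lambda_i = 0$ and the formula collapses to $\P^{u_{i+1}} = \Q$, consistent with the discussion preceding the proposition.)

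Next I would prove \eqref{eq: geometric annealing path measures} by induction on $i$. The base case $i=0$ is immediate since $\beta_0 = 0$ (empty product), so the right-hand side equals $\dd\P^{u_0}/\dd\P$. For the inductive step I would write $\tfrac{\dd\P^{u_{i+1}}}{\dd\P} = \tfrac{\dd\P^{u_{i+1}}}{\dd\P^{u_i}} \cdot \tfrac{\dd\P^{u_i}}{\dd\P}$, substitute the first identity together with $\tfrac{\dd\Q}{\dd\P^{u_i}} = \tfrac{\dd\Q/\dd\P}{\dd\P^{u_i}/\dd\P}$, and then insert the induction hypothesis for $\tfrac{\dd\P^{u_i}}{\dd\P}$. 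Collecting the exponents of $\tfrac{\dd\Q}{\dd\P}$ and $\tfrac{\dd\P^{u_0}}{\dd\P}$ produces a new exponent $\beta_{i+1} = \tfrac{1+\beta_i\lambda_i}{1+\lambda_i}$; one then checks that the two exponents sum to $1$ (so the result is again a genuine geometric interpolation) and that this recursion, started from $\beta_i = 1 - \prod_{j=0}^{i-1}\tfrac{\lambda_j}{1+\lambda_j}$, is solved by $\beta_{i+1} = 1 - \prod_{j=0}^{i}\tfrac{\lambda_j}{1+\lambda_j}$. All normalizing constants are tracked as $X_0$-independent scalars, in accordance with the footnote.

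I expect the only genuinely delicate point to be the reduction in the second paragraph: one must ensure the geometric-mean measure $\mu_i$ is realizable as $\P^u$ for some admissible $u \in \mathcal{U}$ (so that minimizing over controls and over path measures coincide) and that all KL terms and RND powers are finite. Realizability follows from a Girsanov / Doob $h$-transform argument identical to the one behind \eqref{eq:opt_change_of_measure}: tilting the diffusion $\P^{u_i}$ by a power of $\dd\Q/\dd\P^{u_i}$ again yields the law of a diffusion with the same diffusion coefficient and an additional gradient-type drift, while finiteness is guaranteed by the regularity assumptions in \Cref{app:assumptions_and_auxiliary}. The remaining steps are elementary algebra.
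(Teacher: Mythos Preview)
Your proposal is correct and follows essentially the same route as the paper: the first identity is obtained by the same ``completing-the-KL'' manipulation (the paper writes $(1+\lambda_i)\KL(\P^u|\widetilde{\P}) = \mathcal{L}_{\mathrm{TR}}^{(i)}(u,\lambda_i)+\text{const}$ for the geometric-mean measure $\widetilde{\P}$ and concludes $\P^{u_{i+1}}=\widetilde{\P}$), and the second identity is proved by the same induction telescoping the Radon--Nikodym derivatives against $\P$. Your treatment is in fact slightly more careful than the paper's, since you explicitly invoke Slater/strong duality and flag the realizability of $\mu_i$ as a controlled path measure---a point the paper's appendix proof leaves implicit.
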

\begin{proof}\vspace{-0.8em}
    The first statement follows by the definition of the Lagrangian and the second follows by induction; see \Cref{app: proofs}.
\vspace{-0.75em}\end{proof}

Note that the sequence $(\beta_i)_i$ is monotonically increasing with values in $[0, 1]$, where 
we 
have $\beta_0 = 0$
and $\beta_{I} = 1$ (as $\lambda_{I-1}=0$ due to optimality). Thus, the formula in \eqref{eq: geometric annealing path measures} can be seen as a geometric annealing from the prior to the target measure. Note that when $u_0 = \mathbf{0}$, the second factor vanishes. 
Importantly, the step-size of the annealing is automatically chosen such that we obtain a well-behaved sequence of distributions; see also~\Cref{fig:smoothed_ess}.

\begin{proposition}[Equidistant steps on statistical manifold]
\label{prop:properties_of_annealing}
Up to higher order terms in $\varepsilon$, the sequence of measures $\P^{u_i}$, $i \in \{0, \dots, I-1\}$, are equispaced in the Fisher-Rao distance.
\end{proposition}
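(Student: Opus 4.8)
\emph{Proof plan.} The statement follows by combining two facts. First, as discussed just before the proposition (see also \Cref{app:solution_to_tr_opt}), the trust-region constraint in \eqref{eq: constrained optimization} is active in every step $i \in \{0,\dots,I-2\}$, i.e.\ $\KL(\P^{u_{i+1}} \,|\, \P^{u_i}) = \varepsilon$. Second, it is classical in information geometry that the KL divergence locally agrees, to second order, with one half of the squared Fisher--Rao distance: for $\mu$ close to $\nu$ one has $\KL(\mu\,|\,\nu) = \tfrac12\, d_{\mathrm{FR}}(\mu,\nu)^2 + O\big(d_{\mathrm{FR}}(\mu,\nu)^3\big)$, since the Hessian of $\mu \mapsto \KL(\mu\,|\,\nu)$ at $\mu = \nu$ is exactly the Fisher--Rao metric tensor (the precise constant depends on the normalization convention, but is irrelevant here). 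Together these give $\varepsilon = \tfrac12\, d_{\mathrm{FR}}(\P^{u_i}, \P^{u_{i+1}})^2 + O(d_{\mathrm{FR}}^3)$ and hence $d_{\mathrm{FR}}(\P^{u_i}, \P^{u_{i+1}}) = \sqrt{2\varepsilon} + O(\varepsilon)$ for \emph{every} $i$, which is the asserted equidistance up to higher-order terms in $\varepsilon$; summing over $i$ additionally yields $d_{\mathrm{FR}}(\P^{u_0}, \P^{u_i}) = i\sqrt{2\varepsilon} + O(i\varepsilon)$.

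To make this rigorous, and in particular to obtain remainders that are uniform in $i$, I would exploit the explicit structure from \Cref{prop: Optimal change of measure as geometric annealing}: the iterates lie on the one-parameter curve $\beta \mapsto \P^{(\beta)}$ with $\dd\P^{(\beta)}/\dd\P \propto (\dd\Q/\dd\P)^{\beta}(\dd\P^{u_0}/\dd\P)^{1-\beta}$, i.e.\ $\P^{u_i} = \P^{(\beta_i)}$ for the increasing schedule $\beta_0 = 0 < \beta_1 < \dots < \beta_{I-1}$ of \eqref{eq: geometric annealing path measures}. Writing $h \coloneqq \log(\dd\Q/\dd\P) - \log(\dd\P^{u_0}/\dd\P) = -\mathcal{W}(\cdot,0) - \log\Z - \log(\dd\P^{u_0}/\dd\P)$, this is precisely the exponential family $\dd\P^{(\beta)}/\dd\P^{u_0} \propto e^{\beta h}$ with natural parameter $\beta$ and log-partition $A(\beta) = \log\E_{\P^{u_0}}[e^{\beta h}]$. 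Hence the Fisher information is $g(\beta) = A''(\beta) = \Var_{\P^{(\beta)}}(h)$, the Fisher--Rao length of the sub-arc from $\P^{(\beta)}$ to $\P^{(\beta')}$ equals $\int_\beta^{\beta'}\sqrt{g(s)}\,\dd s$, and $\KL(\P^{(\beta')}\,|\,\P^{(\beta)})$ is the Bregman divergence of $A$. A short Taylor expansion of $A$ about $\beta$ then gives $\KL(\P^{(\beta')}\,|\,\P^{(\beta)}) = \tfrac12 g(\beta)(\beta'-\beta)^2 + O(|\beta'-\beta|^3)$; plugging in the active constraint gives $\beta_{i+1}-\beta_i = \sqrt{2\varepsilon/g(\beta_i)} + O(\varepsilon)$, and therefore $d_{\mathrm{FR}}(\P^{u_i},\P^{u_{i+1}}) = \int_{\beta_i}^{\beta_{i+1}}\sqrt{g(s)}\,\dd s = \sqrt{g(\beta_i)}\,(\beta_{i+1}-\beta_i) + O((\beta_{i+1}-\beta_i)^2) = \sqrt{2\varepsilon} + O(\varepsilon)$, uniformly in $i$.

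The part I expect to be the main obstacle is controlling the Taylor remainders \emph{uniformly over} $\beta \in [0,1]$, i.e.\ showing $0 < \inf_{\beta}\, g(\beta) \le \sup_{\beta}\, g(\beta) < \infty$ together with a uniform bound on the third central moment $A'''(\beta) = \E_{\P^{(\beta)}}\!\big[(h - \E_{\P^{(\beta)}}h)^3\big]$. The lower bound $\inf_\beta g(\beta) > 0$ fails only in the degenerate case $\Q = \P^{u_0}$, in which $h$ is a.s.\ constant and the statement is vacuous, and the upper bounds hold once $h$ has finite exponential moments under $\P^{u_0}$ on a neighbourhood of $[0,1]$ — which follows from the standing integrability assumptions on $f$ and $g$ in \Cref{app:assumptions_and_auxiliary}, as then $A$ is real-analytic on an open interval containing $[0,1]$ with continuous, hence bounded, derivatives there. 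A secondary, essentially notational, point is that the Fisher--Rao metric on the infinite-dimensional path space $\mathcal{P}$ must be interpreted with care; this is sidestepped here because the annealing curve is a one-dimensional exponential family, so everything reduces to the classical finite-dimensional information geometry of the single function $A$.
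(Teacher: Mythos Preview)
Your proposal is correct and follows essentially the same approach as the paper: both arguments use \Cref{prop: Optimal change of measure as geometric annealing} to place the iterates on the one-parameter exponential family $\beta\mapsto\P^{(\beta)}$, identify the Fisher information with $A''(\beta)=\Var_{\P^{(\beta)}}(h)$, Taylor-expand $\KL(\P^{(\beta_{i+1})}\,|\,\P^{(\beta_i)})=\tfrac12\,\mathcal{I}(\beta_i)\Delta_i^2+O(\Delta_i^3)$, invoke the active constraint $\KL=\varepsilon$, and evaluate $\int_{\beta_i}^{\beta_{i+1}}\sqrt{\mathcal{I}}\,\dd\tau=\sqrt{2\varepsilon}+O(\Delta_i^{3/2})$. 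Your additional discussion of uniformity in $i$ (via bounds on $\inf_\beta g$, $\sup_\beta g$, and $A'''$) is more careful than the paper's treatment, which simply carries the $O(\Delta_i^{3/2})$ remainder without explicitly verifying it is uniform over the annealing schedule.
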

\begin{proof}\vspace{-0.8em}
    By \Cref{prop: Optimal change of measure as geometric annealing}, we obtain $\varepsilon = \KL(\P^{u_{i+1}}|\P^{u_i}) = \frac{\Delta_{i}^2}{2} \mathcal{I}(\beta_{i}) + O(\Delta_{i}^3 )$, where $\Delta_{i} = \beta_{i+1} - \beta_i$ and $\mathcal{I}(\beta_{i})$ is the Fisher information. The Fisher-Rao distance between $\P^{u_i}$ and $\P^{u_{i+1}}$ is then given by $\int_{\beta_i}^{\beta_{i+1}} \sqrt{\mathcal{I}(\tau)} \,\dd\tau = \sqrt{\mathcal{I}(\beta_i)} \Delta_{i} + O(\Delta_{i}^2) = \sqrt{2 \varepsilon} + O(\Delta_{i}^{3/2})$; see~\Cref{app:tr_limit} for details. 
\vspace{-0.2em}\end{proof}

\begin{remark}[Trust regions for general measures]
\label{rem: trust regions for general measures}
    The observant reader has likely noticed that so far all our arguments do not rely on the fact that we consider path space measures, but work for general probability measures. We could therefore as well write our trust region method stated in \eqref{eq: constrained optimization} as
    \begin{equation} \label{eq:p_i_plus_one_p_i}
    \P_{i+1} = \argmin_{\P \in \mathcal{P}} \KL\left(\P | \Q \right) \quad \text{s.t.} \quad \KL(\P | \P_i) \le \varepsilon.
\end{equation}
We refer to~\Cref{app:trust_region_densities} for a treatment when the measures admit densities on $\R^d$, which can, e.g., be considered for variational inference with normalizing flows.
\end{remark}

\subsection{Constrained stochastic optimal control}
\label{sec: Constrained Stochastic Optimal Control}

While the above formulation in principle works for arbitrary measures, in this work we focus on path space measures corresponding to optimal control problems. In this setting we can compute some of the objectives more explicitly and recover helpful relations. 

\paragraph{Lagrangian as SOC problem.}
First, note that, using the Girsanov theorem (see~\Cref{sec: technical assumptions}), it turns out that, for a fixed Lagrange multiplier $\lambda$, the Lagrangian in~\eqref{eq: Lagrangian loss} defines another SOC problem, i.e., 
\begin{equation}
\label{eq: connection_lagrangian_soc}
    \mathcal{L}_{\mathrm{TR}}^{(i)}(u, \lambda) = \mathcal{L}^{(i)}_{\mathrm{TRC}}(u, \lambda)- \lambda \varepsilon,
\end{equation}
where\footnote{The SOC problem is slightly more general than~\eqref{eq:soc_intro} due to the shift in the quadratic cost.}
\begin{align}
\label{eq: soc_tr}
    \mathcal{L}^{(i)}_{\mathrm{TRC}}(u, \lambda) =\E\left[ \int_0^T \left( \tfrac{1+\lambda}{2} \|u-\tfrac{\lambda}{1+\lambda}u_i\|^2 + \tfrac{\lambda}{2(1+\lambda)}\|u_i\|^2 +  f \right)(X^u_s,s) \, \dd s+ g(X^u_T) + \log \Z(X_0) \right]
\end{align}
and $X^u$ is still defined as in \eqref{eq:soc_intro}; see~\Cref{app:lagragian} for details. Note that this cost functional is more general than the one stated in \eqref{eq:soc_intro}, which one recovers when setting $\lambda = 0$. 
We can show that the corresponding SOC problem satisfies the following optimality conditions.

\begin{proposition}[Optimality for trust region SOC problems]
\label{prop: tr_soc_optimality}
For fixed $\lambda$, let us define by 
\begin{equation*}
    V^\lambda_{i+1}(x,t) \coloneqq \inf_{u \in \mathcal{U}} \E\left[ \int_t^T \left( \tfrac{1+\lambda}{2} \|u-\tfrac{\lambda}{1+\lambda}u_i\|^2 + \tfrac{\lambda}{2(1+\lambda)}\|u_i\|^2 +  f \right)(X^u_s,s) \, \dd s+ g(X^u_T) \Bigg| X_t=x \right]
\end{equation*}
the value function of the SOC problem $\inf_{u \in \mathcal{U}} \mathcal{L}^{(i)}_{\mathrm{TRC}}(u, \lambda)$ corresponding to~\eqref{eq: soc_tr} and by $u^\lambda_{i+1}$ its solution. Then it holds
\begin{enumerate}[label=(\roman*)]
    \item \textit{(Estimator for value function)} $V^{\lambda}_{i+1}(x,t)  = -(1+\lambda) \log \E\Big[e^{-\tfrac{1}{1+\lambda}\mathcal{W}_i(X^{u_i},t)}\Big|X^{u_i}_t=x\Big]$,
    \begin{equation*}
      \text{\normalsize where} \quad  \mathcal{W}_i(X^{u_i},t) = \int_t^T \tfrac{1}{2}\|u_i(X^{u_i}_s,s)\|^2\dd s + \int_t^T u_i(X^{u_i}_s,s) \cdot \dd W_s + \mathcal{W}(X^{u_i},t).
    \end{equation*}
    \item (Connection between solution and value function) It holds $u^\lambda_{i+1} = \frac{\lambda}{1 + \lambda} u_i - \frac{1}{1 + \lambda} \sigma^\top \nabla V^\lambda_{i+1}$.
\end{enumerate}
\end{proposition}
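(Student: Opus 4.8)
The plan is to recognize $\inf_{u\in\mathcal{U}}\mathcal{L}^{(i)}_{\mathrm{TRC}}(\cdot,\lambda)$ as a standard quadratic-cost SOC problem and to solve it by dynamic programming: write the Hamilton--Jacobi--Bellman (HJB) equation for $V^\lambda_{i+1}$, read off the optimal feedback control from the pointwise Hamiltonian minimization, linearize the HJB via the Hopf--Cole transform $V^\lambda_{i+1}=-(1+\lambda)\log\psi$, and then use a Girsanov change of measure to re-express the resulting Feynman--Kac formula for $\psi$ in terms of the process $X^{u_i}$. A useful preliminary reduction is the change of control variable $u=w+\tfrac{\lambda}{1+\lambda}u_i$, which rewrites \eqref{eq: soc_tr} as a quadratic SOC problem with control weight $1+\lambda$, uncontrolled drift $b+\tfrac{\lambda}{1+\lambda}\sigma u_i$, running cost $\tfrac{\lambda}{2(1+\lambda)}\|u_i\|^2+f$, and terminal cost $g$; applying the Hopf--Cole representation to this reduced problem (exactly as in the derivation of \eqref{eq:opt_change_of_measure}, see \Cref{app:soc}) then does most of the work.

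For part (ii), I would write the HJB equation for $V\coloneqq V^\lambda_{i+1}$,
\[ \partial_t V + b\cdot\nabla V + \tfrac12\tr(\sigma\sigma^\top\nabla^2 V) + \inf_{a\in\R^d}\Big\{\sigma a\cdot\nabla V + \tfrac{1+\lambda}{2}\big\|a-\tfrac{\lambda}{1+\lambda}u_i\big\|^2\Big\} + \tfrac{\lambda}{2(1+\lambda)}\|u_i\|^2 + f = 0, \qquad V(\cdot,T)=g. \]
The bracket is strictly convex in $a$ with unique minimizer $a^\star=\tfrac{\lambda}{1+\lambda}u_i-\tfrac{1}{1+\lambda}\sigma^\top\nabla V$, and by the standard verification theorem (under the regularity assumptions of \Cref{app:assumptions_and_auxiliary}, so that $V$ is the classical solution and $a^\star$ defines an admissible optimal control) this is exactly $u^\lambda_{i+1}$, proving (ii). Substituting $a^\star$ back leaves a semilinear equation whose only nonlinearity is $-\tfrac{1}{2(1+\lambda)}\|\sigma^\top\nabla V\|^2$; the scaling in $V=-(1+\lambda)\log\psi$ is chosen precisely so that this term cancels the gradient-squared part of the diffusion term, leaving the linear parabolic equation
\[ \partial_t\psi + \big(b+\tfrac{\lambda}{1+\lambda}\sigma u_i\big)\cdot\nabla\psi + \tfrac12\tr(\sigma\sigma^\top\nabla^2\psi) - \tfrac{1}{1+\lambda}\Big(\tfrac{\lambda}{2(1+\lambda)}\|u_i\|^2+f\Big)\psi = 0, \qquad \psi(\cdot,T)=e^{-g/(1+\lambda)}. \]

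For part (i), Feynman--Kac applied to this PDE represents $\psi(x,t)$ as the expectation of $\exp\big(-\tfrac{1}{1+\lambda}\int_t^T(\tfrac{\lambda}{2(1+\lambda)}\|u_i\|^2+f)(Y_s,s)\,\dd s-\tfrac{1}{1+\lambda}g(Y_T)\big)$ conditioned on $Y_t=x$, where $Y$ carries the drift $b+\tfrac{\lambda}{1+\lambda}\sigma u_i$. Since this drift differs from that of $X^{u_i}$ by $\tfrac{1}{1+\lambda}\sigma u_i$, Girsanov's theorem (with the Novikov condition provided by the standing assumptions; see \Cref{sec: technical assumptions}) replaces $Y$ by $X^{u_i}$ at the cost of the density $\exp\big(-\tfrac{1}{1+\lambda}\int_t^T u_i\cdot\dd W_s-\tfrac{1}{2(1+\lambda)^2}\int_t^T\|u_i\|^2\,\dd s\big)$, with $W$ the Brownian motion driving $X^{u_i}$. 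The key bookkeeping point is that the two resulting $\|u_i\|^2$-integrals carry coefficients $\tfrac{\lambda}{2(1+\lambda)^2}$ and $\tfrac{1}{2(1+\lambda)^2}$, which add up to $\tfrac{1}{2(1+\lambda)}$, so the combined exponent equals $-\tfrac{1}{1+\lambda}\big(\int_t^T\tfrac12\|u_i\|^2\,\dd s+\int_t^T u_i\cdot\dd W_s+\mathcal{W}(X^{u_i},t)\big)=-\tfrac{1}{1+\lambda}\mathcal{W}_i(X^{u_i},t)$; undoing the logarithm gives (i). I expect the only genuinely delicate step to be the verification argument behind (ii) --- showing that the value function is smooth enough to solve the HJB equation classically and that $a^\star$ attains the infimum with an admissible control --- together with checking the integrability needed for Girsanov; once these are granted, the rest is routine algebra, the sole care being to track every constant so that the Girsanov density fuses with the Feynman--Kac integrand to reproduce $\mathcal{W}_i$ exactly.
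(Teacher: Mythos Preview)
Your argument is correct. Part~(ii) is identical to the paper's proof: both read off the optimal feedback from the pointwise Hamiltonian minimization in the HJB equation and invoke the verification theorem.

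For part~(i) you take a genuinely different route. The paper, after deriving the same reduced HJB equation, rewrites it with respect to the generator $\bar L$ of the process $X^{u_i}$ (drift $b+\sigma u_i$), applies It\^o's formula to $V^\lambda_{i+1}(X^{u_i}_s,s)$, and obtains a forward--backward SDE system. Integrating the backward equation yields $\mathcal{W}_i(X^{u_i},t)=V^\lambda_{i+1}(X^{u_i}_t,t)+\int_t^T\tfrac{1}{2(1+\lambda)}\|Z_s\|^2\,\dd s-\int_t^T Z_s\cdot\dd W_s$ with $Z_s=-(u_i+\sigma^\top\nabla V^\lambda_{i+1})(X^{u_i}_s,s)$, and the result follows by recognizing the exponential of the last two terms (scaled by $1/(1+\lambda)$) as a Dol\'eans--Dade martingale with unit expectation via Novikov. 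Your approach instead linearizes via Hopf--Cole $V^\lambda_{i+1}=-(1+\lambda)\log\psi$, applies Feynman--Kac with the intermediate drift $b+\tfrac{\lambda}{1+\lambda}\sigma u_i$, and then shifts to $X^{u_i}$ by Girsanov. Your route is arguably cleaner --- it avoids the FBSDE machinery and makes the factor $(1+\lambda)$ in the Hopf--Cole scaling transparent --- at the cost of introducing an auxiliary process $Y$ and two measure changes rather than one martingale identity. Both approaches rest on the same regularity and integrability hypotheses, and the paper's authors in fact remark (in an inline comment) that your Feynman--Kac/Girsanov route would be a ``perhaps simpler'' alternative.
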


\begin{proof}\vspace{-0.6em} The statements can be proven using the verification theorem;
see~\Cref{app:lagragian} for details.
\vspace{-0.1em}\end{proof}

We note that~\Cref{prop: Optimal change of measure as geometric annealing}, the Girsanov theorem, and~\eqref{eq:opt_change_of_measure} relate the functional $\mathcal{W}_i$ in~\Cref{prop: tr_soc_optimality} to the importance weights 
\begin{align}
\label{eq:rnd_adjacent}
    \frac{\mathrm d \Q}{\mathrm d \P^{u_i}}(X^{u_i}) 
    \propto e^{-\mathcal{W}_i(X^{u_i},0)} \quad \text{and} \quad \frac{\dd \P^{u_{i+1}}}{\dd \P^{u_i}}(X^{u_i}) 
    \propto e^{-\tfrac{1}{1+\lambda_i}\mathcal{W}_i(X^{u_i},0)}.
\end{align}

\paragraph{Dual problem for Lagrange multiplier.} Next, we will outline how to find the optimal Lagrange multiplier $\lambda$ in~\eqref{eq: Langrangian optimization} in the SOC setting.
Plugging the optimal control $u^{\lambda}_{i+1}$ in the Lagrangian \eqref{eq: connection_lagrangian_soc} yields the dual function $\mathcal{L}_{\mathrm{Dual}}^{(i)} \in C(\R,\R)$ given by
\begin{align} \label{eq: lagrangian}
    \mathcal{L}_{\mathrm{Dual}}^{(i)}(\lambda) := \mathcal{L}_{\mathrm{TR}}^{(i)}(u^{\lambda}_{i+1}, \lambda) & = \mathcal{L}^{(i)}_{\mathrm{TRC}}(u^{\lambda}_{i+1})- \lambda \varepsilon.
\end{align}
We note that evaluating the SOC problem in~\eqref{eq: soc_tr} at the optimal control can be expressed via the value function given in~\Cref{prop: tr_soc_optimality}, which yields 
\begin{align}
\label{eq:dual}
   \mathcal{L}_{\mathrm{Dual}}^{(i)}(\lambda) =  
   \E\left[V^{\lambda}_{i+1}(X^{u_i}_0, 0)\right] - \lambda \varepsilon &=  -(1+\lambda) \E\left[\log \E\left[e^{-\tfrac{1}{1+\lambda}\mathcal{W}_i(X^{u_i},0)}\Big|X^{u_i}_0\right]\right]  - \lambda \varepsilon,
\end{align}
where we note that the expression in the expectation is proportional to the importance weights in~\eqref{eq:rnd_adjacent}.
Note that we can obtain a Monte Carlo estimate of the dual function using only simulations $X^{u_i}$ from the previous iterations.
As it turns out, these simulations are in most cases already required when learning the control $u_{i+1}$ and we can thus store them in a buffer. We can then obtain $\lambda_{i} = \argmax_{\lambda \in \R^+} \mathcal{L}^{(i)}_{\mathrm{Dual}}(\lambda)$ using any non-linear solver with minimal computational overhead.

In theory, we can define $u_{i+1}= u^{\lambda_{i}}_{i+1}$ using the representations in~\Cref{prop: tr_soc_optimality} and proceed with the next iteration of our trust region method in~\eqref{eq: constrained optimization}.
However, computing the optimal control $u_{i+1}$ using the representations in~\Cref{prop: tr_soc_optimality} requires gradients and Monte Carlo estimators of the value functions.
This is problematic since it relies on a large amount of samples \emph{for each state} $x$ due to the (typically) very high variance of the estimator; see~\Cref{app:broader_impact_limitations} for details. Thus, we propose versions of iterative diffusion optimization to learn parametrized approximations to $u_{i+1}$ in the next section.

\begin{figure}[t]
\vspace{-0.5em}
\begin{algorithm}[H]
\caption{\small Trust Region SOC with buffer (see~\Cref{app:implementation} for details)}
\label{alg:tr sampler}
\small
\begin{algorithmic}
\Require Initial path measure $\P^{u_0}$, target path measure $\Q$, divergence $D$, 
termination threshold $\delta$
\For{$i=0,1,\dots$}
\highprio{JB: write $X^{u_i}$}
\State Sample trajectories $X \sim \P^{u_i}$ by integrating the SDE in~\eqref{eq:soc_intro} with Brownian motion $W$ and control $u_i$
\State Compute importance weights $w=\frac{\dd\Q}{\dd\P^{u_i}}(X^{u_i}) \propto \exp(-\mathcal{W}_i(X^{u_i},0))$ as in~\eqref{eq:rnd_adjacent}
\State Initialize buffer $\mathcal{B} = \big\{ W, X, w \big\}$ 
\State Compute multiplier $\lambda_i = \argmax_{\lambda \in \R^+} \ \mathcal{L}_{\mathrm{Dual}}^{(i)}(\lambda)$ as in~\eqref{eq:dual} using $\mathcal{B}$ and a $1$-dim. non-linear solver
\State Compute $u^{i+1} = \argmin_u D(\P^u, \P^{u_{i+1}})$ using $\mathcal{B}$ and $\frac{\dd \P^{u_{i+1}}}{\dd \P^{u}} \propto w^{\frac{1}{1 + \lambda_i}} \frac{\dd \P^{u_i}}{\dd \P^{u}} $ as in~\Cref{sec:learning_constrained_control}
\vspace{0.1cm}
\If{$\lambda_i \le \delta$}
\State \Return control $u_{i+1}$ with $\P^{u_{i+1}} \approx \Q$
\EndIf
\EndFor
\end{algorithmic}
\end{algorithm}
\vspace{-2.8em}
\end{figure}

\subsection{Learning the constrained optimal control}
\label{sec:learning_constrained_control}
In this section we propose strategies to learn the optimal control for each iteration. As before, the general idea is to minimize loss functionals based on divergences between path space measures, namely
$
    \mathcal{L}(u) = D(\P^u, \P^{u_{i+1}})
$.
Such divergences often rely on the Radon-Nikodym derivative
\begin{align}
\begin{split}
\label{eq: rnd_tr_soc}
    \frac{\dd \P^{u_{i+1}}}{\dd \P^{u}}(X^{u_i}) &= \frac{\dd \P^{u_{i+1}}}{\dd \P^{u_i}}(X^{u_i}) \frac{\dd \P^{u_i}}{\dd \P^{u}} (X^{u_i}) \\
    &\propto \exp\Big(\int_0^T \tfrac{\| u_i - u \|^2}{2}  (X_s^{u_i}, s) \dd s + \int_0^T \big(u_i - u \big)(X_s^{u_i}, s) \cdot \dd W_s  - \tfrac{\mathcal{W}_i(X^{u_i}, 0)}{1+\lambda_i}\Big),
\end{split}
\end{align}
where we used Girsanov's theorem and~\eqref{eq:rnd_adjacent}. Note that the Radon-Nikodym derivative in~\eqref{eq: rnd_tr_soc} depends only on samples of the process with the already learned $u_i$. Let us now suggest two concrete divergences. Those divergences are desirable for high-dimensional problems since both do not rely on computing derivatives of the stochastic process and can be optimized \enquote{off-policy} using trajectories $X^{u_i}$ with the control $u_{i}$ of the previous iteration, which can be stored in a buffer; see~\Cref{alg:tr sampler}.

\textbf{Log-variance divergence.} This divergence can be considered w.r.t.\@ an arbitrary reference measure, where we choose $\P^{u_i}$ for convenience \cite{nuesken2021solving,richter2020vargrad}. We can then define the loss functional
\begin{equation}
\label{eq:trust_region_lv}
    \mathcal{L}_{\mathrm{LV}}(u) := \V
    \left[
    \log \left(\frac{\dd \P^{u_{i+1}}}{\dd \P^{u}}(X^{u_i})  \right)
    \right], 
\end{equation}
where the Radon-Nikodym derivative can be explicitly computed as in \eqref{eq: rnd_tr_soc}. 
Note that for $\lambda_{i}=0$, this loss reduces to the on-policy log-variance loss typically used in the literature~\cite{richter2023improved}. While this loss has beneficial theoretical properties \cite{nuesken2021solving}, it requires to keep the full trajectory in memory for the gradient computation. 
    
\textbf{Cross-entropy divergence and SOC matching.} 
Alternatively, we can consider the cross-entropy loss (i.e., the forward KL divergence computed using reweighting)
\begin{align} \label{eq:cross_entropy_TR}
    \mathcal{L}_{\mathrm{CE}}(u) := \KL(\P^{u_{i+1}} | \P^u) =
     \E\left[\left(\log \frac{\mathrm d \P^{u_{i+1}}}{\mathrm d \P^u}(X^{u_{i}}) \right) \frac{\mathrm d \P^{u_{i+1}}}{\mathrm d \P^{u_i}}(X^{u_{i}})\right],
\end{align}
where the Radon-Nikodym derivative is again given by \eqref{eq: rnd_tr_soc}. Contrary to the log-variance loss, the reweighting $\frac{\mathrm d \P^{u_{i+1}}}{\mathrm d \P^{u_i}}$ in~\eqref{eq:rnd_adjacent} induces exponential terms. Our trust region constraint makes sure, however, that the variance of those weights stays bounded, see~\Cref{rem: controlling the variance}.

To efficiently compute this loss, we define the so-called (\emph{lean}\footnote{Instead of the uncontrolled process $X$, we could also express the adjoint state w.r.t.\@ the process $X^u$; however, this relies on more costly vector-Jacobian products; see~\Cref{app:tr_adjoint_matching}.}) \emph{adjoint state} $a$ as in \cite{domingoenrich2025adjoint} 
via
\begin{equation}
\label{eq:lean_adjoint_state}
    \frac{\dd}{\dd s} a_{i+1}(X_s, s) = - \left[
    \left(\nabla  b(X_s,s\right)^{\top} a_{i+1}(X_s, s) + \beta_{i+1}\nabla f(X_s,s)
    \right]
\end{equation}
with $ a_{i+1}(X_T, T) = \beta_{i+1}\nabla g(X_T)$, satisfying $ a_{i+1}(X_s, s) = \nabla_{X_s}\beta_{i+1} \mathcal{W}(X,s) $; see~\cite[Lemma 5]{domingoenrich2025adjoint} and observe that it differs from the standard lean adjoint by the factor $\beta_i$ defined in~\Cref{prop: Optimal change of measure as geometric annealing}. 
Similar to~\cite{domingoenrich2024stochastic}, we can use the expression for the optimal control in~\Cref{prop: tr_soc_optimality} and the Girsanov theorem to arrive at the \emph{SOC matching loss}\footnote{The loss is similar to the SOCM-Adjoint loss in 
\cite{domingoenrich2024stochastic}, which, however, involves matrix-valued functions.}, a simple regression objective given by
\begin{equation}
\label{eq:lean_adjoint_matching}
   \mathcal{L}_{\mathrm{SOCM}}(u) \coloneqq  \E\left[\tfrac{1}{2}\int_0^T  \| \sigma^{\top} a_{i+1}(X^{u_i}_s,s) - u(X^{u_i}_s,s)\|^2 \dd s \, \frac{\mathrm d \P^{u_{i+1}}}{\mathrm d \P^{u_i}}(X^{u_i})\right],
\end{equation}
see~\Cref{app:tr_lean_adjoint_matching} for details. Contrary to the log-variance divergence above, this objective does not require to keep the whole trajectory $X^{u_i}$ in memory for backpropagation but can be computed at times $t \sim \operatorname{Unif}([0,T])$ using a Monte Carlo approximation. We summarize our algorithm in~\eqref{alg:tr sampler} and compare the different losses against existing approaches for SOC problems in the next section.

\section{Applications}
\label{sec:applications}

In this section, we explore several applications of SOC, comparing our novel trust-region-based optimization algorithm against existing methods. Specifically, we consider the three tasks sampling from unnormalized densities, transition path sampling, and fine-tuning text-to-image models. For background information, detailed experimental setups, and additional results, we refer to \Cref{app:sampling,appendix:transitionpath,appendix:finetuning}, respectively. We also include further experiments on classical SOC problems in \Cref{appendix:classicalSOC}.

\subsection{Diffusion-based sampling} \label{sec:application sampling}

\begin{figure*}[t!]
        \centering
        \begin{minipage}[t!]{\textwidth}
            \begin{minipage}[t!]{0.9\textwidth}
            \centering
            \includegraphics[width=0.8\textwidth]{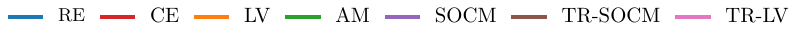}
            \end{minipage}
            \centering
            \begin{minipage}[t!]{0.24\textwidth}
            \includegraphics[width=\textwidth]{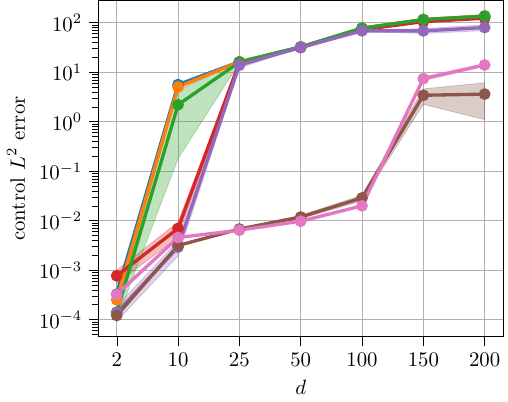}
            \end{minipage}
            \begin{minipage}[t!]{0.24\textwidth}
            \includegraphics[width=0.95\textwidth]{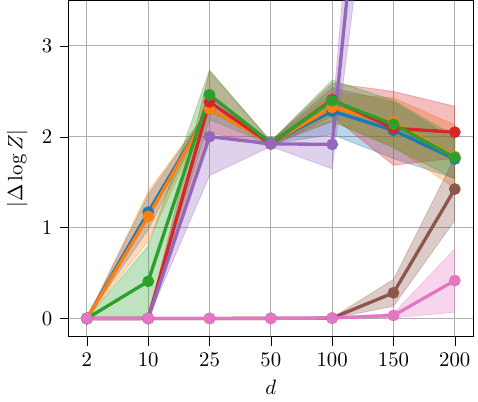}
            \end{minipage}
            \begin{minipage}[t!]{0.24\textwidth}
            \includegraphics[width=\textwidth]{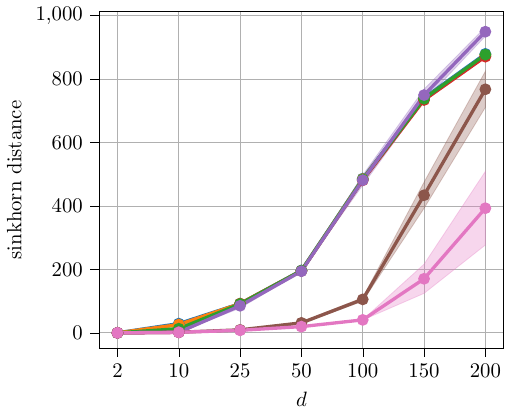}
            \end{minipage}
            \begin{minipage}[t!]{0.24\textwidth}
            \includegraphics[width=0.95\textwidth]{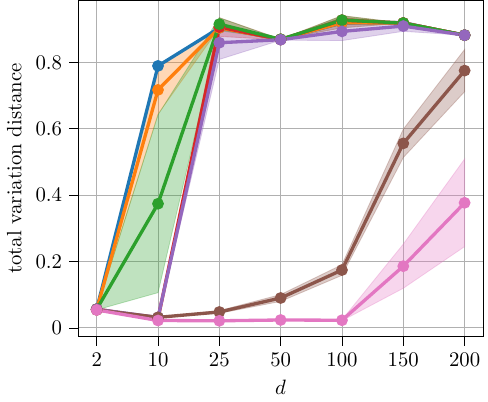}
            \end{minipage}
        \end{minipage}
        \vspace{-0.4em}
        \caption[ ]
        {
        Performance criteria for a Gaussian mixture target density with varying dimension $d$, averaged across four seeds. 
        We show the errors of estimating the optimal control, the log-normalizing constant, as well as the Sinkhorn and total variation distances over different dimensions (from left to right). We observe that our trust
region methods (TR-SOCM and TR-LV) are the only methods that perform well in high dimensions.
        }
        \vspace{-0.5em}
        \label{fig:gmm}
    \end{figure*}

Using~\eqref{eq:opt_change_of_measure}, we can show that sampling problems can be reformulated as SOC problems. To this end, we leverage the following corollary showing that the terminal distributions $\Q_T$ and $\P_T$ of the optimally controlled and uncontrolled processes differ by a tilting. 

\begin{corollary}[Sampling from tilted distributions]
\label{cor:tilting}
    Let us set $f=0$ and assume that the terminal distribution of the uncontrolled process $X$ is independent of $p_0$ and admits a density denoted by $\P_T$. Then it holds that
    $ 
        \Q_T \propto  \P_T e^{-g}.
    $ 
\end{corollary}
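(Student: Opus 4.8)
The plan is to reduce everything to the closed-form Radon--Nikodym derivative in~\eqref{eq:opt_change_of_measure} and then push it forward to the time-$T$ marginal. First I would observe that setting $f=0$ collapses the work functional to $\mathcal{W}(X,t)=g(X_T)$, so that~\eqref{eq:opt_change_of_measure} reads $\frac{\dd\Q}{\dd\P}(X) = e^{-g(X_T)}/\Z(X_0)$ with $\Z(X_0) = \E[e^{-g(X_T)}\mid X_0]$. The key structural point is that this density depends on the trajectory $X$ only through its endpoint $X_T$, apart from the $X_0$-dependent constant $\Z(X_0)$; under the stated assumption that the terminal law of the uncontrolled process is independent of $p_0$ — equivalently, that $X_T$ and $X_0$ are independent under $\P$ — the conditional expectation defining $\Z(X_0)$ does not depend on $X_0$, so $\Z(X_0)\equiv \Z := \E_\P[e^{-g(X_T)}]$ is a genuine constant (finite under the standing integrability assumptions on $g$).

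Given this, I would test against an arbitrary bounded measurable $\varphi\colon\R^d\to\R$ and compute $\E_\Q[\varphi(X_T)] = \E_\P[\varphi(X_T)\,e^{-g(X_T)}/\Z] = \Z^{-1}\int_{\R^d}\varphi(x)\,e^{-g(x)}\,\P_T(x)\,\dd x$, using the change of measure in the first step and the fact that the integrand only depends on $X_T$ in the second. Since this identity holds for every such $\varphi$, it identifies the law of $X_T$ under $\Q$, namely $\Q_T(x) = \Z^{-1} e^{-g(x)}\,\P_T(x)$, which is exactly $\Q_T \propto \P_T\, e^{-g}$ with normalizing constant $\Z$.

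The argument is essentially a one-line pushforward once~\eqref{eq:opt_change_of_measure} is in hand, so there is no serious obstacle; the only point that needs care — and the reason the independence hypothesis appears in the statement — is the $X_0$-dependence of the normalizing constant $\Z(X_0)$. Without the assumption that $X_T$ and $X_0$ are independent, the density $\frac{\dd\Q}{\dd\P}$ would still reduce to a function of $X_T$ only after conditioning on $X_0$, and the pushforward to $\Q_T$ would pick up the extra factor $\E_\P[\Z(X_0)^{-1}\mid X_T]$, which need not be constant; cf.\ the discussion of the initial-value dependence of $\Z$ in~\Cref{app: initial value dependence Z}. I would therefore flag explicitly where the independence is used and remark that it is automatically satisfied in the diffusion-based sampling setups we consider, where the uncontrolled reference dynamics forget $X_0$ by time $T$.
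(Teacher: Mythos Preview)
Your proposal is correct and follows essentially the same approach as the paper: reduce~\eqref{eq:opt_change_of_measure} with $f=0$ to $\frac{\dd\Q}{\dd\P}(X)=e^{-g(X_T)}/\Z(X_0)$, use the independence of $X_T$ and $X_0$ to make $\Z(X_0)$ a constant, and push forward to the time-$T$ marginal. The paper's proof is a one-line sketch that defers the pushforward details to references, whereas you spell out the test-function argument and explicitly flag where independence is used; your discussion of what fails without the independence hypothesis (picking up the factor $\E_\P[\Z(X_0)^{-1}\mid X_T]$) is a nice addition consistent with~\Cref{app: initial value dependence Z}.
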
%
\begin{proof}\vspace*{-1em}
    Using~\eqref{eq:opt_change_of_measure} it holds that
    $\frac{\dd \Q}{\dd \P}(X) = \frac{e^{-g(X_T)}}{\Z(X_0)}$ with $\Z(X_0) = \E \big[e^{-g(X_T)}|X_0\big]$. The results follows from the independence of $X_T$ and $X_0$; see~\cite{domingoenrich2025adjoint} and~\Cref{app:sampling} for details.
\vspace*{-0.7em}\end{proof}%
\Cref{cor:tilting} shows that the optimally controlled process $X^{u^*}$ samples from a given unnormalized density $\rho_\mathrm{target}$ when using an uncontrolled process with known terminal distribution $\P_T$ and setting 
$ 
    g=\log \frac{\P_T}{\rho_\mathrm{target}};
$ 
see~\cite{dai1991stochastic,tzen2019theoretical,richter2021thesis,zhang2021path,vargas2023bayesian,vargas2023denoising,zhangdiffusion,richter2023improved,vargas2024transport} and~\Cref{app:sampling} for details.
Such sampling problems are of immense practical interest, with numerous applications in the natural sciences \cite{zhang2023artificial, schopmans2025temperature}, in Bayesian statistics \cite{gelman2013bayesian}, and reinforcement learning~\cite{celik2025dime}. 

\paragraph{Numerical experiments.} 
Here, we compare existing methods for solving SOC problems with our trust region method on challenging multimodal sampling problems. We use the \emph{Denoising Diffusion Sampler} (DDS)~\cite{vargas2023denoising} method, which leverages an ergodic Ornstein–Uhlenbeck process initialized at its equilibrium measure as uncontrolled process $X$. We consider five baselines, specifically, reverse and (importance weighted) forward KL, also known as \textit{relative entropy (RE)} and \textit{cross entropy (CE)} method, respectively. Additionally, we consider the \textit{log-variance loss \cite{richter2020vargrad}}, \textit{adjoint matching (AM)} \cite{domingoenrich2025adjoint}, and \textit{stochastic optimal control matching (SOCM)} \cite{domingoenrich2024stochastic}, for the unconstrained problem in~\eqref{eq: loss via divergence}; see \cite{domingo2024taxonomy} for a comprehensive overview of SOC losses. In all experiments, we deliberately avoid using gradient guidance from the target density in the diffusion process, often referred to as Langevin preconditioning (LP) \cite{he2025no}. Prior work has shown that LP is essential for preventing mode collapse in neural samplers \cite{blessing2024beyond, he2025no}. However, LP is computationally expensive, as it requires querying the target distribution at every discretization step, making such approaches impractical for many problems where evaluating the target gradient is costly.

\begin{figure*}[t!]
    \centering
    \begin{minipage}{0.48\textwidth}
        \centering
        \resizebox{\textwidth}{!}{%
        \renewcommand{\arraystretch}{1.2}
        \begin{tabular}{l|rrrr}
        \toprule
        & \multicolumn{1}{c}{$d=2$}
        & \multicolumn{1}{c}{$d=50$}
        & \multicolumn{1}{c}{$d=100$}
        & \multicolumn{1}{c}{$d=200$}
        \\
        \midrule
        RE
        & $1.364 \scriptstyle \pm 0.002$
        & $3.443 \scriptstyle \pm 0.004$
        & $3.077 \scriptstyle \pm 0.669$
        & $2.908 \scriptstyle \pm 0.679$
        \\
        CE
        & $0.001 \scriptstyle \pm 0.000$
        & $0.202 \scriptstyle \pm 0.159$
        & $0.526 \scriptstyle \pm 0.181$
        & $0.641 \scriptstyle \pm 0.527$
        \\
        LV
        & \multicolumn{1}{c}{\textit{diverged}}
        & $1.363 \scriptstyle \pm 0.325$
        & $1.809 \scriptstyle \pm 0.737$
        & $1.958 \scriptstyle \pm 0.698$
        \\
        AM 
        & $1.364 \scriptstyle \pm 0.002$
        & $3.432 \scriptstyle \pm 0.020$
        & $3.457 \scriptstyle \pm 0.019$
        & $3.322 \scriptstyle \pm 0.307$
        \\
        SOCM
        & $0.001 \scriptstyle \pm 0.000$
        & $2.958 \scriptstyle \pm 0.831$
        & $2.971 \scriptstyle \pm 0.846$
        & $3.504 \scriptstyle \pm 0.005$
        \\
        TR-LV
        & $\mathbf{0.000 \scriptstyle \pm 0.000}$
        & $\mathbf{0.000 \scriptstyle \pm 0.000}$
        & $\mathbf{0.002 \scriptstyle \pm 0.002}$
        & $\mathbf{0.002 \scriptstyle \pm 0.001}$
        \\
        \bottomrule
        \end{tabular}
        }
        \vspace{0.4cm}
    \end{minipage}\hfill
    \begin{minipage}{0.5\textwidth}
        \centering
        \begin{minipage}[t!]{0.49\textwidth}
        \includegraphics[width=\textwidth]{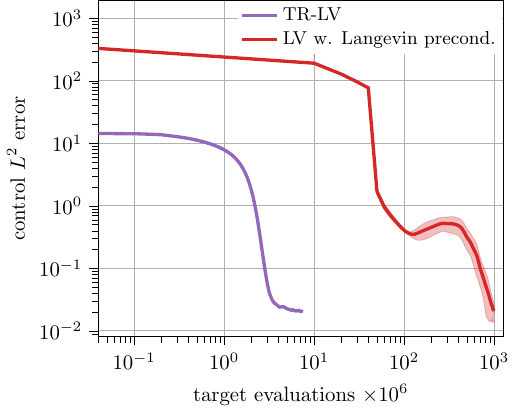}
        \end{minipage}
        \begin{minipage}[t!]{0.49\textwidth}
        \includegraphics[width=\textwidth]{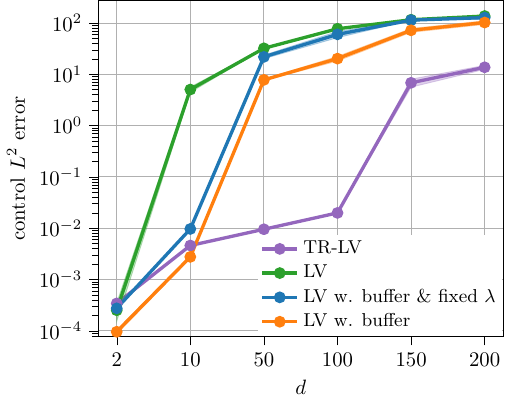}
        \end{minipage}
    \end{minipage}
    \vspace{-0.5em}
    \caption{The left table reports $|\Delta \log \mathcal{Z}|$ values for the \textit{Many Well} target across different dimensions $d$. The middle plot compares the log-variance loss of our trust region method (TR-LV) with that of Langevin preconditioning on the GMM target in dimension $d = 100$. The rightmost figure presents an ablation analysis of key components in our method, highlighting the importance of trust regions in preventing mode collapse and achieving low control error. All results are averaged across four seeds.
    }
    \label{fig:combined_manywell}
    \vspace{-1em}
\end{figure*}

First, we consider a \textit{Gaussian Mixture Model (GMM)} comprising 10 components and randomized mixing weights. GMMs are particularly compelling as they admit an analytical solution for the optimal control, which enables direct computation of the \textit{$L^2$} error between the learned and optimal controls, a reliable metric for detecting mode collapse. In addition, we assess the \textit{Sinkhorn distance} \cite{cuturi2013sinkhorn} between samples from the target and the model, and the absolute error in estimating the log-normalizing constant, denoted $|\Delta \log \mathcal{Z}|$. Finally, we evaluate the \textit{total variation distance} between the true mixing weights and the model’s estimated weights.
The results, shown in \Cref{fig:gmm}, indicate that for $d = 2$, all methods closely approximate the optimal control. However, for dimensions beyond $d = 10$, most methods suffer from mode collapse, as reflected by increased control errors, except for those employing trust region updates. Trust region methods maintain robustness across a wide range of dimensions and only begin to show signs of mode collapse in high dimensions ($d \ge 150$).

We additionally evaluate our method on the \textit{Many Well} target \cite{wu2020stochastic} with 32 modes. For quantitative analysis, we report the log-normalization error $|\Delta \log \mathcal{Z}|$, as other ground-truth quantities are unavailable. Additionally, for the high-dimensional case $d = 200$, we visualize pairs of marginal distributions in \Cref{app:sampling}. The results, presented in \Cref{fig:combined_manywell}, demonstrate that our method significantly outperforms competing approaches in estimating the normalizing constant. Furthermore, the visualizations in \Cref{app:sampling} illustrate that trust region updates effectively prevent mode collapse, even in high dimensions. In contrast, baseline methods either suffer from mode collapse or fail to converge.

Finally, we perform an ablation study on the GMM target, analyzing key components of our proposed method. Specifically, we investigate the effects of incorporating a replay buffer and applying trust region optimization. To this end, we compare a variant using a fixed Lagrangian multiplier $\lambda$, selected via hyperparameter tuning, with one in which $\lambda$ is dynamically optimized using our trust region approach. Additionally, we evaluate the log-variance loss both with and without using a replay buffer. Moreover, we compare our method to LV with Langevin preconditioning on the GMM target with dimensionality $d = 100$. The results, shown in Figure \ref{fig:combined_manywell}, demonstrate that trust region optimization significantly reduces control error and decreases the number of target evaluations by several orders of magnitude.

\subsection{Transition path sampling}
\label{sec:tps}

Transition path sampling is of great importance for studying phase transitions and chemical reactions. The key challenge comes from the energy barrier that connects two sets $A$ and $B$ along the energy landscape, which makes direct sampling of transition paths extremely unlikely. These problems can also be formulated as SOC problems~\cite{singh2025variational,hartmann2012efficient,hartmann2019variational}. Specifically, we set $b=-\nabla U$, where $U:\mathbb{R}^{N\times 3} \rightarrow \mathbb{R}$ is the potential function, and $g = -\log \boldsymbol{1}_B$ as well as $p_0\propto \boldsymbol{1}_A$, which constraints the initial and target states in the sets $A$ and $B$. As in \eqref{eq:opt_change_of_measure}, it holds that $\frac{\dd \mathbb{Q}}{\dd\mathbb{P}} = \frac{\boldsymbol{1}_B(X_T)}{\Z(X_0)}$. 
Recent work has leveraged neural networks to parameterize a bias force to solve the corresponding SOC problem, employing objectives such as the KL~\cite{holdijk2022path,yan2022learning,dudoob}, or log-variance divergence~\cite{seong2024transition}.

\paragraph{Numerical experiments.} 
We evaluate the performance of the trust-region-based log-variance loss (TR-LV) on two transition path sampling problems: Alanine Dipeptide isomerization and Chignolin folding, with 22 and 138 atoms, respectively. 

Our evaluation includes three metrics:\textit{ Kabsch-aligned root mean squared distance (RMSD)} between the final states of the sampled paths and the target state, \textit{transition hit percentage (THP)} measuring the proportion of final states hitting within the target region, and \textit{energy of transition state (ETS)} identifying the highest energy values along paths that reach the target.

We compare our method to standard molecular dynamics (MD) with increased temperature (UMD), steered MD (SMD)~\cite{izrailev1999steered} with force applied to collective variables, and PIPS~\cite{holdijk2022path} which uses the cross-entropy loss. We also include TPS-DPS~\cite{seong2024transition} as a key baseline, which employs an (unconstrained) log-variance loss to formulate TPS as a stochastic optimal control (SOC) problem. Further experimental details are provided in \Cref{appendix:transitionpath}.

Table \ref{tab:tps} shows that TR-LV achieves superior target state RMSD and transition hit percentage compared to the standard log-variance objective (TPS-DPS) for both molecular systems. Notably, SMD performs well due to its use of collective variables with biased force guiding the sampling process. Figure \ref{fig:tps} illustrates that the trust region constraint leads to significantly more robust training compared to TPS-DPS as indicated by low standard deviations across different seeds.  Moreover, on Alanine Dipeptide, the trust region constraint initially regularizes optimization and accelerates convergence thereafter. Across both systems, the trust region constraint significantly enhances training stability and performance.

\begin{table}[t]
    \centering
    \caption{Quantitative evaluation on transition path sampling problems. $\dagger$ denotes that results are taken from \cite{seong2024transition}. The results for TPS-DPS and TR-LV are averaged across three seeds.}
    \label{tab:tps}
    \resizebox{\textwidth}{!}{%
        \begin{tabular}{lcccc}
            \toprule
            \multirow{1}{*}{Method} & RMSD (\r{A}, $\downarrow$) & THP (\%, $\uparrow$) & ETS ($\mathrm{kJ/mol}$) \\
            \midrule
            \multicolumn{4}{c}{Alanine Dipeptide} \\
            \midrule
            UMD (3600K)$\dagger$ & 1.19 $\pm$ 0.32 & \phantom{0}6.25 & 812.47 $\pm$ 148.80  \\
            SMD$\dagger$ & 0.56 $\pm$ 0.27 & 54.69 & 78.40 $\pm$ 12.76  \\
            PIPS$\dagger$ & 0.66 $\pm$ 0.15 & 43.75 & 28.17 $\pm$ 10.86  \\
            TPS-DPS & $0.47 \pm 0.18$ & $39.58 \pm 28.13$ &  $46.34 \pm 10.16$ \\ 
            TR-LV & $\mathbf{0.29 \pm 0.03}$ & $\mathbf{61.25 \pm 4.05}$ & $49.11 \pm 5.84$ \\
            \bottomrule
        \end{tabular}
        \hspace{.1in}
        \begin{tabular}{lcccc}
            \toprule
            \multirow{1}{*}{Method} & RMSD (\r{A}, $\downarrow$) & THP (\%, $\uparrow$) & ETS ($\mathrm{kJ/mol}$)  \\
            \midrule
            \multicolumn{4}{c}{Chignolin} \\
            \midrule
            UMD (1200K)$\dagger$ & 7.23 $\pm$ 0.93 & 1.56 & 388.17  \\
            SMD$\dagger$ & \textbf{0.85 $\pm$ 0.24} & 34.38 & 179.52 $\pm$ 138.87  \\
            PIPS$\dagger$ &  4.66 $\pm$ 0.17 & 0.00 & --  \\
            TPS-DPS  & $1.06 \pm 0.08$ & $25.00 \pm 10.69$ & $-189.91 \pm 23.01$ \\
            TR-LV  & $0.90 \pm 0.01$ & $\mathbf{43.95 \pm 5.64}$ & $-303.98 \pm 28.65$ \\
            \bottomrule
        \end{tabular}%
    }
\vspace{-1em}
\end{table}

\begin{figure*}[t!]
        \centering
            \begin{minipage}[t!]{0.9\textwidth}
            \centering
            \includegraphics[width=0.3\textwidth]{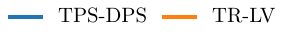}
            \end{minipage}
        \begin{minipage}[t!]{\textwidth}
            \centering
            \begin{subfigure}[t]{0.48\textwidth}
                \centering
                \begin{minipage}[t!]{0.49\textwidth}
                \includegraphics[width=\textwidth]{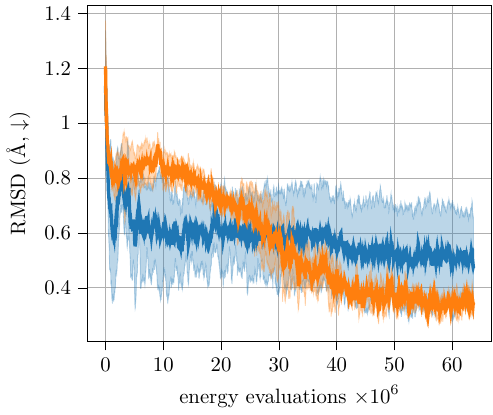}
                \end{minipage}
                \begin{minipage}[t!]{0.49\textwidth}
                \includegraphics[width=\textwidth]{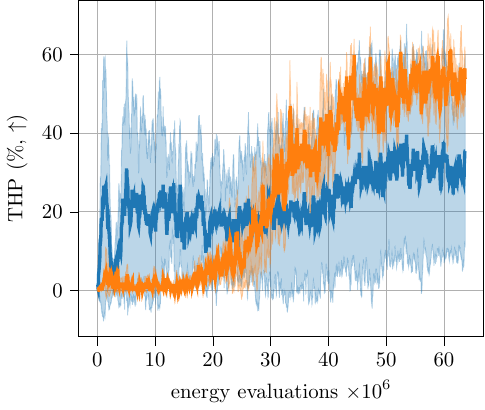}
                \end{minipage}
                \caption{Alanine Dipeptide}
            \end{subfigure}
            \begin{subfigure}[t]{0.48\textwidth}
                \centering
                \begin{minipage}[t!]{0.49\textwidth}
                \includegraphics[width=\textwidth]{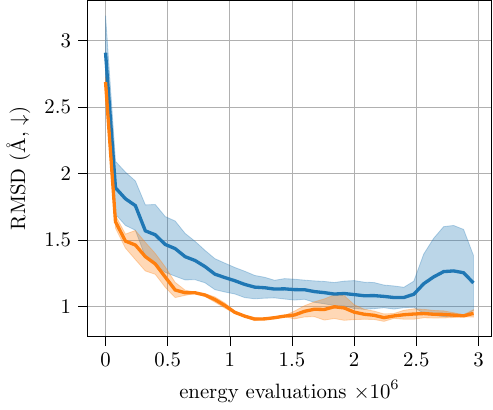}
                \end{minipage}
                \begin{minipage}[t!]{0.49\textwidth}
                \includegraphics[width=\textwidth]{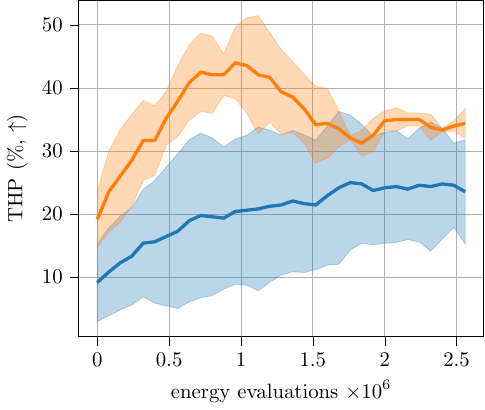}
                \end{minipage}
                \caption{Chignolin}
            \end{subfigure}
        \end{minipage}
        \vspace{-0.25em}
        \caption[ ]
        {
        We compare our trust region method (TR-LV) with Diffusion Path Sampler (TPS-DPS) \cite{seong2024transition} on Alanine Dipeptide and Chignolin. All results are averaged over three random seeds, with both the mean and standard deviation reported. Our method identifies transition paths more consistently and robustly, as evidenced by higher THP values and lower standard deviations.
}
        \vspace{-1.5em}
        \label{fig:tps}
    \end{figure*}

\subsection{Fine-tuning of diffusion models}

Interpreting $-g$ as a \emph{reward} and the uncontrolled process $X$ as a pretrained diffusion model (i.e., $b$ includes the pretrained neural network),~\Cref{cor:tilting} shows that we can perform reward fine-tuning by solving the SOC problem in~\eqref{eq:soc_intro}; see also~\cite{didi2023framework,venkatraman2024amortizing,domingoenrich2025adjoint}. Reward fine-tuning has recently shown impressive results, e.g., in image~\cite{domingoenrich2025adjoint,clark2024directly} and molecule generation~\cite{didi2023framework}, and SOC provides a principled framework. A special case is given by posterior sampling~\cite{didi2023framework}. Setting $g = -\log p(y|x)$, where $p(y|x)$ is the likelihood and we interpret $\P_T$
as a learned (\emph{diffusion}) prior $p(x)$, Bayes' theorem shows that the optimally controlled process samples from the posterior $p(x|y)$.

\paragraph{Numerical experiments.} We perform reward fine-tuning on Stable Diffusion 1.5~\cite{rombach2022high}, using ImageReward \cite{xu2023imagereward}, which is a reward model designed to capture prompt alignment and image quality according to human preferences. We take the adjoint matching (AM) method as baseline and compare it against our TR-SOCM loss \eqref{eq:lean_adjoint_matching}, keeping all other hyperparameters fixed. Our TR-SOCM allows the principled use of buffers, and we perform three passes on each buffer of size $500$, leading to three times fewer trajectories for a fixed number of model updates. For faster convergence, we use a modified version of TR-SOCM with annealing factor $\beta_i=1$. For each algorithm, we evaluate 5 checkpoints during fine-tuning (with ODE and SDE inference) on ImageReward and three additional metrics: CLIP-Score \cite{hessel2021clipscore}, which measures prompt alignment, Human Preference Score \cite{wu2023humanpreferencescorev2}, which measures human-perceived image quality, and Dreamsim diversity \cite{fu2023learning}, which measures per-prompt diversity. We observe that TR-SOCM achieves similar performance metrics to AM at a fraction of the cost, as sampling the trajectories and solving the lean adjoint ODE, which dominates the computational costs, is amortized over the buffer passes; see~\Cref{fig:finetune,fig:images_diff_finetuning} as well as \Cref{appendix:finetuning} for more details.

\begin{figure*}[t!]
        \centering
        \begin{minipage}[t!]{\textwidth}
            \begin{minipage}[t!]{0.9\textwidth}
            \centering
            \includegraphics[width=0.8\textwidth]{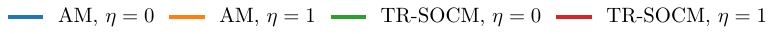}
            \end{minipage}
            \centering
            \begin{minipage}[t!]{0.24\textwidth}
            \includegraphics[width=\textwidth]{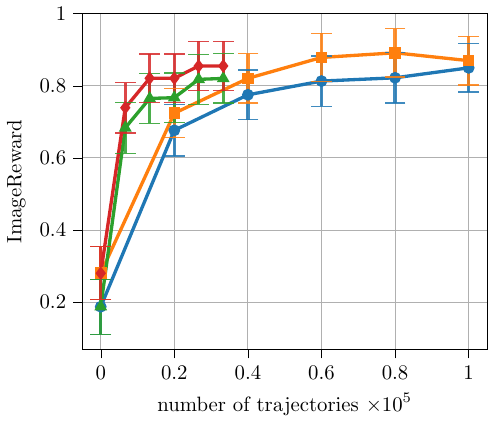}
            \end{minipage}
            \begin{minipage}[t!]{0.24\textwidth}
            \includegraphics[width=\textwidth]{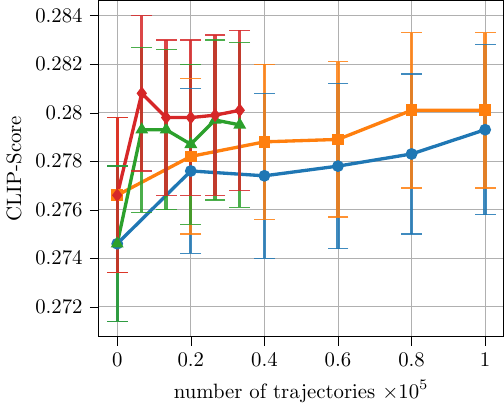}
            \end{minipage}
            \begin{minipage}[t!]{0.24\textwidth}
            \includegraphics[width=\textwidth]{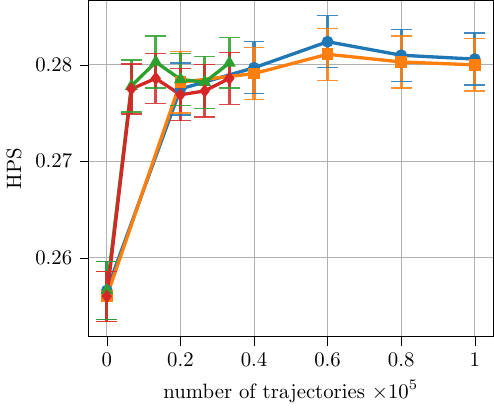}
            \end{minipage}
            \begin{minipage}[t!]{0.24\textwidth}
            \includegraphics[width=\textwidth]{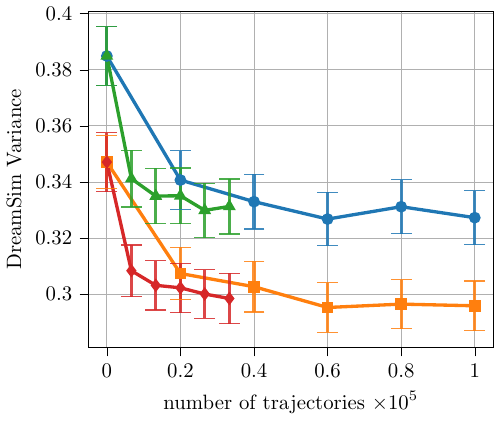}
            \end{minipage}
        \end{minipage}
        \vspace{-0.25em}
        \caption[ ]
        {Comparison of Adjoint Matching against Trust Region SOCM for Stable Diffusion 1.5 fine-tuning w.r.t.\@ four quality metrics, where $\eta = 0$ and $\eta = 1$ refer to ODE (DDIM) and SDE (DDPM) inference, respectively.
        }
        \vspace{-0.75em}
        \label{fig:finetune}
    \end{figure*}

\begin{figure}
    \centering
    \includegraphics[width=0.95\linewidth]{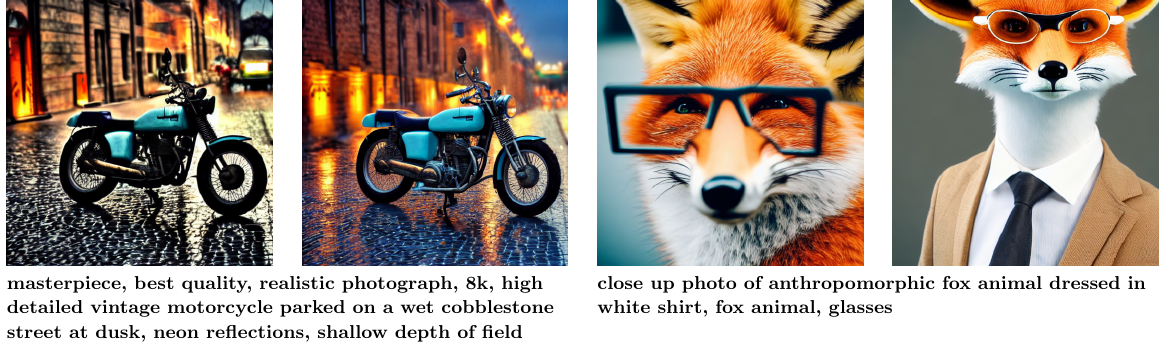}
    \caption{Comparison between images generated by the base Stable Diffusion 1.5 model (left) and its version fine-tuned with TR-SOCM (right), using the same prompts and random seeds. The fine-tuned model generates higher quality images (bike) with better prompt alignment (fox).}
    \label{fig:images_diff_finetuning}
    \vspace{-1.5em}
\end{figure}
\section{Related works} \label{sec: related works}
In this section, we discuss the most related works, comparing our approach to existing methods for solving SOC problems. We provide a more extensive comparison in~\Cref{app:broader_impact_limitations}.

\paragraph{Iterative diffusion optimization.}
Many recently developed methods approach SOC problems by simulating the (diffusion) process $X^u$, computing a suitable cost function, and optimizing the parameters of the control function $u$ using variants of stochastic gradient methods. These techniques are collectively referred to as \emph{iterative diffusion optimization} (IDO) methods \cite{nuesken2021solving}.
While the underlying theory dates back to~\cite{pavon1989stochastic,dai1991stochastic}, combinations with deep learning in the context of SOC have been explored by~\cite{nuesken2021solving,zhang2021path,richter2021thesis,zhou2021actor,vargas2023denoising,berner2022optimal,richter2023improved,vargas2024transport}.
One can derive most of the related objectives starting from the Radon-Nikodym derivative $\frac{\dd \P^{u}}{\dd \Q}(X^u)$
as in~\eqref{eq:rnd_adjacent} (with $u=u_i$).
One can then minimize a loss based on a suitable divergence as in~\eqref{eq: loss via divergence}. Previous works have, e.g., proposed the log-variance divergence~\cite{richter2023improved,seong2024transition} or the forward KL divergence
(corresponding to the cross-entropy loss \cite{hartmann2017variational,kappen2016adaptive,rubinstein2013cross,zhang2014applications,holdijk2022path}), for which we develop corresponding trust region versions in~\eqref{eq:trust_region_lv} and~\eqref{eq:cross_entropy_TR}. The SOC matching loss \cite{domingoenrich2024stochastic}, which we extended to trust regions in~\eqref{eq:lean_adjoint_matching}, is equal to the cross entropy loss in expectation but exhibits lower variance empirically.\highprio{JB: Replace this citation with the correct one after the submission and delete the bibentry} We refer to~\cite{domingoenrich2025adjoint} for more IDO losses. {However, all existing methods have either directly tackled the target measure $\Q$ 
or relied on a form of hand-tuned annealing.}

\paragraph{Trust region methods.} We show how IDO methods can generally be extended to trust region methods, enabling (1) automatic control on the variance of the importance weights and (2) principled usage of buffers, leading to faster and more stable convergence, in particular avoiding mode collapse in high dimensions. Trust region methods have a long history as robust optimization algorithms that iteratively minimize an objective within an adaptively sized \enquote{trust region}; see~\cite{conn2000trust} for an overview. These methods have also been extended to optimize over spaces of probability distributions, particularly in reinforcement learning~\cite{peters2010relative,schulman2015trust,schulman2017proximal,achiam2017constrained,pajarinen2019compatible,akrour2019projections,yang2020projection,otto2021differentiable,xu2024trust,wu2017scalable,abdolmaleki2018maximum,abdolmaleki2018relative,meng2021off,becker2025troll}, black-box optimization~\cite{sun2009efficient,wierstra2014natural,abdolmaleki2015model}, variational inference~\cite{arenz2020trust,arenz2022unified,hertrich2024importance,von2025learning} and path integral control~\cite{thalmeier2020adaptive}. To the best of our knowledge, these methods have not yet been extended to path measures or inference problems. Moreover, the connection between trust-region iterates and geometric annealing has not previously been established.

\section{Conclusion}
In this work, we develop a novel framework for solving SOC problems using deep learning. Our framework builds on the fact that we can reformulate specific problems as finding an optimal path space measure induced by a controlled SDE. Instead of finding this optimal measure at once, we divide the unconstrained problem into a sequence of constrained optimization problems by bounding the KL divergence to the measure from the previous iteration. We show that this defines a well-behaved geometric annealing between the prior and the target path measure, resulting in equidistant steps on the Fisher-Rao information manifold. Crucially, each intermediate problem turns out to be an altered SOC problem that can be efficiently solved without simulations by using a buffer of trajectories with the control from the previous iteration. In our experiments, we show that our method significantly improves the learning of the optimal control, including applications in diffusion-based sampling and transition path sampling in molecular dynamics.
Further, we show that our method can be scaled to improve the efficiency of reward fine-tuning for text-to-image diffusion models. In the future, we expect our framework to improve even more applications of SOC, potentially including the use of divergences other than the KL divergence for the trust region constraint. Finally, our results for general measures motivate the use of trust region methods for other learned measure transports, e.g., normalizing flows.

\section*{Acknowledgements}
D.B. acknowledges support by funding from a Google PhD fellowship in Machine Learning and ML Foundations, the pilot program Core Informatics of the Helmholtz Association (HGF) and the state of Baden-Württemberg through bwHPC, as well as the HoreKa supercomputer funded by the Ministry of Science, Research and the Arts Baden-Württemberg and by the German Federal Ministry of Education and Research. The research of L.R.~was partially funded by Deutsche Forschungsgemeinschaft (DFG) through the grant CRC 1114 ``Scaling Cascades in Complex Systems'' (project A05, project number $235221301$).  

\newpage

\newpage
\bibliographystyle{abbrv}
\bibliography{references}


\newpage
\appendix

\startcontents[appendix]

\printcontents[appendix]{}{1}{\section*{Appendix}}
\clearpage

\linespread{1}
\setlength{\parskip}{0.4\baselineskip plus 0.25pt}
\setlength{\abovedisplayskip}{12pt plus 3pt minus 9pt}%
\setlength{\belowdisplayskip}{12pt plus 3pt minus 9pt}%
\setlength{\abovedisplayshortskip}{0pt plus 3pt}%
\setlength{\belowdisplayshortskip}{7pt plus 3pt minus 4pt}%
\normalsize

\section{Assumptions and auxiliary results}
\label{app:assumptions_and_auxiliary}

\subsection{Additional notation}

For vectors $v_1, v_2 \in \R^d$, we denote by $\|v\|$ the Euclidean norm and by $v_1 \cdot v_2$ the Euclidean inner product. For a real-valued matrix $A$, we denote by $\operatorname{Tr}(A)$ and $A^\top$ its trace and transpose.

For a sufficiently smooth function $f \colon \R^d \times [0,T] \to \R$, we denote by $\nabla f = \nabla_x f$ its gradient w.r.t. the spatial variables $x$ and by $\partial_t f$ and $\partial_{x_i}f$ its partial derivatives w.r.t.\@ the time coordinate $t$ and the spatial coordinate $x_i$, respectively. 

We denote by $\mathcal{N}(\mu,\Sigma)$ a multivariate normal distribution with mean $\mu \in \R^d$ and covariance matrix $\Sigma \in \R^{d \times d}$.  
Moreover, we denote by $\mathrm{Unif}([0,T])$ the uniform distribution on $[0,T]$. For random variables $X_1$, $X_2$, we denote by $\E[X_1]$ and $\V[X_1]$ the expectation and variance of $X_1$ and by $\E[X_1|X_2]$ the conditional expectation of $X_1$ given $X_2$.

\subsection{Technical assumptions} \label{sec: technical assumptions}
Throughout our work, we make the same assumptions as in \cite{nuesken2021solving,domingoenrich2024stochastic}, which are needed for all the objects considered to be well-defined. Namely, we assume that:
\begin{enumerate}[label=(\roman*)]
    \item The set $\mathcal{U}$ of \textit{admissible controls} is given by
    \begin{align}
        \mathcal{U} = \{ u \in C^1(\R^d \times [0,T] ; \R^d) \, | \, \exists C > 0, \, \forall (x,s) \in \R^d \times [0,T], \, u(x,s) \leq C(1+\|x\|) \}. 
    \end{align}
    \item The coefficients $b$ and $\sigma$ are continuously differentiable, $\sigma$ has bounded first-order spatial derivatives, and $(\sigma \sigma^{\top})(x, s)$ is positive definite for all $(x, s) \in \R^d \times [0, T ]$. Furthermore, there exist constants $C, c_1, c_2 > 0$ such that
    \begin{align}
    \begin{split}
    \|b(x, s)\| \leq C (1 + \|x\|), \qquad &\text{(linear growth)} \\
    c_1\|\beta\|^2 \leq \beta^{\top} (\sigma \sigma^{\top})(x, s) \beta \leq c_2\|\beta\|^2, \qquad &\text{(ellipticity)}
    \end{split}
    \end{align}
    for all $(x, s) \in \R^d \times [0, T ]$ and $\beta \in \R^d$.
\end{enumerate}

\subsection{Useful identities} \label{sec: useful identities}

\begin{definition}[Controlled SDEs]
Let \( u \in \mathcal{U} \) be a control function. Throughout, we consider controlled and uncontrolled stochastic processes defined via the SDEs
\begin{align}
    \dd X^u_s &= \left(b + \sigma u\right)(X^u_s, s) \dd s + \sigma(s) \dd W_s, \quad && X^u_0 \sim p_0, \\
    \dd X_s &= b(X_s, s)\dd s + \sigma(s) \dd W_s, \quad && X_0 \sim p_0,
\end{align}
where \( X^u \sim \mathbb{P}^u \), \( X \sim \mathbb{P} \), with $\mathbb{P}^u$ and $\mathbb{P}$ denoting the respective path space measures, and \( W \) is a standard Brownian motion.
\end{definition}

\begin{theorem}[Girsanov’s theorem for path measures] \label{theorem: girsanov}
Let \( u, v, w \in \mathcal{U} \). Then the Radon-Nikodym derivative between \( \mathbb{P}^u \) and \( \mathbb{P}^v \), evaluated along \( X^w \), is given by:
\begin{align}
    \log \frac{\dd {\mathbb{P}}^{u}}{\dd {\mathbb{P}}^v}(X^w)  & = \int_0^T \sigma^{-1}(u-v)(X^w_s,s) \cdot \dd X^w_s- \frac{1}{2}\int_0^T \left(\|\sigma^{-1}b + u\|^2 - \|\sigma^{-1}b + v\|^2\right)(X^w_s,s) \dd s,
\end{align}
\end{theorem}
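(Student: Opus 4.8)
The final statement to prove is Girsanov's theorem for path measures (Theorem~\ref{theorem: girsanov}).

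\textbf{Plan of proof.} The plan is to reduce the claimed identity to the classical Girsanov theorem by viewing both $\P^u$ and $\P^v$ as changes of measure from the \emph{driftless} reference measure, or equivalently by a direct application of the exponential martingale / Cameron--Martin--Girsanov formula. I would proceed in three steps. First I would recall the classical statement: if $X$ solves $\dd X_s = b(X_s,s)\dd s + \sigma(s)\dd W_s$ under $\P$, then under $\P^u$ the process $X$ satisfies $\dd X_s = (b+\sigma u)(X_s,s)\dd s + \sigma(s)\dd \widetilde W_s$ for a $\P^u$-Brownian motion $\widetilde W$, and the Radon--Nikodym derivative along the canonical process is the stochastic exponential
\begin{equation*}
\frac{\dd \P^u}{\dd \P}(X) = \exp\left( \int_0^T u(X_s,s)\cdot \dd W_s - \tfrac12 \int_0^T \|u(X_s,s)\|^2\,\dd s \right).
\end{equation*}
The technical assumptions in~\Cref{sec: technical assumptions} (linear growth of $b$, ellipticity and boundedness of $\sigma$, linear growth of admissible controls) guarantee that Novikov's condition holds so that this exponential is a true martingale and the measure change is legitimate; I would cite this rather than reprove it.

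Second, I would write the ratio $\frac{\dd \P^u}{\dd \P^v} = \frac{\dd \P^u/\dd \P}{\dd \P^v/\dd \P}$ and take logarithms, giving
\begin{equation*}
\log \frac{\dd \P^u}{\dd \P^v}(X) = \int_0^T (u-v)(X_s,s)\cdot \dd W_s - \tfrac12\int_0^T \left(\|u\|^2 - \|v\|^2\right)(X_s,s)\,\dd s.
\end{equation*}
Then I would substitute $\dd W_s = \sigma^{-1}(s)\left(\dd X_s - b(X_s,s)\,\dd s\right)$, which holds $\P$-a.s.\@ by definition of the uncontrolled SDE, to rewrite the stochastic integral as $\int_0^T \sigma^{-1}(u-v)(X_s,s)\cdot \dd X_s - \int_0^T \sigma^{-1}(u-v)(X_s,s)\cdot b(X_s,s)\,\dd s$. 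Completing the square, $-\int_0^T \sigma^{-1}(u-v)\cdot b\,\dd s - \tfrac12\int_0^T(\|u\|^2-\|v\|^2)\,\dd s = -\tfrac12\int_0^T\left(\|\sigma^{-1}b+u\|^2 - \|\sigma^{-1}b+v\|^2\right)\dd s$, since the cross terms $\sigma^{-1}b\cdot u$ and the $\|\sigma^{-1}b\|^2$ terms combine appropriately. This yields exactly the claimed formula when $X = X^w$ for $w=u$ or $w=v$.

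Third, to handle a \emph{general} evaluation process $X^w$ with $w\notin\{u,v\}$, I would note that the stated identity is an equality of functionals of a continuous path, i.e.\@ an almost-sure pathwise identity in terms of the Itô integral $\int_0^T (\cdot)(X^w_s,s)\cdot \dd X^w_s$; since $\P^w$ is absolutely continuous with respect to $\P$ (again by Girsanov under the stated assumptions), an identity that holds $\P$-a.s.\@ as a relation between path functionals also holds $\P^w$-a.s., hence the evaluation along $X^w$ is justified. The main obstacle, and the only place requiring care, is this last point: making precise that ``$\frac{\dd \P^u}{\dd \P^v}$ evaluated along $X^w$'' is well-defined as a measurable functional of the trajectory (the Itô integrals must be understood as pathwise-defined limits, valid up to null sets under any of the mutually absolutely continuous measures $\P,\P^u,\P^v,\P^w$), so that the formula is not merely a statement under one particular measure. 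Everything else is the classical computation of completing the square, which I would not belabor.
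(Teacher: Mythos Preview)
Your proposal is correct and is essentially the standard derivation one would find in the cited literature. The paper itself does not give a proof of this statement at all: its ``proof'' consists solely of the pointer ``See, e.g., Lemma~A.1 in \cite{nuesken2021solving} or Appendix~E in \cite{vargas2024transport}.'' So there is no in-paper argument to compare against; you have supplied one where the paper defers entirely to references. Your three steps---write each $\dd\P^u/\dd\P$ as the usual stochastic exponential (Novikov being ensured by the linear-growth and ellipticity assumptions), take the ratio and substitute $\dd W_s=\sigma^{-1}(s)(\dd X_s-b(X_s,s)\,\dd s)$ to express everything as a path functional, then use mutual absolute continuity of $\P,\P^u,\P^v,\P^w$ so that the identity, being an equality of It\^o-measurable path functionals valid $\P$-a.s., transfers to $\P^w$-a.s.---are exactly what the cited lemmas do. The only point worth tightening is the third step: make explicit that the stochastic integral $\int_0^T\sigma^{-1}(u-v)(\omega_s,s)\cdot\dd\omega_s$ against the canonical coordinate process is well-defined and agrees under all of the equivalent measures $\P,\P^w$ (a standard fact for continuous semimartingales under equivalent changes of measure), so that ``evaluation along $X^w$'' is unambiguous; you already flag this yourself.
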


\begin{proof}
    See, e.g., Lemma A.1 in \cite{nuesken2021solving} or Appendix E in \cite{vargas2024transport}.
\end{proof}

\begin{corollary}[Change of measure identities]
Let \( u, v \in \mathcal{U} \). From~\Cref{theorem: girsanov}, we obtain the following useful identities:
\begin{enumerate}[label=(\roman*)]
    \item $\log \frac{\dd \mathbb{P}^u}{\dd \mathbb{P}}(X^u) = \int_0^T u(X^u_s, s) \cdot \dd W_s + \frac{1}{2} \int_0^T \|u(X^u_s, s)\|^2 \dd s$
    \item $\log \frac{\dd \mathbb{P}^u}{\dd \mathbb{P}}(X)  = \int_0^T u(X_s, s) \cdot \dd W_s - \frac{1}{2} \int_0^T \|u(X_s, s)\|^2 \dd s$
    \item $\log \frac{\dd \mathbb{P}^u}{\dd \mathbb{P}^v}(X^u) = \int_0^T (u - v)(X^u_s, s) \cdot \dd W_s + \frac{1}{2} \int_0^T \|u - v\|^2(X^u_s, s)\dd s$
    \item $\log \frac{\dd \mathbb{P}^u}{\dd \mathbb{P}^v}(X^v) = \int_0^T (u - v)(X^v_s, s) \cdot \dd W_s - \frac{1}{2} \int_0^T \|u - v\|^2(X^v_s, s)\dd s$
\end{enumerate}
\end{corollary}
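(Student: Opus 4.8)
The final statement to prove is the Corollary (Change of measure identities), which lists four special cases of Girsanov's theorem.

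\textbf{Approach.} The plan is to derive all four identities as direct specializations of \Cref{theorem: girsanov}, by substituting appropriate choices of $u$, $v$, $w$ and then converting the stochastic integral with respect to $\dd X^w_s$ into one with respect to $\dd W_s$ using the SDE satisfied by $X^w$. The only genuine computation is this change of integrator, and it is the same algebraic manipulation in each of the four cases.

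\textbf{Key steps.} First I would recall from \Cref{theorem: girsanov} that
\begin{equation*}
\log \frac{\dd \mathbb{P}^u}{\dd \mathbb{P}^v}(X^w) = \int_0^T \sigma^{-1}(u-v)(X^w_s,s)\cdot \dd X^w_s - \frac{1}{2}\int_0^T\left(\|\sigma^{-1}b+u\|^2-\|\sigma^{-1}b+v\|^2\right)(X^w_s,s)\,\dd s.
\end{equation*}
For identity (i), set $v=\mathbf{0}$ and $w=u$. Since $\dd X^u_s = (b+\sigma u)(X^u_s,s)\,\dd s + \sigma(s)\,\dd W_s$, we get $\sigma^{-1}\dd X^u_s = (\sigma^{-1}b + u)(X^u_s,s)\,\dd s + \dd W_s$, so $\sigma^{-1}u\cdot\dd X^u_s = u\cdot\dd W_s + u\cdot(\sigma^{-1}b+u)\,\dd s$. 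Plugging in and expanding $\|\sigma^{-1}b+u\|^2 - \|\sigma^{-1}b\|^2 = 2\, u\cdot \sigma^{-1}b + \|u\|^2$, the $u\cdot\sigma^{-1}b$ terms cancel and the $\|u\|^2$ terms combine to give $\int_0^T u\cdot\dd W_s + \frac12\int_0^T\|u\|^2\,\dd s$. Identity (ii) is the same computation with $v=\mathbf{0}$, $w=\mathbf{0}$: now $\sigma^{-1}\dd X_s = \sigma^{-1}b\,\dd s + \dd W_s$, the cross terms cancel as before, but the $\|u\|^2$ term now appears with opposite sign relative to (i), yielding the minus sign. Identities (iii) and (iv) are identical but with general $v$ in place of $\mathbf{0}$: take $w=u$ for (iii) and $w=v$ for (iv); the manipulation $\sigma^{-1}(u-v)\cdot\dd X^w_s$ together with the expansion $\|\sigma^{-1}b+u\|^2 - \|\sigma^{-1}b+v\|^2 = 2\,\sigma^{-1}b\cdot(u-v) + \|u\|^2 - \|v\|^2$ again cancels the drift cross-terms, and combining $\|u\|^2-\|v\|^2$ with the quadratic from the integrator (which is $\pm(u-v)\cdot(u+v) = \pm(\|u\|^2-\|v\|^2)$ depending on whether $w=u$ or $w=v$) produces $\frac12\int_0^T\|u-v\|^2\,\dd s$ with the appropriate sign.

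\textbf{Main obstacle.} There is no real obstacle here — the result is routine bookkeeping. The one point requiring a little care is keeping track of signs: in each case one must correctly identify which process drives the SDE (so that the correct drift appears when rewriting $\dd X^w_s$ in terms of $\dd W_s$) and then verify that the quadratic terms recombine into $\|u-v\|^2$ (or $\|u\|^2$) rather than something else. I would present (i) in full detail and then note that (ii)–(iv) follow by the identical argument with the indicated substitutions, perhaps writing out the sign-determining algebra once in the general $(u,v)$ case.
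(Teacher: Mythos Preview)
Your proposal is correct and takes essentially the same approach as the paper: the corollary is stated immediately after \Cref{theorem: girsanov} without a separate proof, since the identities are exactly the specializations you describe (choosing $v=\mathbf{0}$ or general $v$, and $w=u$ or $w=v$, then rewriting $\dd X^w_s$ via the SDE). Your sign-bookkeeping is accurate and the level of detail you propose is appropriate.
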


\begin{lemma}[Itô's formula]
Let \( X_s \) solve the SDE
\[
\dd X_s = b(X_s, s) \dd s + \sigma(s) \dd W_s,
\]
and let \( f \colon \mathbb{R}^d \times [0, T] \to \mathbb{R} \) be a smooth function. Then
\[
\dd f(X_s, s) = (\partial_s + L)f(X_s, s)\dd s + \sigma^\top \nabla f(X_s, s) \cdot \dd W_s,
\]
where $L$ is the infinitesimal generator given by
\[
L \coloneqq \frac{1}{2} \sum_{i,j=1}^d (\sigma \sigma^\top)_{ij} \partial_{x_i}\partial_{x_j} + \sum_{i=1}^d b_i(x, t)\partial_{x_i}.
\]
\end{lemma}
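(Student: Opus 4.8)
The plan is to recognize this as the standard multidimensional Itô formula for an Itô process with time-dependent but deterministic diffusion coefficient, and — since the result is classical — to give the heuristic derivation via a second-order Taylor expansion together with the Itô multiplication table, and then indicate the standard route to full rigor. Heuristically, a second-order Taylor expansion of $f$ along an increment gives
\[
\dd f(X_s, s) = \partial_s f(X_s,s)\,\dd s + \sum_{i=1}^d \partial_{x_i} f(X_s,s)\,\dd X^i_s + \tfrac{1}{2}\sum_{i,j=1}^d \partial_{x_i}\partial_{x_j} f(X_s,s)\,\dd X^i_s\, \dd X^j_s,
\]
where terms of order higher than $\dd s$ (such as $(\dd s)^2$ and $\dd s\,\dd X^i_s$) are dropped. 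Substituting $\dd X^i_s = b_i(X_s,s)\,\dd s + \sum_{k=1}^d \sigma_{ik}(s)\,\dd W^k_s$ and using $\dd W^k_s\,\dd W^l_s = \delta_{kl}\,\dd s$, $\dd W^k_s\,\dd s = 0$, together with the fact that $\sigma$ is deterministic, yields $\dd X^i_s\,\dd X^j_s = \sum_{k=1}^d \sigma_{ik}(s)\sigma_{jk}(s)\,\dd s = (\sigma\sigma^\top)_{ij}(s)\,\dd s$. Collecting the $\dd s$-terms gives $\big(\partial_s f + \sum_i b_i\partial_{x_i} f + \tfrac12\sum_{i,j}(\sigma\sigma^\top)_{ij}\partial_{x_i}\partial_{x_j} f\big)(X_s,s)\,\dd s = (\partial_s + L)f(X_s,s)\,\dd s$, while the remaining term is $\sum_{i,k}\sigma_{ik}(s)\partial_{x_i}f(X_s,s)\,\dd W^k_s = (\sigma^\top\nabla f)(X_s,s)\cdot\dd W_s$, using $(\sigma^\top\nabla f)_k = \sum_i\sigma_{ik}\partial_{x_i}f$. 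This is exactly the claimed identity.

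To turn this into a proof I would fix $t\in[0,T]$ and establish the integrated form $f(X_t,t) - f(X_0,0) = \int_0^t(\partial_s+L)f(X_s,s)\,\dd s + \int_0^t(\sigma^\top\nabla f)(X_s,s)\cdot\dd W_s$. First reduce to the case where $f$ and its relevant derivatives are bounded, via a localization argument using the stopping times $\tau_n = \inf\{s\ge 0 : \|X_s\| \ge n\}$; non-explosion ($\tau_n\to\infty$) follows from the linear-growth bound on $b$ in~\Cref{sec: technical assumptions}. Then take a partition $0=s_0<\dots<s_m=t$, write $f(X_t,t)-f(X_0,0)$ as the telescoping sum $\sum_k\big(f(X_{s_{k+1}},s_{k+1})-f(X_{s_k},s_k)\big)$, and Taylor-expand each summand to first order in time and second order in space about $(X_{s_k},s_k)$. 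As the mesh tends to $0$, the time term converges to $\int_0^t\partial_s f(X_s,s)\,\dd s$, the first-order spatial term converges to the Itô integral $\int_0^t\nabla f(X_s,s)\cdot\dd X_s$ (by the usual $L^2$ approximation of stochastic integrals), and the Taylor remainder vanishes using uniform continuity and moment bounds on the increments of $X$.

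The main obstacle — the one genuinely probabilistic step — is the second-order spatial term $\sum_k(X_{s_{k+1}}-X_{s_k})^\top\nabla^2 f(X_{s_k},s_k)(X_{s_{k+1}}-X_{s_k})$, which must be shown to converge in $L^2$ to $\int_0^t\sum_{i,j}(\sigma\sigma^\top)_{ij}(s)\,\partial_{x_i}\partial_{x_j}f(X_s,s)\,\dd s$. This is where the quadratic variation of Brownian motion enters: one replaces the increments of $X$ by the increments of its martingale part $\int\sigma\,\dd W$ (the drift contribution being higher order), uses that the conditional expectation of $(X^i_{s_{k+1}}-X^i_{s_k})(X^j_{s_{k+1}}-X^j_{s_k})$ given $\mathcal{F}_{s_k}$ equals $\int_{s_k}^{s_{k+1}}(\sigma\sigma^\top)_{ij}(s)\,\dd s$ up to lower-order terms, and bounds the $L^2$ distance between the sum and this compensator, which is $O(\text{mesh})$. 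Finally, splitting $\int_0^t\nabla f(X_s,s)\cdot\dd X_s$ into its drift part $\int_0^t\nabla f(X_s,s)\cdot b(X_s,s)\,\dd s$ and martingale part $\int_0^t(\sigma^\top\nabla f)(X_s,s)\cdot\dd W_s$ via the SDE, and combining with the second-order term, produces exactly $(\partial_s+L)f$ in the drift and the stated diffusion term; undoing the localization completes the proof. In a paper of this kind one would of course simply cite a standard reference for stochastic calculus rather than reproduce this argument.
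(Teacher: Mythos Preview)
Your proposal is correct: this is the standard derivation of the multidimensional It\^o formula, and you have accurately identified both the heuristic via the It\^o multiplication table and the rigorous route via partitions, localization, and quadratic variation. The paper in fact gives no proof at all for this lemma --- it is simply stated as a classical identity in the section on useful background results --- so your closing remark that one would just cite a standard stochastic calculus reference is exactly what the paper does (implicitly).
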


\section{Proofs}
\label{app: proofs}

\begin{proof}[Proof of \Cref{prop: Optimal change of measure as geometric annealing}]

    Let $\widetilde{\mathbb{P}}$ be the measure defined by $\frac{\mathrm d \widetilde{\mathbb{P}}}{\mathrm d \P^{u_i}} = \left(\frac{\mathrm d \Q}{\mathrm d \P^{u_i}}\right)^{\frac{1}{1 + \lambda_i}}/\widetilde{\mathcal{Z}}$, where $\widetilde{\mathcal{Z}}$ is the normalizing constant. Then we have that 
    \begin{subequations}
    \begin{align}
        (1 + \lambda_i)\log \frac{{\mathrm d\mathbb{P}}^u}{\mathrm d\widetilde{\mathbb{P}}}  = (1 + \lambda_i)\log \left(\frac{{\mathrm d\mathbb{P}}^u}{\mathrm d\P^{u_i}} \frac{\mathrm d\P^{u_i}}{\mathrm d\widetilde{\mathbb{P}}}\right) &= (1 + \lambda_i)\log \frac{{\mathrm d\mathbb{P}}^u}{\mathrm d\P^{u_i}}  + \log \frac{\mathrm d\P^{u_i}}{\mathrm d\Q} +  (1 + \lambda_i) \log \widetilde{\mathcal{Z}}\\
        &= \lambda_i\log \frac{{\mathrm d\mathbb{P}}^u}{\mathrm d\P^{u_i}}  + \log \frac{\mathrm d\P^{u}}{\mathrm d\Q} +  (1 + \lambda_i) \log \widetilde{\mathcal{Z}}. 
    \end{align}
    \end{subequations}
    Using the definition of the Lagrangian in~\eqref{eq: Lagrangian loss}, this implies that
    \begin{subequations}
    \begin{align}
       (1 + \lambda_i) \KL({\mathbb{P}}^u|\widetilde{\mathbb{P}}) &=  \lambda_i\KL\left({\mathbb{P}}^u|{\P}^{u_i}\right)  + \KL\left({\mathbb{P}}^u|\Q\right) +  (1 + \lambda_i) \E\left[\log \widetilde{\mathcal{Z}}(X^u_0)\right] \\
       & = \mathcal{L}_{\mathrm{TR}}^{(i)}(u, \lambda_i) + (1 + \lambda_i) \E\left[\log \widetilde{\mathcal{Z}}(X^u_0)\right] +\lambda_i \varepsilon,
    \end{align}
    \end{subequations}
    Since we defined the minimizer of the Lagrangian (with optimal multiplier $\lambda_i$) in the last expression as $u_{i+1}$, we have that
    $u_{i+1} = \argmin_{u\in \mathcal{U}}\KL({\mathbb{P}}^u|\widetilde{\mathbb{P}})$. 
    This shows that $\widetilde{\mathbb{P}} = \P^{u_{i+1}}$ by the uniqueness of the Radon-Nikodym derivative. For the second statement, we introduce the unnormalized path measure $\widetilde{\P}^{u_{i+1}}$ such that
    \begin{equation}
    \label{eq: unnormalized optimal iterates and log Z}
        \frac{\dd \P^{u_{i+1}}}{\dd \P}(X) = \frac{1}{\widetilde{\Z}_{i+1}(X_0)}\frac{\dd \widetilde{\P}^{u_{i+1}}}{\dd \P}(X) \quad \text{with} \quad  \widetilde{\Z}_{i+1}(X_0) = \E\left[\frac{\dd \widetilde{\P}^{u_{i+1}}}{\dd \P}(X)\bigg|X_0\right]
    \end{equation}
    and 
    \begin{equation}
    \label{eq: unnormalized incremental}
        \frac{\dd \widetilde{\P}^{u_{i+1}}}{\dd \P}(X) = \left(\frac{\dd{\Q}}{\dd \P}(X)\right)^{\frac{1}{1+\lambda_i}}
\left(\frac{\dd{\widetilde{\P}}^{u_i}}{\dd \P}(X)\right)^{\frac{\lambda_i}{1+\lambda_i}}.    
\end{equation}
    Assuming $\widetilde{\Z}_0 = 1$, we have $\widetilde{\P}^{u_{0}}={\P}^{u_{0}}$ for $u_0 \in \mathcal{U}$. By induction, it follows that 
    \begin{equation}
        \frac{\dd \widetilde{\P}^{u_{i+1}}}{\dd \P}(X)=\left(\frac{\dd{\Q}}{\dd \P}(X)\right)^{\beta_{i+1}}\left(\frac{\dd{{\P}}^{u_0}}{\dd \P}(X)\right)^{1-\beta_{i+1}},
    \end{equation}
    with $\beta_{i+1}$ defined as in~\Cref{prop: Optimal change of measure as geometric annealing}, which proves the second statement.
    \highprio{JB: this needs more attention: one needs to show that $\widetilde{P}$ can be written as a controlled path measure, which probably can be shown that it again defines an SOC problem. We can avoid it by looking at abstract measures in general; see below.}
    \highprio{JB: we could omit the statement on the expressiveness by defining the optimization problem in~\eqref{eq: trust region} directly over path measures. Lorenz: True. I tend to leave it like it is in order to focus more on optimal control -- I think that would fir better to the story. We could remark though that everything works for abstract measures.}
    \highprio{Lorenz: This proof relies on the KL divergence, right? I'm wondering if it can also be done with a general divergence, but one would get different formulas then, right? (This might be relevant for \Cref{rem: trust regions for general measures}.}
    \highprio{Lorenz: Do we need to change something here due to $u_0 \neq 0$? JB: for the 1st or 2nd part?}
\vspace{-0.1em}\end{proof}
    \begin{remark} Using the left side of \eqref{eq: unnormalized optimal iterates and log Z}, we can rewrite the normalized version of \eqref{eq: unnormalized incremental} as
    \begin{align}
        \frac{\dd \P^{u_{i+1}}}{\dd \P}(X) & = \frac{1}{\widetilde{\Z}_{i+1}(X_0)}\left(\frac{\dd{\Q}}{\dd \P}(X)\right)^{\frac{1}{1+\lambda_i}}
        \left(\frac{\dd{\widetilde{\P}}^{u_i}}{\dd \P}(X)\right)^{\frac{\lambda_i}{1+\lambda_i}} \\ &= \frac{1}{\widehat{\Z}_{i+1}(X_0)}\left(\frac{\dd{\Q}}{\dd \P}(X)\right)^{\frac{1}{1+\lambda_i}}
        \left(\frac{\dd{{\P}}^{u_i}}{\dd \P}(X)\right)^{\frac{\lambda_i}{1+\lambda_i}}.
    \end{align}
    with $\widehat{\Z}_{i+1}={\widetilde{\Z}_{i+1}}/\widetilde{\Z}^{\frac{\lambda_i}{1+ \lambda_i}}_{i}$.        
    \end{remark}

\section{Further related works, broader impact, and limitations}
\label{app:broader_impact_limitations}

\subsection{Further related works}
\label{app:related_works}

\paragraph{Monte Carlo estimator.} In theory, one could directly compute the optimal control using the representations in~\Cref{prop: tr_soc_optimality} (for $\lambda=0$ and $i=0$; see~\Cref{it:soc_sol} in~\Cref{thm:soc_optimality}) combined with Monte Carlo estimates\footnote{One can obtain derivative estimates using adjoint states (as defined in~\Cref{sec:learning_constrained_control}) or using reparametrization tricks if the uncontrolled process has suitable, known marginals. 
For Gaussian marginals, one can also use Stein's lemma~\cite{huang2021schrodinger}. We further note that control variates for such estimators have been analyzed in~\cite{sabate2021unbiased,richter2022robust}.} of the value function in~\Cref{it:V} in~\Cref{thm:soc_optimality}~\cite{huang2021schrodinger,vargas2023bayesian,huang2023monte,ding2023sampling,tan2023noise}. However, in practice this can be problematic since it requires a large amount of samples \emph{for each state} $x$ due to the (typically) very high variance of the estimator for $V$~\cite{vargas2023bayesian}. In particular, we note that the variance translates to a bias in the control due to the logarithmic transformation. Moreover, for nonzero $f$ or general $b$ (e.g., in the fine-tuning setting), one needs to \emph{simulate} the uncontrolled process to obtain samples.
\highprio{JB: talk about dynamic programming/policy iteration?}

\paragraph{PDE solver.} One can also leverage the representation of the value function as the solution of an HJB equation (see~\Cref{it:hjb} in~\Cref{thm:soc_optimality}). While solving PDEs in high dimensions is very challenging, there exist scalable approaches based on tensor trains and neural networks\footnote{Note that some of these approaches correspond to regressions of the Monte Carlo estimators mentioned above~\cite{vargas2023bayesian} or to the IDO methods mentioned below~\cite{sun2024dynamical}.} that leverage backward stochastic differential equations or the Hopf-Cole transform in combination with the Feynman-Kac formula~\cite{han2018solving,berner2020numerically,richter2021thesis,beck2021solving,richter2022robust,richter2023continuous,richter2021solving,akhound2024iterated}. However, in practice, we only need the value function in the domain where the optimal path measure has sufficiently large values, which is typically not considered for PDE solvers. 

\paragraph{Iterative diffusion optimization.} 
To focus more on promising regions of the path space, methods for iterative diffusion optimization simulate {(sub-)trajectories} of the controlled SDE to compute a suitable loss and update the control. Typically, the control is parametrized as a neural network and optimized using variants of stochastic gradient descent. While such methods have been explored for general SOC problems with quadratic control costs~\cite{nuesken2021solving,richter2021thesis,domingoenrich2024stochastic,domingo2024taxonomy}, many recent works have focused on the special case of sampling from unnormalized densities as described in~\Cref{sec:application sampling}; see, e.g.,~\cite{zhang2021path,berner2022optimal,vargas2023bayesian,zhangdiffusion,vargas2024transport,sendera2024improved,noble2024learned,albergo2024nets}. From the perspective of path measures, all these works propose to minimize suitable divergences between measures induced by controlled SDEs. While we demonstrate the benefits of leveraging trust region methods for the \emph{Denoising Diffusion Sampler} (DDS)~\cite{vargas2023denoising}, our method could also be extended to other samplers. 

\paragraph{Transition path sampling.} Transition path sampling has been a longstanding problem in physics and chemistry to understand phase transitions and chemical reactions, with applications in energy, catalysis, and drug discovery~\cite{bolhuis2002transition,vanden2010transition}. Computationally, MCMC-based approaches have been extended to path space to mix the transition path distribution, pioneered by~\cite{dellago1998efficient}. As discussed in \Cref{sec:tps}, transition path sampling can be formulated as a stochastic optimal control problem and has been numerically solved using reverse KL divergence~\cite{yan2022learning}, cross-entropy divergence~\cite{holdijk2023stochastic}, and log-variance divergence~\cite{seong2024transition}; the optimal control is known to be the Doob's $h$-function~\cite{chetrite2015variational,singh2023variational,dudoob2024} (for a review, we refer to~\cite{singh2025variational}). To solve the Doob's $h$-function, \cite{singh2023variational} proposes a shooting-based method which requires MD simulation to reach the target state, while~\cite{dudoob2024} proposes a Gaussian approximation conditioned on both the initial and target state which satisfies boundary conditions by design and provides a simulation-free optimization algorithm. Similarly to SOC, transition path sampling can also naturally be formulated as a reinforcement learning problem, as shown in~\cite{das2021reinforcement,rose2021reinforcement}. 

\paragraph{Diffusion and flow matching reward fine-tuning.} Several of the early works on diffusion fine-tuning focused on directly optimizing the reward model making use of its differentiability \cite{xu2023imagereward,clark2024directly}, without any KL regularization, which can lead to ``reward hacking''. Some other works \cite{black2024training,fan2023dpok} framed reward fine-tuning as a reinforcement learning problem, but did not make the probabilistic connection to tilted distributions. \cite{uehara2024finetuning} provides a probabilistic view of the problem, but proposes an algorithm that is hard to scale. \cite{domingoenrich2025adjoint} gives a comprehensive view of flow matching reward fine-tuning, introducing memoryless noise schedules as the right ones, as well as a new scalable SOC algorithm that we use and adapt, namely adjoint matching. Using the memoryless noise schedule, a recent work \cite{liu2025flow} considers GRPO for flow matching fine-tuning. \cite{zhang2024improving,liu2025efficient} consider alternative algorithms that learn the value functions.

\paragraph{Diffusion-based sampling from unnormalized densities.}
Early work on sampling from unnormalized densities based on a Schr\"odinger-F\"ollmer diffusions dates back to~\cite{dai1991stochastic} and was later implemented using Monte Carlo \cite{huang2021schrodinger,ding2023sampling} and deep learning approaches  \cite{zhang2021path,richter2021thesis,vargas2023bayesian}. Another line of work is based on Langevin diffusions \cite{Thin:2021,ZhangAIS2021,doucet2022annealed,geffner2021mcmc,geffner2023langevin} and denoising diffusion models based on Ornstein-Uhlenbeck processes 
\cite{berner2022optimal,vargas2023denoising,huang2023monte}. A unifying perspective was proposed in \cite{richter2023improved,vargas2024transport}, which consider general diffusion bridges. An extension based on underdamped diffusion processes was later proposed by \cite{blessing2025underdamped}.
Recent developments have led to improved loss functions and training schemes  
\cite{zhangdiffusion,choi2025non,liu2025adjoint,havens2025adjoint,akhound2024iterated,he2024training,ouyang2024bnem,erivescontinuously,wang2025importance,yoon2025value,sanokowski2025rethinking,gritsaev2025adaptive,sun2024dynamical,shi2024diffusion,albergo2024nets,berner2025discrete}, exploration capabilities \cite{kim2024adaptive,blessing2025end,kim2025scalable}, or normalizing constant estimation \cite{guo2025complexity,he2025feat}. Other studies focus on the combination of MCMC and diffusion-based sampling methods 
\cite{chen2024sequential,zhang2025generalised,zhang2025accelerated,akhound2025progressive,sendera2024improved,rissanen2025progressive,wu2025reverse}.
Approaches for discrete state spaces have been proposed in \cite{holderrieth2025leaps,sanokowski2025scalable,zhu2025mdnsmaskeddiffusionneural}. Combinations of diffusion-based sampling with additional access to ground truth data have been studied in \cite{noble2024learned,tan2025scalable}. Lastly, \cite{grenioux2025improving, blessing2024beyond} study improved evaluation techniques.

\highprio{JB: describe different approaches for the KL case as in Carles' taxonomy paper (in particular adjoint matching).}

\subsection{Limitations}
While our method for solving stochastic optimal control problems exhibits strong sample efficiency, it relies on storing entire trajectories in the replay buffer during training. In large-scale settings -- such as fine-tuning text-to-image models -- this necessitates keeping the replay buffer in CPU memory while training occurs on the GPU. This separation introduces additional computational overhead due to data transfers between CPU and GPU; however, the buffer still significantly accelerates the fine-tuning since the main computational cost in such settings stems from the simulation of trajectories.

\subsection{Broader impact}
This paper proposes new methodologies and theories that find numerical solutions for stochastic optimal control problems ranging from equilibrium sampling, transition path sampling, to fine-tuning text-to-image generative models. Equilibrium sampling and transition path sampling are important in Bayesian statistics, physics and chemistry where they can be used to estimate free energy, understand phase transition and rare events, thus holding promises to accelerate drug and material discovery. More efficient fine-tuning of text-to-image models democratizes the generation of specialized high-quality visual content for creative applications. However, these capabilities also introduce risks such as the potential for generating convincing misinformation or deepfakes.

\section{Background on SOC}
\label{app:soc}

\subsection{Stochastic optimal control} \label{sec: stochastic optimal control}
In this work, we consider stochastic optimal control (SOC) problems of the form
\begin{align}
\label{eq:soc_obj app}
    & \min_{u \in \mathcal{U}} \ \mathcal{L}_{\mathrm{SOC}}(u) = \min_{u \in \mathcal{U}}\  \E \left[ \int_0^T \left( \tfrac{1}{2} \|u(X^u_s,s)\|^2 + f(X^u_s,s) \right) \, \dd s + g(X^u_T)\right],
\end{align}
with state costs $f$, terminal costs $g$ and control function $u \in \mathcal{U}$, where $\mathcal{U}$ denotes a set of admissible controls; see \Cref{sec: technical assumptions} for further details. Here, $X^u$ is a controlled SDE of the form
\begin{equation}
\label{eq: controlled SDE app}
    \dd X^u_s = (b + \sigma u)(X^u_s,s) \dd s + \sigma(s) \dd W_s, \quad X_0 \sim p_0,
\end{equation}
with base drift $b$, base distribution $p_0$ (typically a Gaussian or dirac delta distribution), and diffusion coefficient $\sigma$. We denote the path measure induced by \eqref{eq: controlled SDE app} by $\P^u \in \mathcal{P}$. Moreover, we simply write $\P$ for the path measure corresponding to the uncontrolled process, i.e.,
\begin{equation}
\label{eq:uncontrolledSDE app}
    \dd X_s = b(X_s,s) \dd s + \sigma(s) \dd W_s, \quad X_0 \sim p_0.
\end{equation}
Given a time $t$ and state $x$, the cost functional $J(u;x,t)$ is the expected cost-to-go for a control $u$ on the time interval $[t,T]$ and is defined as 
\begin{equation}
\label{eq: cost to go app}
    J(u;x,t) =  \E \left[ \int_t^T  \left( \tfrac{1}{2} \|u(X^u_s,s)\|^2 + f(X^u_s,s) \right)\, \dd s + g(X^u_T)\ \Big| \ X^u_t = x\right].
\end{equation}
The value function $V$, or, \textit{optimal cost-to-go} is obtained by taking the infimum over all controls in $\mathcal{U}$, that is,
\begin{equation}
\label{eq: value from J app}
    V(x,t) = \inf_{u \in \mathcal{U}} \ J(u;x,t).
\end{equation}
Then we have the following well-known results on representations of the value function $V$ and solution to the SOC problem $u^*$; see, e.g.,~\cite{nuesken2021solving,pavon1989stochastic,dai1991stochastic,fleming2006controlled,pham2009continuous} for details.
\highprio{JB: it would be good to present this result for more general HJB PDEs since we need this for the Lagrangian SOC problem later.}
\begin{theorem}[Optimality for SOC Problems]
\label{thm:soc_optimality}
Let us define the work functional as 
\begin{equation}
\label{eq: work functional}
    \mathcal{W}(X,t) = \int_t^T f(X_s,s) \, \dd s + g(X_T).
\end{equation}
Then we have the following representations of the value function $V$ in~\eqref{eq: value from J app} and the solution $u^*$ to the SOC problem in~\eqref{eq:soc_obj app}:
\begin{enumerate}
    \item \label{it:soc_sol} \textit{(Connection between solution and value function)} The solution can be written as
$ 
    u^* = - \sigma^{\top} \nabla V.
$ 
\item \label{it:opt_change_of_measure} \textit{(Optimal change of measure)} The Radon-Nikodym derivative of the optimal path measure $\Q$ w.r.t.\@ the uncontrolled path measure $\P$ satisfies
\begin{equation}
    \frac{\dd \Q}{\dd \P}(X) = \frac{e^{-\mathcal{W}(X,0)}}{\Z(X_0)} \quad \text{with} \quad \Z(X_0) = \E \big[e^{-\mathcal{W}(X,0)}|X_0\big].
\end{equation}
\item \label{it:hjb} \textit{(PDE for value function)} The value function $V$ is the solution to the Hamilton-Jacobi-Bellman (HJB) equation
\begin{align}
\label{eq: HJB equation}
    (\partial_t + L) V(x,t) - \tfrac{1}{2} \| (\sigma^{\top} \nabla V) (x,t) \|^2 + f(x,t) = 0, \quad V(x,T) = g(x),
\end{align}
where $L := \frac{1}{2} \sum_{i,j=1}^{d}  (\sigma \sigma^{\top})_{ij} \partial_{x_i} \partial_{x_j} + \sum_{i=1}^{d} b_i \partial_{x_i}$ denotes the infinitesimal generator of the uncontrolled SDE in~\eqref{eq:uncontrolledSDE app}.
\item \label{it:V} \textit{(Estimator for value function)} For every $(x,t)\in\R^d \times [0,T]$ the value function can be written as 
$
V(x,t) = - \log \mathbb{E} \big[ e^{-\mathcal{W}(X,t)}\big| X_t = x \big],
$
where $X$ is the solution of the uncontrolled SDE in~\eqref{eq:uncontrolledSDE app}.
\end{enumerate}
\end{theorem}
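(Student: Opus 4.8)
\textbf{Proof plan for Theorem~\ref{thm:soc_optimality}.}
The plan is to prove the four representations in a logically economical order, treating the HJB equation (item~\ref{it:hjb}) as the central object and deriving the other three from it. First I would establish the HJB equation via the dynamic programming principle: writing $V(x,t) = \inf_u \E[\int_t^{t+h}(\tfrac12\|u\|^2 + f)(X^u_s,s)\,\dd s + V(X^u_{t+h}, t+h) \mid X^u_t = x]$, applying It\^o's formula to $V(X^u_s,s)$ on $[t,t+h]$, dividing by $h$ and letting $h\to 0$, one obtains $\inf_{c\in\R^d}\big((\partial_t + L)V + \sigma^\top\nabla V \cdot c + \tfrac12\|c\|^2\big) + f = 0$. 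The inner infimum over the constant control value $c$ is a quadratic that is minimized at $c^* = -\sigma^\top\nabla V$, and substituting this back gives both the HJB equation $(\partial_t + L)V - \tfrac12\|\sigma^\top\nabla V\|^2 + f = 0$ with terminal condition $V(\cdot,T)=g$, and simultaneously the pointwise form of item~\ref{it:soc_sol}, namely $u^* = -\sigma^\top\nabla V$. The rigorous direction (verification) runs in reverse: given a sufficiently regular solution $V$ of the HJB PDE, apply It\^o to $V(X^u_s,s)$ for an arbitrary admissible $u$, use the PDE to show the drift term is $\geq -(\tfrac12\|u\|^2 + f)(X^u_s,s)$ with equality iff $u = -\sigma^\top\nabla V$, take expectations (the stochastic integral vanishes as a martingale under the linear-growth assumptions on $\mathcal{U}$, $b$, $\sigma$), and conclude $J(u;x,t)\geq V(x,t)$ with equality for the claimed optimizer.

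Next I would derive item~\ref{it:V}, the Feynman--Kac / Hopf--Cole representation. Setting $\phi(x,t) := e^{-V(x,t)}$, a direct computation shows that the nonlinear HJB equation for $V$ becomes the \emph{linear} PDE $(\partial_t + L)\phi - f\phi = 0$ with $\phi(x,T) = e^{-g(x)}$: indeed $\partial_t\phi = -\phi\,\partial_t V$, $\nabla\phi = -\phi\nabla V$, and the second-order terms produce exactly the $+\tfrac12\|\sigma^\top\nabla V\|^2\phi$ contribution that cancels the quadratic term in the HJB equation. The Feynman--Kac formula applied to this linear parabolic PDE along the \emph{uncontrolled} SDE $X$ then gives $\phi(x,t) = \E[e^{-\mathcal{W}(X,t)}\mid X_t = x]$ with $\mathcal{W}(X,t) = \int_t^T f(X_s,s)\,\dd s + g(X_T)$, i.e. $V(x,t) = -\log\E[e^{-\mathcal{W}(X,t)}\mid X_t=x]$, which is item~\ref{it:V}.

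Finally, item~\ref{it:opt_change_of_measure} follows by combining Girsanov's theorem (Theorem~\ref{theorem: girsanov}) with the Hopf--Cole identity. The optimal control is $u^* = -\sigma^\top\nabla V = \sigma^\top\nabla\log\phi$, so by the change-of-measure identity (i) of the corollary following Theorem~\ref{theorem: girsanov}, $\log\frac{\dd\Q}{\dd\P}(X^{u^*})$ can be written in terms of $u^*$; alternatively, and more cleanly, one verifies directly that $e^{-\mathcal{W}(X,0)}/\Z(X_0)$ defines a valid Radon--Nikodym density (it is nonnegative and integrates to $1$ conditionally on $X_0$ by the definition of $\Z$) and that the SDE solved under the tilted measure has drift $b + \sigma\sigma^\top\nabla\log\phi = b + \sigma u^*$ — this last step uses an $h$-transform / Doob-type argument, expressing $\phi(X_s,s) = \E[e^{-\mathcal{W}(X,s)}\mid X_s]$ as a martingale transform and reading off the new drift via Girsanov. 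I expect the main obstacle to be the regularity bookkeeping: the dynamic programming argument and the verification theorem both implicitly require $V \in C^{2,1}$ (or a viscosity-solution treatment) and integrability of the exponential functionals $e^{-\mathcal{W}}$, which is where the ellipticity and linear-growth assumptions of Section~\ref{sec: technical assumptions} enter; since the statement says ``well-known'' and cites standard references, the cleanest route is to invoke the verification theorem from \cite{fleming2006controlled,pham2009continuous} rather than re-proving existence and regularity of $V$ from scratch.
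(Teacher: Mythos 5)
Your proposal is correct and follows essentially the same route as the paper's own (sketched) argument: the verification theorem with It\^o's formula and completion of squares for items (i) and (iii), and the Hopf--Cole transform $\phi = e^{-V}$ linearizing the HJB equation so that Feynman--Kac yields item (iv). The only addition is your explicit Doob-$h$-transform/Girsanov treatment of item (ii), which the paper's proof sketch leaves to the cited references but which you carry out correctly.
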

\highprio{JB: add proof
\begin{proof}
We provide a sketch of the main arguments and refer to~\cite{nuesken2021solving,pavon1989stochastic,dai1991stochastic,fleming2006controlled,pham2009continuous} for details.
The verification theorem states that if a function $V$ satisfies the HJB equation under appropriate regularity conditions, then $V$ corresponds to the value function \eqref{eq: value from J app} associated with the optimization problem \eqref{eq:soc_obj app}. Using Ito's formula for $V(X^u_s,s)$, a key consequence of this theorem is that for any control \( u \in \mathcal{U} \), it holds that
\begin{align} \label{eq:V_J appendix}
    V(x,t) + \mathbb{E} \left[ \frac{1}{2} \int_t^{T} \| \sigma^{\top} \nabla V + u \|^2 (X^u_s, s) \, \mathrm{d}s \, \bigg| \, X^u_t = x \right] = J(u;x,t).
\end{align}
This result implies that the optimal control is uniquely determined in terms of the value function and is given by  
\begin{equation}
\label{eq: optimal u from V appendix}
    u^*(x,t) = - \sigma(t)^{\top} \nabla V(x,t).
\end{equation}
Using forward-backward SDEs\footnote{Alternatively, one can use the Feynman-Kac formula for the Kolmogorov PDE arising from applying the Hopf-Cole transform $V \mapsto \exp(V)$ to the HJB equation.}, one can derive an explicit expression for the value function $V$ as
\begin{align} \label{eq:V appendix}
    V(x,t) = - \log \mathbb{E} \left[ e^{-\mathcal{W}(X,t)} \Big| X_t = x \right],
\end{align}
where $X_t$ is the solution of the uncontrolled SDE \eqref{eq:uncontrolledSDE app}.
\end{proof}}

Combining the expressions for $u^*$ and $V$ in~\Cref{thm:soc_optimality}, we directly obtain the path integral representation of the optimal control, i.e.,
\begin{align} \label{eq: path integral u appendix}
    u^*(x,t)  = \sigma(t)^{\top} \nabla_x \log \mathbb{E} \left[ e^{-\mathcal{W}(X,t)} \Big| X_t = x \right],
\end{align} 
In practice, computing the optimal control \eqref{eq: path integral u appendix} is typically impractical, as it requires running multiple simulations for each state $x$ to obtain a Monte Carlo approximation of the expectation; see~\Cref{app:related_works}. To address this challenge, many approaches instead learn a parameterized control function, optimized using stochastic gradient methods. These techniques are collectively referred to as iterative diffusion optimization (IDO) methods and are further discussed in the next section.

\subsection{Iterative diffusion optimization} \label{sec: iterative diffusion optimization}
An alternative view on problem \eqref{eq:soc_obj app} is obtained by considering loss functions on path measures \cite{nuesken2021solving}. By the Girsanov theorem (see~\Cref{sec: useful identities}) we have
\begin{equation}
\label{eq:change_of_measure_u_to_0 appendix}
    \frac{\dd \P}{\dd \P^{u}}(X^u)  = \exp\left({-\int_0^T u(X^u_s,s) \cdot \dd W_s - \frac{1}{2}\int_0^T \|u(X^u_s,s)\|^2 \dd s}\right).
\end{equation}
Combining this with the optimal change of measure $\dd \Q / \dd \P$ from~\Cref{thm:soc_optimality}, we obtain an expression for $\dd \Q /\dd \P^{u}$, from which we can compute the relative entropy $\mathcal{L}_{\mathrm{RE}}$, i.e., the reverse Kullback-Leibler (KL) divergence 
\begin{equation}
\label{eq:relative_entropy appendix}
    \mathcal{L}_{\mathrm{RE}}(u) = \KL(\P^u|\Q) = \E \left[ \int_0^T \left( \tfrac{1}{2} \|u(X^u_s,s)\|^2 + f(X^u_s,s) \right) \dd s + g(X^u_T) + \log \Z(X^u_0)\right].
\end{equation}
Note that minimizing the stochastic optimal control problem in \eqref{eq:soc_obj app} is equal to minimizing the KL divergence, that is,
\begin{equation}
\label{eq:re_equals_soc appendix}
    u^* = \argmin_{u \in \mathcal{U}} \ \mathcal{L}_{\mathrm{SOC}}(u) = \argmin_{u \in \mathcal{U}} \ \mathcal{L}_{\mathrm{RE}}(u),
\end{equation}
in the sense that both have the same unique optimal control $u^*$ as a minimizer. As such, we can consider an arbitrary divergence $D: \mathcal{P}\times \mathcal{P}\rightarrow \R^+$, for which $D(\P_1|\P_2) = 0$ holds if and only if $\P_1 = \P_2$, to solve stochastic optimal control problems. More generally, we can consider any loss function for which the unique minimizer is the optimal control $u^*$. Iterative diffusion optimization builds on this perspective and can be seen as a common framework for solving (potentially high-dimensional) SOC problems by leveraging parameterized control functions and stochastic gradient methods to minimize different loss functions. 

\subsection{On the initial value dependence of the normalizing constant}
\label{app: initial value dependence Z}

In general, the normalizing constant $\mathcal{Z}(X_0)$ in the optimal change of measure \eqref{eq:opt_change_of_measure} depends on the initial value $X_0$. Let us demonstrate in the following why this is the case. To this end, let us first assume a generic normalization constant $\Z$ that may or may not depend on $X_0$. As in \eqref{eq:opt_change_of_measure}, it then holds
\begin{equation}
    \frac{\dd \Q}{\dd \P}(X) = \frac{e^{-\mathcal{W}(X,0)}}{\Z}.
\end{equation}
We can then compute
\begin{equation}
    \frac{\Q_0(X_0)}{\P_0(X_0)} = \E\left[\frac{\dd \Q}{\dd \P}(X)\bigg|X_0\right] = \E\left[\frac{e^{-\mathcal{W}(X,0)}}{\Z}\bigg|X_0\right].
\end{equation}
Now, for a chosen $p_0 = \P_0$ we want that $\Q_0(X_0) = \P_0(X_0)$, which requires
\begin{equation}
    \Z = \E\left[{e^{-\mathcal{W}(X,0)}}\Big|X_0\right] = e^{-V(X_0,0)}.
\end{equation}
Clearly, the right-hand side depends on $X_0$. Hence, in general, $\Q_0(X_0) = \P_0(X_0)$ can only hold if $\Z$ depends on $X_0$. Conversely, if we wanted to have a global normalizing constant $\Z$, which is independent of $X_0$, we would need to tilt the initial marginal of $\Q$ as well, namely via
\begin{equation}
    \Q_0(X_0) = \P_0(X_0) \frac{\E[e^{-\mathcal{W}(X, 0)} | X_0]}{\Z} = \P_0(X_0) \frac{e^{-V(X_0, 0)}}{\Z}.
\end{equation}
However, the function $V(\cdot, 0)$ is typically not known in practice.

\section{Details on trust region SOC algorithms}
\label{app:trust_regions}

\subsection{Characterizing the solutions of the trust region optimization problem}
\label{app:solution_to_tr_opt}
\begin{proposition}[Characterizing the solutions of the trust region optimization problem]
    The solution $\P^{u_{i+1}}$ of the problem \eqref{eq:p_i_plus_one_p_i} is unique and it satisfies the following:
    \begin{itemize}
        \item If $
        \KL(\Q | \P^{u_i})
        \leq \varepsilon$, 
        then $\P^{u_{i+1}} = \Q$.
        \item If $
        \KL(\Q |\P^{u_i})
        \geq \varepsilon$, then $\KL(\P^{u_{i+1}} | \P^{u_i}) = \varepsilon$, i.e. $\P^{u_{i+1}}$ is also the unique solution of the problem
        \begin{equation} \label{eq:p_i_plus_one_p_i_equality}
            \argmin_{u \in \mathcal{U}} \KL\left(\P^u | \Q \right) \quad \text{s.t.} \quad \KL(\P^u | \P^{u_i}) = \varepsilon.
        \end{equation}
    \end{itemize}
\end{proposition}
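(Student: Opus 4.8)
The plan is to exploit the strict convexity of $\KL(\cdot \,|\, \Q)$ in its first argument together with the convexity of the constraint set, and to handle the two cases (interior optimum vs.\ boundary optimum) separately via a standard Lagrangian / line-segment argument. First I would record that the feasible set $\mathcal{C}_\varepsilon := \{\P \in \mathcal{P} : \KL(\P\,|\,\P^{u_i}) \le \varepsilon\}$ is convex (since $\P \mapsto \KL(\P\,|\,\P^{u_i})$ is convex) and that $\P \mapsto \KL(\P\,|\,\Q)$ is strictly convex along line segments whenever the two endpoints are distinct and mutually absolutely continuous; this already gives uniqueness of the minimizer $\P^{u_{i+1}}$, provided a minimizer exists (existence follows from lower semicontinuity of KL and the assumption that $\mathcal{U}$ is rich enough to represent the relevant path measures, as elsewhere in the paper).

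For the first bullet, suppose $\KL(\Q\,|\,\P^{u_i}) \le \varepsilon$. Then $\Q$ itself lies in $\mathcal{C}_\varepsilon$, and since $\KL(\P\,|\,\Q) \ge 0$ with equality iff $\P = \Q$, the measure $\Q$ is the unique global minimizer of $\KL(\cdot\,|\,\Q)$ over \emph{all} of $\mathcal{P}$, hence a fortiori over $\mathcal{C}_\varepsilon$. Thus $\P^{u_{i+1}} = \Q$.

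For the second bullet, suppose $\KL(\Q\,|\,\P^{u_i}) > \varepsilon$, so $\Q \notin \mathcal{C}_\varepsilon$ and in particular $\P^{u_{i+1}} \neq \Q$. I claim the constraint is active, i.e.\ $\KL(\P^{u_{i+1}}\,|\,\P^{u_i}) = \varepsilon$. Suppose not, so $\KL(\P^{u_{i+1}}\,|\,\P^{u_i}) < \varepsilon$, and consider the segment $\P_\theta := (1-\theta)\P^{u_{i+1}} + \theta \Q$ for small $\theta > 0$. By convexity of $\KL(\cdot\,|\,\P^{u_i})$, the value $\KL(\P_\theta\,|\,\P^{u_i})$ is bounded above by $(1-\theta)\KL(\P^{u_{i+1}}\,|\,\P^{u_i}) + \theta \KL(\Q\,|\,\P^{u_i})$, which is still strictly below $\varepsilon$ for $\theta$ small enough; hence $\P_\theta \in \mathcal{C}_\varepsilon$. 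On the other hand, by strict convexity of $\KL(\cdot\,|\,\Q)$ along this segment (using that $\P^{u_{i+1}} \neq \Q$), we get $\KL(\P_\theta\,|\,\Q) < (1-\theta)\KL(\P^{u_{i+1}}\,|\,\Q) + \theta \cdot 0 \le \KL(\P^{u_{i+1}}\,|\,\Q)$, contradicting optimality of $\P^{u_{i+1}}$. Hence the constraint is active. Finally, since $\P^{u_{i+1}}$ minimizes $\KL(\cdot\,|\,\Q)$ over $\mathcal{C}_\varepsilon$ and in fact lies on the boundary $\{\KL(\cdot\,|\,\P^{u_i}) = \varepsilon\}$, and the latter is contained in $\mathcal{C}_\varepsilon$, it is also a minimizer over the smaller equality-constrained set in~\eqref{eq:p_i_plus_one_p_i_equality}; uniqueness there follows again from strict convexity of the objective restricted to any segment inside that set.

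The main obstacle I anticipate is not the convexity bookkeeping but making the strict-convexity and absolute-continuity statements rigorous in the path-measure setting: one needs $\P^{u_{i+1}}$, $\Q$, and $\P^{u_i}$ to be mutually absolutely continuous (so that the relevant KL divergences are finite and the segment argument does not degenerate), and one needs the line-segment mixtures $\P_\theta$ to again be representable as $\P^u$ for some admissible control $u \in \mathcal{U}$ — or, more cleanly, to phrase the whole argument over the abstract space $\mathcal{P}$ as in~\eqref{eq:p_i_plus_one_p_i} and~\Cref{rem: trust regions for general measures}, where these issues disappear. I would take the latter route and only afterwards note that the optimizers coincide with path measures of the form $\P^u$ by~\Cref{prop: Optimal change of measure as geometric annealing}.
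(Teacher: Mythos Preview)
Your argument is correct and takes a genuinely different route from the paper. For the second bullet the paper works through the KKT/complementary slackness conditions: if $\lambda=0$ and the constraint is inactive, the first-order condition for the Lagrangian collapses to that of the unconstrained relative-entropy loss, and they then invoke \cite[Prop.~2]{domingoenrich2025adjoint} to conclude the only stationary control is $u^*$, forcing $\P^{u_{i+1}}=\Q$ and the contradiction. You instead run a direct line-segment argument in measure space: push $\P^{u_{i+1}}$ slightly toward $\Q$, stay feasible by convexity of the constraint, and strictly decrease the objective. Your route is more elementary and self-contained (no KKT machinery, no external citation) and, as you note, is most naturally phrased over $\mathcal{P}$ rather than over controls; the paper's route stays closer to the control parametrization and the Lagrangian apparatus used in the rest of \Cref{sec: Constrained Stochastic Optimal Control}. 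Two minor remarks: (i) in your line-segment step you actually only need convexity of $\KL(\cdot\,|\,\Q)$ together with $\KL(\P^{u_{i+1}}\,|\,\Q)>0$ to get the strict decrease, so invoking \emph{strict} convexity is slightly stronger than necessary; (ii) the feasibility part of the segment argument tacitly uses $\KL(\Q\,|\,\P^{u_i})<\infty$, which in the SOC setting is guaranteed by Girsanov---you correctly flag this absolute-continuity requirement as the place where care is needed.
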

\begin{proof}\vspace{-0.6em}
    To prove the first case, observe that $\Q$ is the only solution of the unconstrained problem $\argmin_{\P \in \mathcal{P}} \KL\left(\P | \Q \right)$, which means that it is also the unique solution of the problem \eqref{eq:p_i_plus_one_p_i} since it satisfies the constraint $\KL(\Q | \P^{u_i}) \leq \epsilon$. \highprio{JB: we need to argue that $\mathcal{U}$ is sufficiently expressive to contain the optimal control} To prove the second case, by the Karush-Kuhn-Tucker (KKT) conditions, we have that either $\lambda = 0$, or $\KL(\P^{u_{i+1}} | \P^{u_i}) = \varepsilon$. We assume that $\lambda = 0$ and $\KL(\P^{u_{i+1}} | \P^{u_i}) < \varepsilon$ to reach a contradiction, which will imply that $\KL(\P^{u_{i+1}} | \P^{u_i}) = \varepsilon$.
    The first-order optimality condition for the problem is as follows: for any perturbation $v$ of the control $u_{i+1}$, we have that
    \begin{align}
    \begin{split}
        0 &= \frac{\dd}{\dd \eta} \left( \KL\left(\P^{u_{i+1}+\eta v} | \Q \right) + \lambda\left( \KL(\P^{u_{i+1}+\eta v} | \P^{u_i} ) - \varepsilon \right) \right) \rvert_{\eta = 0} = \frac{\dd}{\dd \eta}  \KL\left(\P^{u_{i+1}+\eta v} | \Q \right) \rvert_{\eta = 0}, 
    \end{split}
    \end{align}
    which means that $u_{i+1}$ satisfies the first-order optimality condition for the relative entropy loss $u \mapsto \KL\left(\P^{u} | \Q \right)$. By \cite[Prop.~2]{domingoenrich2025adjoint}, the only control that satisfies the first-order optimality condition for the relative entropy loss is the optimal control $u^*$, which implies that $\P^{u_{i+1}} = \Q$, which yields a contradiction because $\varepsilon >  \KL(\P^{u_{i+1}} | \P^{u_i}) = 
    \KL\left(\Q | \P^{u_i} \right)
    \geq \varepsilon$. 
    
    Hence, we conclude that $\KL(\P^{u_{i+1}} | \P^{u_i}) = \varepsilon$. To show that the solution $\P^{u_{i+1}}$ is unique, we use that $\P \mapsto \KL(\P | \P^{u_i})$ is strictly convex, and that $\{ \P | \KL(\P | \P^{u_i}) \leq \varepsilon\}$ is a convex set because it is the sublevel set of a convex mapping.
\vspace{-0.1em}\end{proof}

\subsection{Implementation}
\label{app:implementation}
We provide a detailed version of~\Cref{alg:tr sampler} in~\Cref{alg:tr sampler app}. The hyperparameters and used repositories for the experiments on unnormalized densities, transition path sampling, and fine-tuning can be found in the respective sections in~\Cref{app:sampling,appendix:transitionpath,appendix:finetuning}.

\highprio{JB: answer this question here: For each experiment, does the paper provide sufficient information on the computer resources (type of compute workers, memory, time of execution) needed to reproduce the experiments?}

\begin{figure}[t]
\vspace{-0.5em}
\begin{algorithm}[H]
\caption{\small Trust Region SOC with buffer}
\label{alg:tr sampler app}
\small
\begin{algorithmic}
\Require Neural network $u_\theta$ with parameters $\theta$, target path measure $\Q$, buffer size $K$, time discretization $S=(s_j)_{j=0}^J \subset [0,T]$, 
number of gradient steps $M$ per trust region iteration, termination threshold $\delta$
\State Initialize $i=0$ and $\lambda_0 = \infty$
\For{$i=0,1,\dots$}
\State Define $u_i=u_\theta$ (detached)
\State Simulate $K$ trajectories $(X_s^{(k)})_{s\in S}$ of the SDE in~\eqref{eq:soc_intro} with Brownian motion $W_s^{(k)}$ and control $u_i$
\State Compute importance weights $w^{(k)} = \frac{\dd\Q}{\dd\P^{u_i}}(X^{(k)}) \propto \exp(-\mathcal{W}_i(X^{(k)},0))$ as in~\eqref{eq:rnd_adjacent}
\State Initialize buffer $\mathcal{B} = \big\{ (W^{(k)}, X^{(k)}, w^{(k)}) \big\}_{k=1}^K$ 
\State Compute multiplier $\lambda_i = \argmax_{\lambda \in \R^+} \ \mathcal{L}_{\mathrm{Dual}}^{(i)}(\lambda)$ as in~\eqref{eq:dual} using $\mathcal{B}$ and a $1$-dim. non-linear solver
\If{$\lambda_i \le \delta$}
\State \Return control $u_i$ with $\P^{u_i} \approx \Q$
\EndIf
\If{adjoint matching loss}
\State Compute annealing $\beta_{i+1} = 1-\prod_{j=0}^{i} \frac{\lambda_i}{1+\lambda_i}$ as in~\Cref{prop: Optimal change of measure as geometric annealing}
\State Compute lean adjoint states $a^{(k)}_s = a_{i+1}(X_s^{(k)}, s)$, $s \in S$, as in~\eqref{eq:lean_adjoint_state} and store in $\mathcal{B}$
\EndIf
\For{$m=1,\dots,M$}
\If{adjoint matching loss}
\State Estimate $\mathcal{L}(\theta) = \E_{(X,w,a) \sim \mathcal{B}, \ s\sim \mathrm{Unif}(S)} \big[ \|
\sigma^{\top} a_s - u_\theta(X_s,s) \|^2 w^{\frac{1}{1 + \lambda_i}} \big] $ as in~\eqref{eq:lean_adjoint_matching} 
\EndIf
\If{log-variance loss}
\State Estimate $\mathcal{L}(\theta) =  \V_{(W,X,w)\sim \mathcal{B}}\big[\sum_{j=1}^J \big( \tfrac{\| \Delta_j\|^2 (s_{j} - s_{j-1})}{2}  + \Delta_j \cdot (W_{s_j} - W_{s_{j-1}}) \big) + \frac{1}{1+\lambda_i}\log w\big]$ \\ \hspace{7em} with $\Delta_j = u_i(X_{s_j}, s_{j}) - u_\theta(X_{s_j}, s_{j})$ as in~\eqref{eq:trust_region_lv} 
\EndIf
\State Perform a gradient-descent step on $\mathcal{L}(\theta)$ 
\EndFor
\EndFor
\end{algorithmic}
\end{algorithm}
\vspace{-1.5em}
\end{figure}

\subsection{Variance of the importance weights and trust region bounds} 
\label{app:variance iws}

As mentioned in \Cref{rem: controlling the variance}, one motivation of the trust region constrain $\KL(\P^{u} | \P^{u_{i}}) \le \varepsilon$ defined in \eqref{eq: constrained optimization} is to keep the variance of the importance weights between two consecutive measures $\P^{u_{i}}$ and $\P^{u_{i+1}}$ small. This can be motivated by the inequality
\begin{subequations}
\label{eq: variance KL bounds computations}
\begin{align}
    \Var_{\P^{u_i}}\left( \frac{\mathrm d \P^{u_{i+1}}}{\mathrm d \P^{u_{i}}} \right) &= \E_{\P^{u_i}}\left[\left(\frac{\mathrm d \P^{u_{i+1}}}{\mathrm d \P^{u_{i}}}\right)^2 - 1 \right] =  \E_{\P^{u_{i+1}}}\left[\frac{\mathrm d \P^{u_{i+1}}}{\mathrm d \P^{u_{i}}} - 1 \right] \\
    &\ge \exp\left(\E_{\P^{u_{i+1}}}\left[\log \frac{\mathrm d \P^{u_{i+1}}}{\mathrm d \P^{u_{i}}}\right]\right) - 1 = \exp\left(\KL(\P^{u_{i+1}} | \P^{u_{i}})\right) - 1,
\end{align}
\end{subequations}
which follows by Jensen's inequality. While a lower bound on the variance is not straight forward for path space measures (cf. \cite{hartmann2024nonasymptotic}), we can consider the following heuristics. Let us assume that 
\begin{equation}
    \frac{\dd \P^{u_{i+1}}}{\dd \P^{u_i}} \approx 1
\end{equation}
$\P^{u_{i}}$- and $\P^{u_{i+1}}$-almost surely, which is reasonable if $\KL(\P^{u_{i+1}} | \P^{u_{i}}) \le \varepsilon$ with $\varepsilon \ll 1$. By a Taylor approximation it then holds
\begin{equation}
\label{eq: taylor approx}
    \left(\frac{\dd \P^{u_{i+1}}}{\dd \P^{u_i}}\right)^2 = \exp\left(2 \log\frac{\dd \P^{u_{i+1}}}{\dd \P^{u_i}}\right) \approx 1 + 2 \log \frac{\dd \P^{u_{i+1}}}{\dd \P^{u_i}}.
\end{equation}
Now, taking expectations w.r.t. $\P^{u_{i}} \approx \P^{u_{i+1}}$, respectively, using computations similar to \eqref{eq: variance KL bounds computations}, and assuming $\KL(\P^{u_{i+1}} | \P^{u_{i}}) = \varepsilon$, as argued in \Cref{app:solution_to_tr_opt}, yields
\begin{equation}
    \Var_{\P^{u_i}}\left( \frac{\mathrm d \P^{u_{i+1}}}{\mathrm d \P^{u_{i}}} \right) \approx 2 \varepsilon.
\end{equation}

\subsection{Lagragian formulation}
\label{app:lagragian}

Using the Girsanov theorem (see~\Cref{sec: useful identities}), we first note that we can write the Lagrangian as
\begin{align}
\label{eq:lagrangian appendix}
    \mathcal{L}_{\mathrm{TR}}^{(i)}(u, \lambda) & = 
     \E \left[ \int_0^T  \tfrac{1}{2} \|u(X^u_s,s)\|^2 \dd s + \mathcal{W}(X^u,0) + \log \Z(X^u_0)\right] + \lambda \left(\KL({\mathbb{P}}^u|{\mathbb{P}}^{u_i}) - \varepsilon \right)
    \\ & = \E\left[ \int_0^T \left( \tfrac{1}{2} \|u(X^u_s,s)\|^2 + \tfrac{\lambda}{2} \|u(X^u_s,s)-u_i(X^u_s,s)\|^2 \right) \dd s  +\mathcal{W}(X^u,0) + \log \Z(X^u_0) \right] -\lambda \varepsilon
    \\ & = 
    \E\left[ \int_0^T \left( \tfrac{1+\lambda}{2} \|u(X^u_s,s)-\tfrac{\lambda}{1+\lambda}u_i(X^u_s,s)\|^2  + f_i(X^u_s,s)\right) \dd s + g(X^u_T) + \log \Z(X^u_0) \right] - \lambda \varepsilon \\
    &= \mathcal{L}^{(i)}_{\mathrm{TRC}}(u, \lambda) - \lambda \varepsilon ,
\end{align}
where $\mathcal{L}^{(i)}_{\mathrm{TRC}}(u, \lambda)$ is defined as in~\eqref{eq: soc_tr}, $\lambda\in \R^+$ is the Lagrangian multiplier for the trust region constraint, and we abbreviate $f_i \coloneqq \frac{\lambda}{2(1+\lambda)}\|u_i\|^2 +  f$. 
For fixed $\lambda$, optimizing the Lagrangian $\mathcal{L}_{\mathrm{TR}}^{(i)}(u, \lambda)$ with respect to $u$ is again an SOC problem. As such, for given $u_i$ and $\lambda$, we can define the value function as 
\begin{equation}
    V^\lambda_{i+1}(x,t) = \inf_{u \in \mathcal{U}} \E\left[ \int_t^T \left( \tfrac{1+\lambda}{2} \|u(X^u_s,s)-\tfrac{\lambda}{1+\lambda}u_i(X^u_s,s)\|^2 + f_i(X^u_s,s) \right) \dd s + g(X^u_T) |X_t = x\right].
\end{equation}
The next proposition provides representations for the value function and the solution to the SOC problem.

\begin{proposition}[Optimality for trust region SOC problems]
\label{prop: tr_soc_optimality app}
For fixed $\lambda$, let us define by 
\begin{equation*}
    V^\lambda_{i+1}(x,t) \coloneqq \inf_{u \in \mathcal{U}} \E\left[ \int_0^T \left( \tfrac{1+\lambda}{2} \|u-\tfrac{\lambda}{1+\lambda}u_i\|^2 + \tfrac{\lambda}{2(1+\lambda)}\|u_i\|^2 +  f \right)(X^u_s,s) \, \dd s+ g(X^u_T) \Bigg| X_t=x \right]
\end{equation*}
the value function of the SOC problem $\inf_{u \in \mathcal{U}} \mathcal{L}^{(i)}_{\mathrm{TRC}}(u, \lambda)$ corresponding to~\eqref{eq: soc_tr} and by $u^\lambda_{i+1}$ its solution. Then it holds that
\begin{enumerate}[label=(\roman*)]
    \item \label{it:value_fn_app} \textit{(Estimator for value function)}  $V^{\lambda}_{i+1}(x,t)  = -(1+\lambda) \log \E\left[e^{-\tfrac{1}{1+\lambda}\mathcal{W}_i(X^{u_i},t)}\Big|X^{u_i}_t=x\right]$,
    \begin{equation*}
      \text{where} \quad  \mathcal{W}_i(X^{u_i},t) = \int_t^T \tfrac{1}{2}\|u_i(X^{u_i}_s,s)\|^2\dd s + \int_t^T u_i(X^{u_i}_s,s) \cdot \dd W_s + \mathcal{W}(X^{u_i},t). 
    \end{equation*}
    \item \label{it:solution_value_fn_app} (Connection between solution and value function) It holds $u^\lambda_{i+1} = \frac{\lambda}{1 + \lambda} u_i - \frac{1}{1 + \lambda} \sigma^\top \nabla V^\lambda_{i+1}$.
\end{enumerate}
Moreover, for $u_0=\mathbf{0}$ and the optimal Lagrange multiplier $\lambda_i$, let us define the value function 
\begin{equation*}
    \widetilde{V}_{i+1}(x,t) \coloneqq \inf_{u \in \mathcal{U}} \E\left[ \int_0^T \left( \tfrac{1}{2} \|u\|^2 +  \beta_{i+1} f \right)(X^u_s,s) \, \dd s+ \beta_{i+1} g(X^u_T) \Bigg| X_t=x \right]
\end{equation*}
of the SOC problem given by the optimal change of measure 
\begin{equation}
    \frac{\dd {\P}^{u_{i+1}}}{\dd{\mathbb{P}}}(X) = \frac{1}{\widetilde{\Z}_{i+1}(X_0)}\left(\frac{\dd \Q}{\dd{\mathbb{P}}}(X)\right)^{\beta_{i+1}} = \frac{e^{-\beta_{i+1}\mathcal{W}(X,0)}}{\widetilde{\Z}_{i+1}(X_0)}
\end{equation}
as in~\Cref{prop: Optimal change of measure as geometric annealing} and~\eqref{eq:opt_change_of_measure}, and $\widetilde{\Z}_{i+1}(X_0)$ as defined in \eqref{eq: unnormalized optimal iterates and log Z}. Then it holds that
\begin{enumerate}[label=(\roman*), start=3]
    \item \textit{(Estimator for value function)} \label{it:value_fn_app_rec_app} $\widetilde{V}_{i+1}(x,t) = -\log \E \left[e^{-\beta_{i+1}\mathcal{W}(X_t,t)}|X_t=x\right]$,
    \item \label{it:solution_value_fn_rec_app} (Connection between solution and value function) $u_{i+1} = u^{\lambda_i}_{i+1} = -\sigma^{\top}\nabla \widetilde{V}_{i+1}$.
\end{enumerate}

\end{proposition}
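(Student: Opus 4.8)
The plan is to prove statements (i) and (ii) directly via the verification theorem for the (slightly generalized) Hamilton-Jacobi-Bellman equation associated with the SOC problem $\inf_{u \in \mathcal{U}} \mathcal{L}^{(i)}_{\mathrm{TRC}}(u,\lambda)$, and then to deduce (iii) and (iv) as the special case corresponding to the optimal multiplier $\lambda_i$, using \Cref{prop: Optimal change of measure as geometric annealing} to identify the exponent. First I would write out the HJB equation for $V^\lambda_{i+1}$: the running cost is $c_\lambda(x,u,s) = \tfrac{1+\lambda}{2}\|u - \tfrac{\lambda}{1+\lambda}u_i\|^2 + \tfrac{\lambda}{2(1+\lambda)}\|u_i\|^2 + f$, so the Hamiltonian minimization $\inf_{u}\big(c_\lambda + (\sigma u)\cdot\nabla V\big)$ is a quadratic in $u$ whose minimizer is precisely $u^\lambda_{i+1} = \tfrac{\lambda}{1+\lambda}u_i - \tfrac{1}{1+\lambda}\sigma^\top\nabla V^\lambda_{i+1}$, giving (ii) immediately from the first-order condition. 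Substituting this back, the cross terms in $u_i$ combine so that the HJB equation reads $(\partial_t + L)V^\lambda_{i+1} - \tfrac{1}{2(1+\lambda)}\|\sigma^\top\nabla V^\lambda_{i+1}\|^2 + u_i\cdot \sigma^\top \nabla V^\lambda_{i+1} + \tfrac{1}{2}\|u_i\|^2 + f = 0$ with terminal condition $V^\lambda_{i+1}(\cdot,T) = g$; note that $L + u_i\cdot\sigma^\top\nabla$ is exactly the generator $L^{u_i}$ of the controlled process $X^{u_i}$, so this is a standard HJB equation driven by $X^{u_i}$ with effective running cost $\tfrac12\|u_i\|^2 + f$ and an effective "temperature" $1+\lambda$.

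For statement (i), I would apply the Hopf–Cole transform $V^\lambda_{i+1} = -(1+\lambda)\log\psi$. A short computation shows that the nonlinear HJB equation becomes the linear Kolmogorov backward equation $(\partial_t + L^{u_i})\psi - \tfrac{1}{1+\lambda}\big(\tfrac12\|u_i\|^2 + f\big)\psi = 0$ with $\psi(\cdot,T) = e^{-g/(1+\lambda)}$. By the Feynman–Kac formula, $\psi(x,t) = \E\big[e^{-\tfrac{1}{1+\lambda}(\int_t^T (\tfrac12\|u_i\|^2 + f)(X^{u_i}_s,s)\,\dd s + g(X^{u_i}_T))}\,\big|\,X^{u_i}_t = x\big]$. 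It remains to recognize the exponent: since $\mathcal{W}(X^{u_i},t) = \int_t^T f(X^{u_i}_s,s)\,\dd s + g(X^{u_i}_T)$, we would need the Girsanov-type identity $\int_t^T\tfrac12\|u_i\|^2\,\dd s + \int_t^T u_i\cdot\dd W_s = \int_t^T\tfrac12\|u_i(X^{u_i}_s,s)\|^2\,\dd s$ along $X^{u_i}$ — which holds because along its own trajectory $\dd W_s$ and $\dd s$ contributions recombine — wait, more carefully: $\mathcal{W}_i(X^{u_i},t)$ as defined contains the extra stochastic integral $\int_t^T u_i\cdot\dd W_s$, so the correct statement is that $\E[e^{-\tfrac{1}{1+\lambda}\mathcal{W}_i(X^{u_i},t)}\,|\,X^{u_i}_t = x]$ equals $\psi(x,t)$ because the martingale part integrates out against the conditional expectation; alternatively and more cleanly, one uses the change of measure $\tfrac{\dd\P}{\dd\P^{u_i}}(X^{u_i}) = e^{-\int u_i\cdot\dd W_s - \tfrac12\int\|u_i\|^2\dd s}$ (\Cref{sec: useful identities}(i)) to rewrite the Feynman–Kac expectation over $X^{u_i}$ as one over the uncontrolled $X$, and then the exponent telescopes to exactly $\mathcal{W}_i$. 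This gives $V^\lambda_{i+1}(x,t) = -(1+\lambda)\log\psi(x,t) = -(1+\lambda)\log\E[e^{-\tfrac{1}{1+\lambda}\mathcal{W}_i(X^{u_i},t)}\,|\,X^{u_i}_t = x]$, which is (i).

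For (iii) and (iv) with $u_0 = \mathbf{0}$ and the optimal multiplier $\lambda_i$, I would argue as follows. By \Cref{prop: Optimal change of measure as geometric annealing}, $\tfrac{\dd\P^{u_{i+1}}}{\dd\P} \propto \big(\tfrac{\dd\Q}{\dd\P}\big)^{\beta_{i+1}} = e^{-\beta_{i+1}\mathcal{W}(X,0)}/\widetilde{\Z}_{i+1}(X_0)$, so $\P^{u_{i+1}}$ is the optimal path measure of the SOC problem with terminal cost $\beta_{i+1} g$ and running state cost $\beta_{i+1} f$ (and zero shift), by \Cref{thm:soc_optimality}\ref{it:opt_change_of_measure} applied in reverse. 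Then \ref{it:value_fn_app_rec_app} is just \Cref{thm:soc_optimality}\ref{it:V} for this rescaled problem, and \ref{it:solution_value_fn_rec_app} is \Cref{thm:soc_optimality}\ref{it:soc_sol}; the only thing to check is consistency, namely that the control $u^{\lambda_i}_{i+1}$ obtained from (ii) with $\lambda = \lambda_i$ coincides with $-\sigma^\top\nabla\widetilde{V}_{i+1}$. This follows by comparing the two Radon–Nikodym derivatives in \eqref{eq:rnd_adjacent}: both representations describe the same measure $\P^{u_{i+1}}$, hence the same control, and one can verify the exponents match since $\tfrac{1}{1+\lambda_i}\mathcal{W}_i(X^{u_i},0)$ pushed back to the base measure $\P$ yields $\beta_{i+1}\mathcal{W}(X,0)$ (up to the normalizing constant) by the telescoping product defining $\beta_{i+1}$. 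The main obstacle I anticipate is the bookkeeping in this last step — carefully tracking how the stochastic integrals in $\mathcal{W}_i$ transform under the successive changes of measure $\P^{u_i} \to \P^{u_{i-1}} \to \cdots \to \P$ so that the geometric exponent $\beta_{i+1}$ emerges cleanly — together with justifying the verification theorem under the stated growth assumptions (linear growth of $b$, $u_i$, ellipticity of $\sigma\sigma^\top$), which is standard but requires checking that $V^\lambda_{i+1}$ has the requisite regularity and polynomial growth so that the local martingale in Itô's formula is a true martingale.
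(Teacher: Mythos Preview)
Your approach for (ii) is correct and coincides with the paper's: both obtain $u^\lambda_{i+1}$ from the pointwise minimization in the HJB equation. For (iii) and (iv) your reduction to \Cref{thm:soc_optimality} via \Cref{prop: Optimal change of measure as geometric annealing} is also what the paper does.

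For (i), your strategy (Hopf--Cole transform, Feynman--Kac, then Girsanov) is a legitimate alternative to the paper's FBSDE/Novikov argument --- the paper even remarks that this route works. However, your execution has an algebra error that derails the argument. After substituting $u^\lambda_{i+1}$ back into the Hamiltonian, the HJB equation is
\[
(\partial_t + L)V - \tfrac{1}{2(1+\lambda)}\|\sigma^\top\nabla V\|^2 + \tfrac{\lambda}{1+\lambda}\,\sigma u_i\cdot\nabla V + \tfrac{\lambda}{2(1+\lambda)}\|u_i\|^2 + f = 0,
\]
not the equation you wrote with coefficients $1$ and $\tfrac12$ on the $u_i$ terms. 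The first-order drift term is $\tfrac{\lambda}{1+\lambda}\sigma u_i$, so after Hopf--Cole the linear PDE has generator corresponding to drift $b + \tfrac{\lambda}{1+\lambda}\sigma u_i$, \emph{not} $b + \sigma u_i$. Feynman--Kac then gives $\psi$ as an expectation over the process $\hat X$ with this intermediate drift, with running potential $\tfrac{\lambda}{2(1+\lambda)}\|u_i\|^2 + f$. The stochastic integral $\int u_i\cdot\dd W_s$ in $\mathcal{W}_i$ appears only after a \emph{further} Girsanov change from $\hat X$ to $X^{u_i}$ (drift shift $\tfrac{1}{1+\lambda}\sigma u_i$), and the exponents then combine exactly to $-\tfrac{1}{1+\lambda}\mathcal{W}_i$. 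Your version puts the Feynman--Kac expectation directly over $X^{u_i}$, which leaves no room for the stochastic integral to enter; your attempted fixes --- ``the martingale part integrates out'' (false, since it sits inside an exponential correlated with the rest) and changing measure from $X^{u_i}$ to $X$ --- do not salvage this.

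The paper itself proves (i) differently: it rewrites the HJB in terms of the generator $\bar L$ of $X^{u_i}$, applies It\^o's formula to $V(X^{u_i}_s,s)$ to obtain a pair of forward--backward SDEs with $Z_s = (-u_i - \sigma^\top\nabla V)(X^{u_i}_s,s)$, integrates to get $\mathcal{W}_i(X^{u_i},t) = V(X^{u_i}_t,t) + \int_t^T\tfrac{1}{2(1+\lambda)}\|Z_s\|^2\dd s - \int_t^T Z_s\cdot\dd W_s$, and then observes that $\exp\big(\tfrac{1}{1+\lambda}\int Z\cdot\dd W - \tfrac{1}{2(1+\lambda)^2}\int\|Z\|^2\dd s\big)$ is a Dol\'eans-Dade exponential with unit conditional expectation by Novikov. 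Both routes are valid; the FBSDE route avoids introducing the auxiliary $\tfrac{\lambda}{1+\lambda}u_i$-controlled process, while your Hopf--Cole route (once the coefficients are corrected) makes the origin of the $(1+\lambda)$ scaling more transparent.
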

\begin{proof}
    For notational convenience, we abbreviate $\VV=V^\lambda_{i+1}$ in this proof. From the verification theorem (see, e.g.,~\cite[Theorem 3.5.2]{pham2009continuous}), we obtain that the value function is the solution to the HJB equation
\begin{subequations}
\begin{align}
    (\partial_t + L) \VV & = - \inf_{\alpha \in \R^d} \left\{f_i +
    \tfrac{1+\lambda}{2}\|\alpha - \tfrac{\lambda}{1+\lambda}u_i \|^2 + \sigma \alpha \cdot \nabla \VV
    \right\}
    \\ & = 
     - f_i - \inf_{\alpha \in \R^d} \left\{
    \tfrac{1+\lambda}{2}\|\alpha - \tfrac{\lambda}{1+\lambda}u_i \|^2 + \sigma \alpha \cdot \nabla \VV
    \right\},
    \quad \VV(\cdot, T) = g,
\end{align}
\end{subequations}
where the infimum is pointwise for every $(x,t)\in\R^d \times [0,T]$ and the optimal $\alpha^*$ defines the solution $u^*$. Solving for $\alpha$ yields $\alpha^* = \frac{\lambda}{1+\lambda}u_i - \frac{1}{1+\lambda} \sigma^\top \nabla \VV$, which proves~\Cref{it:solution_value_fn_app}.

Plugging this result back into the HJB equation, we obtain 
\begin{subequations}
\label{eq:hjb_closed_form appendix}
\begin{align}
    (\partial_t + L) \VV 
    & = 
    -f - \tfrac{\lambda}{2(1+\lambda)}\|u_i\|^2 
    - \tfrac{1}{2(1+\lambda)}\|\sigma^\top \nabla\VV\|^2 - \sigma \left(\tfrac{\lambda}{1+\lambda}u_i - \tfrac{1}{1+\lambda} \sigma^\top \nabla \VV\right) \cdot \nabla \VV 
    \\ & = 
    -f - \tfrac{\lambda}{2(1+\lambda)}\|u_i\|^2 
    + \tfrac{1}{2(1+\lambda)}\|\sigma^\top \nabla\VV\|^2 - \tfrac{\lambda}{1+\lambda} \sigma u_i \cdot  \nabla \VV 
    \\ & = 
        -f - \tfrac{\lambda}{2(1+\lambda)}\|u_i\|^2 
    + \tfrac{1}{2(1+\lambda)}\|\sigma^\top \nabla\VV\|^2 - \sigma u_i \cdot   \nabla \VV   +  \tfrac{1}{1+\lambda} \sigma u_i \cdot  \nabla \VV.
\end{align}
\end{subequations}
Now, we define the infinitesimal generator of the SDE
\begin{equation}
    \dd X^{u_i}_s = \left(b(X^{u_i}_s,s) + \sigma u_i(X^{u_i}_s,s)\right) \dd s + \sigma \dd W_s
\end{equation}
as
\begin{equation}
\label{eq: generator trust region sde appendix}
    \bar L \coloneqq \frac{1}{2}\sum_{i,j=1}^d (\sigma \sigma^{\top})_{ij} \partial_{x_i}\partial_{x_j} + \sum_{i=1}^d (b_i + (\sigma u_i)_i)\partial_{x_i} = L + \sum_{i=1}^d (\sigma u_i)_i \partial_{x_i}.
\end{equation}
Using \eqref{eq: generator trust region sde appendix}, we can rewrite \eqref{eq:hjb_closed_form appendix} as 
\begin{subequations}
 \label{eq:gen_bar appendix}
\begin{align}
    (\partial_t + \bar L) \VV  &=  -f - \tfrac{\lambda}{2(1+\lambda)}\|u_i\|^2 
    + \tfrac{1}{2(1+\lambda)}\|\sigma^\top \nabla\VV\|^2  + \sigma \tfrac{1}{1+\lambda}u_i  \cdot \nabla \VV \\
    &  =  -f - \tfrac{1}{2}\|u_i\|^2 
    + \tfrac{1}{2(1+\lambda)}\|u_i +\sigma^\top \nabla\VV\|^2 
\end{align}
\end{subequations}
By It\^o's formula (see~\Cref{sec: useful identities}), we have
\begin{equation}
    \label{eq:value_sde appendix}
    \dd \VV(X^{u_i}_s,s) = (\partial_s + \bar L) \VV(X^{u_i}_s,s) \mathrm d s + \sigma^\top \nabla \VV(X^{u_i}_s,s) \cdot \dd W_s.
\end{equation}
Plugging \eqref{eq:gen_bar appendix} into \eqref{eq:value_sde appendix} and defining $Y_s \coloneqq \VV(X^{u_i}_s, s)$ and $Z_s \coloneqq (-u_i -\sigma^\top \nabla \VV)(X^{u_i}_s, s)$, we obtain the pair of forward-backward SDEs (FBSDEs)
\begin{align}
\label{eq:fbsde1 appendix}
        \dd X^{u_i}_s & = \left(b(X^{u_i}_s,s) + \sigma u_i(X^{u_i}_s,s)\right) \dd s + \sigma(s) \dd W_s, \quad X^{u_i}_0 \sim p_0,
        \\
\label{eq:fbsde appendix}
        \dd Y_s & = \left(-f(X^{u_i}_s,s) - \tfrac{1}{2}\|u_i(X^{u_i}_s,s)\|^2+  \tfrac{1}{2(1+\lambda)} \|Z_s\|^2 \right)\dd s - (u_i(X^{u_i}_s,s) +Z_s) \cdot \dd W_s, 
\end{align}
with $Y_T = g(X^{u_i}_T)$. This shows that
\begin{align*}
    g(X^{u_i}_T) = Y_t - \int_t^T \left(f(X^{u_i}_s,s) + \tfrac{1}{2}\|u_i(X^{u_i}_s,s)\|^2 -  \tfrac{1}{2(1+\lambda)} \|Z_s\|^2 \right)\dd s - \int_t^T (u_i(X^{u_i}_s,s) + Z_s) \cdot \dd W_s,
\end{align*}
which can be rewritten as 
\begin{equation}
\label{eq:integrated_work}
    \mathcal{W}_i(X^{u_i},t) = Y_t + \int_t^T \tfrac{1}{2(1+\lambda)} \|Z_s\|^2 \dd s - \int_t^T Z_s \cdot  \dd W_s.
\end{equation}
Using the definition of $Y_t$, we can now write
\begin{align*}
    \E\left[e^{-\tfrac{1}{1+\lambda}\mathcal{W}_i(X^{u_i},t)}\Big|X^{u_i}_t=x\right] &= e^{-\tfrac{1}{1+\lambda}V(X^{u_i}_t, t)}\E\left[e^{\tfrac{1}{1+\lambda} \int_t^T  Z_s \cdot \dd W_s- \tfrac{1}{(1+\lambda)^2} \int_t^T \tfrac{1}{2} \|Z_s\|^2 \dd s}\Big|X^{u_i}_t=x\right] \\
    &= e^{-\tfrac{1}{1+\lambda}V(X^{u_i}_t, t)},
\end{align*}
\highprio{JB: A perhaps simpler proof is by showing that $e^{-\VV/(1+\lambda)}$ satisfies a Kolmogorov eq., using the FK formula for the drift $b+\lambda/(1+\lambda) \sigma u_i$, and then using Girsanov to the drift $b+\sigma u_i$.}
where we leveraged Novikov's theorem to show that the Doléans-Dade exponential is a martingale with vanishing expectation. This concludes the proof of~\Cref{it:value_fn_app}. The proof of~\Cref{it:value_fn_app_rec_app,it:solution_value_fn_rec_app} follows directly from~\Cref{thm:soc_optimality}.
\end{proof}

\section{Trust region SOC sequences and Fisher-Rao geometry}
\label{app:tr_limit}
For a fixed $\varepsilon$, suppose that we construct the sequence of controls $(u_{i+1})_{i \geq 0}$ as the solutions of the problem \eqref{eq: constrained optimization}. As shown in~\Cref{prop: Optimal change of measure as geometric annealing},
we have that  
\begin{equation} \label{eq: optimal com recursion general 2}
    \frac{\dd {\P}^{u_{i}}}{\dd{\mathbb{P}}} \propto \left(\frac{\dd \Q}{\dd{\mathbb{P}}}\right)^{
    \beta_i} 
     \left(\frac{\dd{\P}^{u_0}}{\dd {\mathbb{P}}}\right)^{1-
     \beta_i}, \quad \text{with} \quad \beta_i = 1-\prod_{j=0}^{i-1}\tfrac{\lambda_j}{1+\lambda_j}
\end{equation}
If we define the family $(\mathbb{Q}^{(\tau)})_{\tau \in [0,1]}$ such that
\begin{equation} \label{eq:Q_definition}
    \frac{\dd {\mathbb{Q}}^{(\tau)}}{\dd{\mathbb{P}}} \propto \left(\frac{\dd \Q}{\dd{\mathbb{P}}}\right)^{\tau} \left(\frac{\dd{\P}^{u_0}}{\dd {\mathbb{P}}}\right)^{1-\tau},
\end{equation}
we can write $\mathbb{P}^{u_{i}} = \mathbb{Q}^{(\beta_i)}$. Hence, we can regard the sequence $(\mathbb{P}^{u_{i}})_{i\geq 0}$ as a discretization of the family $(\mathbb{Q}^{(\tau)})_{\tau \in [0,1]}$. Next, we characterize this discretization more precisely using tools from information geometry.

\subsection{Basics on information geometry}
Let $\{p(x;\theta)\}_{\theta\in\Theta}$ be a parametric family of probability densities (or mass functions) on the sample space $\mathcal{X}$, and let $X$ be a random variable with distribution $p(x;\theta)$.
\begin{definition}[Fisher information matrix]
The \emph{Fisher information matrix} at $\theta$ is defined as
\[
\mathcal{I}(\theta)
\;=\;
\mathbb{E}_{X\sim p(\cdot;\theta)}\!\Bigl[
\nabla_\theta \log p(X;\theta)\,\bigl(\nabla_\theta \log p(X;\theta)\bigr)^\top
\Bigr]
\;=\;
-\,\mathbb{E}_{X\sim p(\cdot;\theta)}\!\Bigl[
\nabla^2_\theta \log p(X;\theta)
\Bigr],
\]
where $\nabla_\theta$ denotes the column gradient with respect to $\theta$, and $\nabla^2_\theta$ the Hessian.
\end{definition}
As an average of positive semi-definite matrices, $\mathcal{I}(\theta)$ is positive semi-definite, which makes it possible to define a geometric structure:
\begin{definition}[Statistical manifold]
Let $\{p(x;\theta)\}_{\theta\in\Theta}$ be a smooth parametric family of probability densities on $\mathcal X$, with parameter space~$\Theta\subseteq\mathbb R^d$.  Then $\Theta$ itself can be viewed as a $d$-dimensional differentiable manifold
\[
\mathcal M \;=\;\{\,p(\,\cdot\,;\theta):\theta\in\Theta\}\cong\Theta,
\]
called the \emph{statistical manifold} of the model.  Endow $\mathcal M$ with the Riemannian metric
\[
g_{ij}(\theta)
\;=\;
\mathcal I_{ij}(\theta)
\;=\;
\mathbb{E}_{X\sim p(\cdot;\theta)}
\Bigl[\partial_{i}\log p(X;\theta)\,\partial_{j}\log p(X;\theta)\Bigr],
\]
where $\partial_{i}=\frac{\partial}{\partial\theta_i}$.  This $g$ is known as the \emph{Fisher–Rao metric}, turning $(\mathcal M,g)$ into the canonical information‐geometric manifold of the model.
\end{definition}

Next, we review the definition of the length of a curve on a Riemannian manifold.

\begin{definition}[Length of a curve on a Riemannian manifold]
Let $(\mathcal{M},g)$ be a $d$-dimensional Riemannian manifold, and let $\gamma \colon [a,b]\;\longrightarrow\;\mathcal{M}$
be a piecewise smooth curve.  Choose local coordinates 
\(\theta=(\theta^1,\dots,\theta^d)\) on an open set \(\mathcal U\subset\mathcal M\) containing the image of \(\gamma\), so that 
\(\gamma(t)\mapsto \theta(t) = (\theta^1(t),\dots,\theta^d(t))\).  
Then the \emph{length} of \(\gamma\) is
\[
L(\gamma)
\;=\;
\int_a^b
\sqrt{g_{ij}\bigl(\theta(t)\bigr)\,\dot\theta^i(t)\,\dot\theta^j(t)\,}
\;dt,
\]
where 
\(\dot\theta^i(t)=\dfrac{d\theta^i}{dt}(t)\)
and we employ the Einstein summation convention on repeated indices \(i,j=1,\dots,d\).
\end{definition}

A geodesic between two points $\theta_1, \theta_2 \in \mathcal{M}$ is a piecewise smooth curve $\gamma \colon [a,b]\;\longrightarrow\;\mathcal{M}$ such that $\gamma(a) = \theta_1$, $\gamma(b) = \theta_2$ that minimizes the length functional $L$ locally. Any time reparameterization of a geodesic is also a geodesic, because the geodesic distance between $\theta_1, \theta_2$ is the infimum over the lengths of all geodesics (or all piecewise smooth curves) between $\theta_1, \theta_2$.

\begin{definition}[Fisher-Rao distance]
    The geodesic distance induced by the Fisher-Rao metric is known as the \emph{Fisher–Rao distance}.
\end{definition}

Lastly, we present another statement which connects the Kullback–Leibler divergence and the Fisher information matrix using a local expansion of the KL divergence.
\begin{proposition}[Second‐order expansion of $\mathrm{KL}$] \label{prop:2nd_order_expansion}
Let $\{p(x;\theta)\}_{\theta\in\Theta}$ be a smooth parametric family of densities, and fix $\theta\in\Theta$.  For a small increment $\delta\in\mathbb{R}^d$, consider
\[
\mathrm{KL}\bigl(p(\cdot;\theta+\delta)\,\big\|\,p(\cdot;\theta)\bigr)
\;=\;
\int_{\mathcal X}
p(x;\theta+\delta)\,
\log\frac{p(x;\theta+\delta)}{p(x;\theta)}
\,dx.
\]
Then one has the Taylor expansion
\[
\mathrm{KL}\bigl(p(\theta+\delta)\|p(\theta)\bigr)
=
\underbrace{0}_{\text{constant term}}
\;+\;
\underbrace{0}_{\text{linear term}}
\;+\;
\frac{1}{2}\,\delta^{i}\,\mathcal{I}_{ij}(\theta)\,\delta^{j}
\;+\;
O\bigl(\|\delta\|^3\bigr),
\]
where
\[
\mathcal{I}_{ij}(\theta)
\;=\;
\mathbb{E}_{X\sim p(\cdot;\theta)}
\bigl[\partial_{i}\log p(X;\theta)\,\partial_{j}\log p(X;\theta)\bigr]
\]
is the Fisher information matrix.  Equivalently,
\[
\left.\frac{\partial \mathrm{KL}}{\partial \delta^{i}}\right|_{\delta=0}
=0,
\qquad
\left.\frac{\partial^2 \mathrm{KL}}{\partial \delta^{i}\partial \delta^{j}}\right|_{\delta=0}
=\mathcal{I}_{ij}(\theta).
\]
\end{proposition}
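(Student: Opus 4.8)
The plan is to Taylor-expand, to second order in $\delta$, both the log-ratio $\log\frac{p(\cdot;\theta+\delta)}{p(\cdot;\theta)}$ and the density $p(\cdot;\theta+\delta)$ itself, multiply the two expansions, integrate term by term, and then collapse the result using the normalization identity $\int_{\mathcal X}p(x;\theta)\,dx=1$ (valid for all $\theta$) together with the two equivalent expressions for the Fisher information matrix.

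Writing $\ell(x;\theta):=\log p(x;\theta)$, we have $\mathrm{KL}\bigl(p(\theta+\delta)\|p(\theta)\bigr)=\int_{\mathcal X}p(x;\theta+\delta)\bigl[\ell(x;\theta+\delta)-\ell(x;\theta)\bigr]\,dx$. Taylor's theorem gives $\ell(x;\theta+\delta)-\ell(x;\theta)=\delta^i\partial_i\ell+\tfrac12\delta^i\delta^j\partial_i\partial_j\ell+O(\|\delta\|^3)$ and $p(x;\theta+\delta)=p+\delta^i\partial_i p+\tfrac12\delta^i\delta^j\partial_i\partial_j p+O(\|\delta\|^3)$, with all derivatives evaluated at $\theta$. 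Multiplying and retaining terms through second order: the zeroth-order term vanishes; the first-order term is $\int\delta^i p\,\partial_i\ell\,dx=\delta^i\int\partial_i p\,dx=\delta^i\partial_i\!\int p\,dx=0$, where I used $p\,\partial_i\ell=\partial_i p$ and differentiated the normalization; the second-order term has two pieces, $\tfrac12\delta^i\delta^j\int p\,\partial_i\partial_j\ell\,dx$ and $\delta^i\delta^j\int(\partial_i p)(\partial_j\ell)\,dx$. For the first piece, the identity $p\,\partial_i\partial_j\ell=\partial_i\partial_j p-p\,\partial_i\ell\,\partial_j\ell$ combined with $\int\partial_i\partial_j p\,dx=0$ yields $\int p\,\partial_i\partial_j\ell\,dx=-\mathcal I_{ij}(\theta)$; for the second piece, $\int(\partial_i p)(\partial_j\ell)\,dx=\int p\,\partial_i\ell\,\partial_j\ell\,dx=\mathcal I_{ij}(\theta)$. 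Adding the two pieces gives $-\tfrac12\delta^i\delta^j\mathcal I_{ij}+\delta^i\delta^j\mathcal I_{ij}=\tfrac12\delta^i\delta^j\mathcal I_{ij}(\theta)$, which is the claimed quadratic form; the stated values of the first and second partial derivatives at $\delta=0$ follow by reading off the coefficients of the expansion.

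The only nontrivial point, which I would state as a standing regularity hypothesis on the smooth family rather than verify in detail, is the legitimacy of differentiating under the integral sign — so that $\int\partial_i p\,dx=0$ and $\int\partial_i\partial_j p\,dx=0$ — together with a uniform bound on the third-order Taylor remainder, e.g.\ an integrable dominating function for the relevant third derivatives on a neighborhood of $\theta$. Everything else is the combinatorial bookkeeping of the product of two truncated Taylor series, carried out above. The hard part, such as it is, is thus purely one of stating the right integrability assumptions; the algebra itself is elementary.
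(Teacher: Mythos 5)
Your proof is correct and follows essentially the same route as the paper's own (sketched) argument: expand both $p(x;\theta+\delta)$ and $\log p(x;\theta+\delta)$ to second order, substitute into the integral, and use $\int p\,\partial_i\log p\,\dd x=0$ and $\int p\,\partial_i\partial_j\log p\,\dd x=-\mathcal I_{ij}(\theta)$ to identify the quadratic form. You have merely carried out in full the bookkeeping that the paper leaves as a sketch, and your remark about the regularity needed to differentiate under the integral sign is the appropriate caveat.
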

\begin{proof}[Sketch]
Expand both $p(x;\theta+\delta)$ and $\log p(x;\theta+\delta)$ to second order in $\delta$, substitute into the integral, and use
\(\int p\,\partial_{i}\log p\, \mathrm dx=0\)
and
\(\int p\,\partial_{i}\partial_{j}\log p\, \mathrm dx=-\mathcal{I}_{ij}(\theta)\)
to verify cancellation of constant and linear terms, leaving the stated quadratic form.
\vspace{-0.1em}\end{proof}

\subsection{Fisher–Rao geometry of an exponential family}
\begin{definition}[The exponential‐family manifold]
Let 
\[
p(x;\theta)
\;=\;
\exp\bigl(\theta^i T_i(x)\;-\;A(\theta)\bigr)\,h(x),
\quad 
\theta=(\theta^1,\dots,\theta^d)\in\Theta\subseteq\mathbb{R}^d
\]
be a regular \(d\)-parameter exponential family on \(\mathcal X\).  The parameter space \(\Theta\) (equipped with the atlas coming from the coordinates \(\theta^i\)) is a \(d\)-dimensional differentiable manifold, which we identify with the statistical model
\[
\mathcal M
\;=\;
\bigl\{\,p(\,\cdot\,;\theta)\mid \theta\in\Theta\bigr\}.
\]
Its tangent space at \(\theta\) is $T_\theta\mathcal M \;\cong\;\mathbb{R}^d$,
with basis \(\{\partial/\partial\theta^i\}\).
\end{definition}

\begin{definition}[Fisher–Rao metric]
The Fisher–Rao metric on \(\mathcal M\) is the Riemannian metric whose components in the natural coordinate chart \(\theta\) are
\[
g_{ij}(\theta)
\;=\;
\mathbb{E}_{X\sim p(\cdot;\theta)}
\bigl[\partial_i\log p(X;\theta)\,\partial_j\log p(X;\theta)\bigr]
\;=\;
-\mathbb{E}_{X\sim p(\cdot;\theta)}
\bigl[\partial_{ij}\log p(X;\theta)\bigr]
\;=\;
\frac{\partial^2 A(\theta)}{\partial\theta^i\,\partial\theta^j}.
\]
Equivalently,
\(\,g(\theta)=\nabla^2 A(\theta)\), the Hessian of the log‐partition function.
\end{definition}
For general exponential families, the Fisher-Rao distance and the geodesics do not admit a closed form. Yet, one-dimensional families can be handled explicitly, because geodesics are trivial:
\begin{proposition}[One‐parameter exponential family] \label{prop:one_parameter_exponential}
If \(d=1\) then \(\theta\in(a,b)\subseteq\mathbb{R}\), and
\(\;g(\theta)=A''(\theta)\).  Hence
\[
\mathrm{FR}(\theta_1,\theta_2)
\;=\;
\Bigl|\!\int_{\theta_1}^{\theta_2}\!
\sqrt{A''(\theta)}\,\mathrm d\theta\Bigr|.
\]
\end{proposition}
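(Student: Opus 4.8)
The plan is to instantiate the general definition of the length of a curve on a Riemannian manifold in the one-dimensional case, where the parameter space is an interval and the metric reduces to the scalar function $g(\theta) = A''(\theta)$. First I would note that by the preceding Fisher–Rao metric definition for exponential families, in dimension $d=1$ we have $g_{11}(\theta) = \partial^2 A(\theta)/\partial\theta^2 = A''(\theta)$, which is strictly positive on $(a,b)$ by regularity of the family (the log-partition function $A$ is strictly convex on the interior of the natural parameter space). Hence $(\mathcal M, g)$ is a genuine one-dimensional Riemannian manifold and the metric is nondegenerate.

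Next I would observe that on a one-dimensional manifold, \emph{every} monotone piecewise-smooth curve between two points is (a reparameterization of) the geodesic: the image of any path from $\theta_1$ to $\theta_2$ must contain the whole interval $[\min(\theta_1,\theta_2), \max(\theta_1,\theta_2)]$, and the length functional is minimized precisely by a monotone traversal with no backtracking. Concretely, take the curve $\gamma\colon[\theta_1,\theta_2]\to\mathcal M$, $\gamma(t) = p(\cdot\,;t)$ (assuming WLOG $\theta_1 \le \theta_2$; otherwise swap endpoints, which only affects the result by an absolute value). Then $\dot\theta(t) = 1$, and plugging into the length formula from the definition of the length of a curve,
\begin{equation*}
L(\gamma) = \int_{\theta_1}^{\theta_2} \sqrt{g_{11}(\theta(t))\,\dot\theta(t)^2}\,\dd t = \int_{\theta_1}^{\theta_2} \sqrt{A''(\theta)}\,\dd\theta.
\end{equation*}
Any other piecewise-smooth curve $\tilde\gamma$ between the same two endpoints has length at least this value, since reparameterization invariance lets us assume it is parameterized by $\theta$, and any interval traversed more than once only adds to the integral of the nonnegative integrand $\sqrt{A''}\,|\dot\theta|$; this is the standard argument that in one dimension the geodesic distance equals the integral of $\sqrt{g}$ between the endpoints.

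Finally, I would conclude that the Fisher–Rao distance, defined as the infimum of lengths over all piecewise-smooth curves joining $\theta_1$ and $\theta_2$, equals $\int_{\theta_1}^{\theta_2}\sqrt{A''(\theta)}\,\dd\theta$ when $\theta_1 \le \theta_2$, and in general $\mathrm{FR}(\theta_1,\theta_2) = \big|\int_{\theta_1}^{\theta_2}\sqrt{A''(\theta)}\,\dd\theta\big|$, as claimed. The only mild subtlety—and the step I would be most careful about—is the lower-bound direction, i.e., verifying that no non-monotone curve can do better; but this is immediate from the nonnegativity of the integrand and the fact that such a curve's coordinate image still covers $[\theta_1,\theta_2]$. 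There is no hard analytic obstacle here: the result is essentially a restatement of the one-dimensional special case of the Riemannian length/distance formalism, and the substance is entirely in the identification $g = A''$, which was already recorded in the preceding definition.
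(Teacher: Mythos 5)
Your proposal is correct and matches the paper's (essentially implicit) reasoning: the paper states this proposition without proof, remarking only that in one dimension ``geodesics are trivial,'' and your argument is precisely the standard elaboration of that remark — identifying $g=A''$ from the preceding definition and noting that any curve between $\theta_1$ and $\theta_2$ has length at least that of the monotone traversal. Nothing further is needed.
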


\subsection{Fisher–Rao geometry of an exponential family of path measures}
We can view the family $(\mathbb{Q}^{(\tau)})_{\tau \in [0,1]}$ defined in \eqref{eq:Q_definition} as a one-parameter exponential family \cite{brekelmans2020all} by rewriting $\mathbb{Q}^{(\tau)}$ as
\begin{equation} \label{eq:RDN_Q_P0}
    \frac{\dd {\mathbb{Q}}^{(\tau)}}{\dd{\P^{u_0}}}  = \exp \left( \tau \bigg( 
    \log \frac{\dd \Q}{\dd{\P^{u_0}}}
    \bigg) - A(\tau)
    \right), 
\end{equation}
where the log-partition function \(A(\tau)\) is defined as
\begin{equation}
    A(\tau) = \log \mathbb{E}_{\P^{u_0}} \bigg[  \left(
    \frac{\dd \Q}{\dd{\P^{u_0}}}
    \right)^{\tau} \bigg].
\end{equation}
Equivalently, we can write it as an exponential family centered on an arbitrary $\tau \in [0,1]$:
\begin{equation}
    \frac{\dd {\mathbb{Q}}^{(\tau+\Delta \tau)}}{\dd{\mathbb{Q}}^{(\tau)}}  =\exp \left( \Delta \tau \bigg( 
    \log \frac{\dd \Q}{\dd{\P^{u_0}}}
    \bigg) - A_{\tau}(\Delta \tau) \right), 
\end{equation}
where
\begin{equation}
    A_{\tau}(\Delta \tau) := \log \mathbb{E}_{{\mathbb{Q}}^{(\tau)}} \bigg[  \left(
    \frac{\dd \Q}{\dd{\P^{u_0}}}
    \right)^{\Delta \tau} \bigg].
\end{equation}
\paragraph{Deriving an expression for the Fisher information.} Observe that by construction 
\begin{align}
\begin{split}
    A_{\tau}(\Delta \tau) \coloneqq \log \mathbb{E}_{{\P}^{u_0}} \bigg[  \left(
    \frac{\dd \Q}{\dd{\P^{u_0}}}
    \right)^{\Delta \tau} \frac{\dd {\mathbb{Q}}^{(\tau)}}{\dd{\P^{u_0}}} \bigg]  &= \log \mathbb{E}_{{\P}^{u_0}} \bigg[  \left(
    \frac{\dd \Q}{\dd{\P^{u_0}}}
    \right)^{\Delta \tau} \left(
    \frac{\dd \Q}{\dd{\P^{u_0}}}
    \right)^{\tau} \exp \big( - A(\tau) \big) \bigg] \\ &= A(\tau + \Delta \tau) - A(\tau),
\end{split}
\end{align}
which means that $A_{\tau}'(0) = A'(\tau)$ for all $\tau \in (0,1)$.
Thus, by Prop.~\ref{prop:one_parameter_exponential}, we conclude that the Fisher information matrix, which is a scalar because the manifold is one-dimensional, reads
\begin{equation}
    \mathcal{I}(\tau) = A''(\tau) = A_{\tau}''(0).
\end{equation}
Computing the first and second derivatives of $A_{\tau}$ is straight-forward: 
\begin{align}
\begin{split}
    A'_{\tau}(\Delta \tau) &= \frac{\mathbb{E}_{{\mathbb{Q}}^{(\tau)}} \bigg[  \log \left(
    \frac{\dd \Q}{\dd{\P^{u_0}}}
    \right) \left(
    \frac{\dd \Q}{\dd{\P^{u_0}}}
    \right)^{\Delta \tau} \bigg]}{\mathbb{E}_{{\mathbb{Q}}^{(\tau)}} \bigg[  \left(
    \frac{\dd \Q}{\dd{\P^{u_0}}}
    \right)^{\Delta \tau} \bigg]}, \\
    A''_{\tau}(0) &= \mathbb{E}_{{\mathbb{Q}}^{(\tau)}} \bigg[  \log \left(
    \frac{\dd \Q}{\dd{\P^{u_0}}}
    \right)^2 \bigg] - \mathbb{E}_{{\mathbb{Q}}^{(\tau)}} \bigg[  \log \left(
    \frac{\dd \Q}{\dd{\P^{u_0}}}
    \right) \bigg]^2,
\end{split}    
\end{align}
and this implies that 
\begin{align}
\begin{split} \label{eq:FI}
    \mathcal{I}(\tau) = \mathrm{Var}_{\mathbb{Q}^{(\tau)}} \bigg[ \log \bigg( \frac{\dd \Q}{\dd{\P^{u_0}}} \bigg) \bigg].
\end{split}
\end{align}
\paragraph{Connecting the trust region constraint to the Fisher information.} Applying Proposition \ref{prop:2nd_order_expansion}, we obtain that
\begin{equation}
    \mathrm{KL}\big({\mathbb{Q}}^{(\tau + \Delta \tau)}|{\mathbb{Q}}^{(\tau)} \big) = \frac{\Delta \tau^2}{2} \mathcal{I}(\tau) + O(\Delta \tau^3),
\end{equation}
When we set $\tau + \Delta \tau = \beta_{i+1}$, $\tau = \beta_i$, we have that
\begin{equation}
\label{eq: trust region KL expansion}
    \varepsilon = \mathrm{KL}\big({\P}^{u_{i+1}}|{\P}^{u_i} \big) = \frac{\Delta \tau^2}{2} \mathcal{I}(\tau) + O(\Delta \tau^3).
\end{equation}
Thus,
\begin{equation}
\label{eq: analytical annealing update}
    \Delta \tau = \sqrt{\frac{2 \varepsilon}{\mathcal{I}(\tau)}} + O(\Delta \tau^{3/2}),
\end{equation}
Moreover, the Fisher-Rao distance between $\P^{u_0}$ and $\mathbb{P}^{(i)}$, or rather, between $0$ and $\beta_i$,  
\begin{equation}
    \mathrm{FR}(0,\beta_i) = \int_0^{\beta_i} \sqrt{\mathcal{I}(\tau)} \dd \tau.
\end{equation}
Then, the difference between Fisher-Rao distances $\mathrm{FR}(0,\beta_{i+1})$ and $\mathrm{FR}(0,\beta_i)$ which is equal to the Fisher-Rao distance $\mathrm{FR}(\beta_i,\beta_{i+1})$ is
\begin{align}
\begin{split}
    &\mathrm{FR}(0,\beta_{i+1}) - \mathrm{FR}(0,\beta_i) = \mathrm{FR}(\beta_i,\beta_{i+1}) = \int_{\beta_i}^{\beta_{i+1}} \sqrt{\mathcal{I}(\tau)} \dd \tau \\ &= \big( \sqrt{\mathcal{I}(\beta_i)} + O(\beta_{i+1} - \beta_i) \big) (\beta_{i+1} - \beta_i) = \sqrt{\mathcal{I}(\beta_i)} \Delta \tau + O(\Delta \tau^2) \\ &= \sqrt{\mathcal{I}(\beta_i)} \sqrt{\frac{2 \varepsilon}{\mathcal{I}(\beta_i)}} + O(\Delta \tau^{3/2}) = \sqrt{2\varepsilon} + O(\Delta \tau^{3/2}).
\end{split}
\end{align}
In continuous time, we have a curve $\beta : \mathbb{R}^{>0} \to [0,1]$, and
\begin{align}
\begin{split}
    \frac{\dd}{\dd t} \mathrm{FR}(0,\beta(t)) = \sqrt{\mathcal{I}(\beta(t))} \beta'(t) =  \sqrt{\mathcal{I}(\beta(t))} \sqrt{\frac{2}{\mathcal{I}(\beta(t))}} = \sqrt{2} 
\end{split}    
\end{align}
Thus, we have shown the following result:
\begin{proposition}
Up to high order terms, the elements of sequence $(\mathbb{P}^{u_{i}})_{0 \leq i \leq I-1}$ are equispaced in the Fisher-Rao distance. The last term $\mathbb{P}^{u_{I}}$ is equal to the target distribution $\mathbb{Q}$.
\end{proposition}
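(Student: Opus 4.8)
The plan is to leverage \Cref{prop: Optimal change of measure as geometric annealing}, which identifies each intermediate measure $\P^{u_i}$ with the element $\Q^{(\beta_i)}$ of the one-parameter exponential family $(\Q^{(\tau)})_{\tau\in[0,1]}$ interpolating geometrically between $\P^{u_0}$ and $\Q$, where $\beta_i = 1 - \prod_{j=0}^{i-1}\tfrac{\lambda_j}{1+\lambda_j}$. The claim then reduces to understanding the discretization $0 = \beta_0 < \beta_1 < \dots < \beta_I = 1$ of the curve $\tau \mapsto \Q^{(\tau)}$ induced by the trust-region constraint, together with the intrinsic Fisher--Rao geometry of this curve on the statistical manifold. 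First I would set up the information geometry of the family: writing $\tfrac{\dd \Q^{(\tau)}}{\dd \P^{u_0}} = \exp(\tau\,\ell - A(\tau))$ with $\ell = \log\tfrac{\dd \Q}{\dd \P^{u_0}}$ and $A(\tau) = \log \E_{\P^{u_0}}[e^{\tau\ell}]$, the Fisher information of this one-dimensional model is the scalar $\mathcal{I}(\tau) = A''(\tau) = \Var_{\Q^{(\tau)}}[\ell]$, and by \Cref{prop:one_parameter_exponential} the Fisher--Rao length along the curve is $\mathrm{FR}(\beta_i,\beta_{i+1}) = \int_{\beta_i}^{\beta_{i+1}}\sqrt{\mathcal{I}(\tau)}\,\dd\tau$. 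So the target is to show each such integral equals $\sqrt{2\varepsilon}$ up to higher-order corrections.

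Next I would extract the step sizes $\Delta_i := \beta_{i+1}-\beta_i$ from the constraint. Since (in all but the last step) the trust-region constraint is active --- this is exactly the content of the characterization in \Cref{app:solution_to_tr_opt} --- we have $\varepsilon = \KL(\P^{u_{i+1}}|\P^{u_i}) = \KL(\Q^{(\beta_{i+1})}|\Q^{(\beta_i)})$, and the second-order expansion of the KL divergence (\Cref{prop:2nd_order_expansion}) gives $\varepsilon = \tfrac{\Delta_i^2}{2}\mathcal{I}(\beta_i) + O(\Delta_i^3)$, whose inversion yields $\Delta_i = \sqrt{2\varepsilon/\mathcal{I}(\beta_i)} + O(\Delta_i^{3/2})$. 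Substituting this back into the length integral, a first-order Taylor expansion of $\sqrt{\mathcal{I}(\cdot)}$ on $[\beta_i,\beta_{i+1}]$ gives
\[
\mathrm{FR}(\beta_i,\beta_{i+1}) = \int_{\beta_i}^{\beta_{i+1}}\sqrt{\mathcal{I}(\tau)}\,\dd\tau = \sqrt{\mathcal{I}(\beta_i)}\,\Delta_i + O(\Delta_i^2) = \sqrt{\mathcal{I}(\beta_i)}\sqrt{\tfrac{2\varepsilon}{\mathcal{I}(\beta_i)}} + O(\Delta_i^{3/2}) = \sqrt{2\varepsilon} + O(\Delta_i^{3/2}),
\]
which is independent of $i$ and hence establishes the equispacing. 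For the last statement, I would invoke optimality: by \Cref{prop: Optimal change of measure as geometric annealing} and the discussion following it, once the constraint becomes slack we have $\lambda_{I-1}=0$, so $\beta_I = 1-\prod_{j=0}^{I-1}\tfrac{\lambda_j}{1+\lambda_j} = 1$ and therefore $\P^{u_I} = \Q^{(1)} = \Q$.

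The main obstacle is making ``up to higher-order terms'' precise and uniform in $i$. This requires (i) that $\varepsilon$ be small enough that every $\Delta_i$ is uniformly small, which follows from a lower bound $\mathcal{I}(\tau) \ge c > 0$ on $[0,1]$ --- guaranteed whenever $\Q \neq \P^{u_0}$ and the family is non-degenerate, since $\mathcal{I}(\tau) = \Var_{\Q^{(\tau)}}[\ell]$; (ii) control of the cubic remainders in both the KL expansion and the quadrature, i.e. a bound on $A'''$ over $[0,1]$, equivalently on the third central moment of $\ell$ under $\Q^{(\tau)}$; and (iii) justifying the finite-dimensional information-geometry computations in the infinite-dimensional path-measure setting. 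For the latter, under the standing assumptions in \Cref{sec: technical assumptions} and via Girsanov's theorem, $\ell$ is, up to an additive constant, essentially $-\mathcal{W}(X,0)$ (plus the $u_0$-Girsanov term), which has finite exponential moments thanks to the linear-growth and ellipticity conditions; hence $A$ is smooth on a neighborhood of $[0,1]$, all the required moment bounds hold, and the one-parameter family behaves like a genuine smooth one-dimensional Riemannian manifold, so the displayed chain of estimates goes through verbatim.
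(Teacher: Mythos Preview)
Your proposal is correct and follows essentially the same route as the paper: embed $(\P^{u_i})_i$ into the one-parameter exponential family $\Q^{(\tau)}$ via \Cref{prop: Optimal change of measure as geometric annealing}, identify the Fisher information as $\mathcal{I}(\tau)=A''(\tau)=\Var_{\Q^{(\tau)}}[\ell]$, use the second-order KL expansion (\Cref{prop:2nd_order_expansion}) together with the active constraint to extract $\Delta_i=\sqrt{2\varepsilon/\mathcal{I}(\beta_i)}+O(\Delta_i^{3/2})$, and then Taylor-expand $\int_{\beta_i}^{\beta_{i+1}}\sqrt{\mathcal{I}(\tau)}\,\dd\tau$ to conclude $\mathrm{FR}(\beta_i,\beta_{i+1})=\sqrt{2\varepsilon}+O(\Delta_i^{3/2})$. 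Your additional discussion of the regularity needed to make the error terms uniform in $i$ (lower bound on $\mathcal{I}$, smoothness of $A$, finiteness of exponential moments of $\ell$) goes beyond what the paper makes explicit, but is in the right spirit.
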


\paragraph{A Monte Carlo estimate for the Fisher information.} By equation \eqref{eq:RDN_Q_P0}, we have that $\log \frac{\dd {\mathbb{Q}}^{(\tau)}}{\dd{\P^{u_0}}} = \tau \big( \log \frac{\dd \Q}{\dd{\P^{u_0}}} \big) - A(\tau)$. Hence, we can rewrite \eqref{eq:FI} as
\begin{equation}
    \mathcal{I}(\tau) = \frac{1}{\tau^2} \mathrm{Var}_{\mathbb{Q}^{(\tau)}} \bigg[ \log \bigg( \frac{\dd {\mathbb{Q}}^{(\tau)}}{\dd{\P^{u_0}}} \bigg) \bigg],
\end{equation}
which provides a way to estimate $\mathcal{I}(\tau)$, leveraging the Girsanov theorem to estimate $\log \big( \frac{\dd {\mathbb{Q}}^{(\tau)}}{\dd{\P^{u_0}}} \big) = \log \big( \frac{\dd {\mathbb{Q}}^{(\tau)}}{\dd{\mathbb{P}}} \big) - \log \big( \frac{\dd{\P^{u_0}}}{\dd{\mathbb{P}}} \big)$.

\begin{remark}[Analytical computation of annealing sequence]
    Using $\tau + \Delta \tau = \beta_{i+1}$, $\tau = \beta_i$ paired with
    \eqref{eq: analytical annealing update} and \eqref{eq:FI} we can analytically compute the annealing sequence $(\beta_i)_i$, up to high order terms, as
    \begin{equation}
        \beta_{i+1} = \beta_i + \sqrt{\frac{2 \varepsilon}{\mathcal{I}(\beta_i)}} + O((\beta_{i+1}-\beta_i)^{3/2})
        ,
    \end{equation}
    with 
    \begin{equation}
        \mathcal{I}(\beta_i) = \mathrm{Var}_{\P^{u_i}} \bigg[ \log \bigg( \frac{\dd \Q}{\dd{\P^{u_0}}} \bigg) \bigg],
    \end{equation}
    where we used that $\mathbb{Q}^{(\tau)} = \P^{u_i}$.
\end{remark}

\section{Trust region SOC losses}
In this section, we provide a non-exhaustive list of losses that can be readily applied to solve SOC problems within our trust region framework. More specifically, we aim for minimizing a divergence $D: \mathcal{P} \times \mathcal{P} \rightarrow \R^+$ between the path measures induced by the control $u_{i+1}$ and the learnable control $u$. For a comprehensive overview of SOC losses without trust regions, see \cite{domingo2024taxonomy}. 

\subsection{Log-variance loss}
\label{app:lv_tr}
 Here, we provide further details on the log-variance loss \cite{richter2020vargrad,nuesken2021solving,richter2023improved} within our trust region framework. The log-variance loss is defined as
\begin{equation}
    \mathcal{L}_{\mathrm{LV}}(u) = \V\left[
    \log \left(\frac{\dd \P^{u_{i+1}}}{\dd \P^{u}}(X^{w})  \right)
    \right]  \quad \text{with} \quad X^w \sim \P^w,
\end{equation}
where $X^w$ is defined as in \eqref{eq:soc_intro}, with $u$ replaced by $w \in \mathcal{U}$, referred to as the \textit{reference process}. Although the choice of $w$ is arbitrary, we discuss two particularly suitable options that facilitate sample reuse with replay buffers in combination with trust regions.

\paragraph{Using $w = u_i$ as reference control.} First, we replace the reference control $w$ with the control function of the previous iteration $u_i$. Thus, the log-variance loss becomes
\begin{equation}
    \mathcal{L}_{\mathrm{LV}}(u) = \V\left[
    \log \left(\frac{\dd \P^{u_{i+1}}}{\dd \P^{u}}(X^{u_i})  \right)
    \right]  = \V\left[
    \log \left(\frac{\dd \P^{u_{i+1}}}{\dd \P^{u_i}} (X^{u_i}) \frac{\dd\P^{u_{i}}}{\dd\P^{u}}(X^{u_i}) \right)
    \right].
\end{equation}
The Girsanov theorem (see~\Cref{sec: useful identities}) shows that
\begin{equation}
\label{eq:lv_appendix_girsanov}
     \log \left(\frac{\dd\P^{u_{i}}}{\dd\P^{u}}(X^{u_i})\right)  =\frac{1}{2} \int_0^T \| u_i(X^{u_i}_s,s)-u(X^{u_i}_s,s)\|^2\dd s + \int_0^T (u_i-u)(X^{u_i}_s,s) \cdot \dd W_s.
\end{equation}
Combining this result for $u=0$ with \Cref{prop: Optimal change of measure as geometric annealing}, we obtain 
\begin{subequations}
\label{eq:lv_appendix_rnd}
\begin{align}
    \frac{\dd \P^{u_{i+1}}}{\dd \P^{u_i}}(X^{u_i})  &\propto  \left(\frac{\dd \Q}{\dd \P}\frac{\dd \P}{\dd \P^{u_i}}(X^{u_i})\right)^{{\frac{1}{1+\lambda_i}}}  \\ & = e^{-\frac{1}{1+\lambda_i}\left( \int_0^T \frac{1}{2}\| u_i(X^{u_i}_s,s)\|^2\dd s + \int_0^T u_i(X^{u_i}_s,s) \cdot \dd W_s + \mathcal{W}(X^{ u_i},0)\right)}.
\end{align}
\end{subequations}
Noting that the variance is shift-invariant, \eqref{eq:lv_appendix_girsanov} and~\eqref{eq:lv_appendix_rnd} imply that
\begin{align}
\begin{split}
\mathcal{L}_{\mathrm{LV}}(u) = \V\Bigg[ &
-\frac{1}{1+\lambda_i}\left(\frac{1}{2} \int_0^T \| u_i(X^{u_i}_s,s)\|^2\dd s + \int_0^T u_i(X^{u_i}_s,s) \cdot \dd W_s + \mathcal{W}(X^{ u_i},0)\right)
\\ & +
\frac{1}{2} \int_0^T \| u_i(X^{u_i}_s,s)-u(X^{u_i}_s,s)\|^2\dd s + \int_0^T (u_i-u)(X^{u_i}_s,s) \cdot \dd W_s   
\Bigg],
\end{split}
\end{align}
which can be implemented by discretizing the integrals; see~\Cref{app:implementation}.

Please note that the loss reduces to 
\begin{equation}
    \mathcal{L}_{\mathrm{LV}}(u) = \V\left[
    \log \left(\frac{\dd \Q}{\P^{u_i}} (X^{u_i}) \frac{\dd\P^{u_{i}}}{\dd\P^{u}}(X^{u_i}) \right)
    \right] = \V\left[
    \log \left(\frac{\dd \Q}{\dd \P^{u}}(X^{u_i})  \right)
    \right]
\end{equation}
for $\lambda_i = 0$, which is how the loss is mostly used in the literature, where the variance is computed using the most recent control, see e.g. \cite{richter2023improved}.

\paragraph{Using $w=u_{i+1}$ as reference control.} Alternatively, when setting the reference control to the optimal control $u_{i+1}$, the log-variance loss is given by
\begin{subequations}
\label{eq: lv optimal ref}
\begin{align}
    \mathcal{L}_{\mathrm{LV}}(u) & = \V\left[
    \log \left(\frac{\dd \P^{u_{i+1}}}{\dd \P^{u}}(X^{u_{i+1}})  \right)
    \right] = 
    \\ & =
    \E\left[
    \log \left(\frac{\dd \P^{u_{i+1}}}{\dd \P^{u}}(X^{u_{i+1}})  \right)^2
    \right] - \left( \E\left[
    \log \left(\frac{\dd \P^{u_{i+1}}}{\dd \P^{u}}(X^{u_{i+1}})  \right)
    \right] \right)^2
        \\ & =
    \E\left[
    \log \left(\frac{\dd \P^{u_{i+1}}}{\dd \P^{u}}(X^{u_{i}})  \right)^2
    \frac{\dd \P^{u_{i+1}}}{\dd \P^{u_i}}(X^{u_{i}})\right] - \left( \E\left[
    \log \left(\frac{\dd \P^{u_{i+1}}}{\dd \P^{u}}(X^{u_{i}})  \right)
    \frac{\dd \P^{u_{i+1}}}{\dd \P^{u_i}}(X^{u_{i}})\right]\ \right)^2
\end{align}
    
\end{subequations}
which can be computed using \eqref{eq:lv_appendix_girsanov} and \eqref{eq:lv_appendix_rnd}. Hence, in contrast to using $w=u_i$, \eqref{eq: lv optimal ref} additionally incorporates the smoothed importance weights ${\dd \P^{u_{i+1}}}/{\dd \P^{u_i}}$.

\subsection{Moment loss}
The \textit{moment loss} was introduced in \cite{hartmann2019variational} and is defined as 
\begin{align}
    \mathcal{L}_{\mathrm{moment}}(u) & =\E\left[\log \left(\frac{\dd \P^{u_{i+1}}}{\dd \P^{u}}(X^{w})  \right)^2\right] \quad \text{with} \quad X^w \sim \P^w,
\end{align}
where $w\in\mathcal{U}$ is again an arbitrary reference control; see \ref{app:lv_tr}. We  distinguish again between $u = u_i$ and $w = u_{i+1}$.

\paragraph{Using $w = u_i$ as reference control.} In this case, $\mathcal{L}_{\mathrm{moment}}$ becomes 
\begin{align}
    \mathcal{L}_{\mathrm{moment}}(u) & =\E\left[\log \left(\frac{\dd \P^{u_{i+1}}}{\dd \P^{u}}(X^{u_i})  \right)^2\right] = \E\left[ 
    \log \left(\frac{\dd \P^{u_{i+1}}}{\dd \P^{u_i}} (X^{u_i}) \frac{\dd\P^{u_{i}}}{\dd\P^{u}}(X^{u_i}) \right)^2
    \right],
\end{align}
with 
\begin{equation}
\label{eq: moment app Z}
    \frac{\dd \P^{u_{i+1}}}{\dd \P^{u_i}}(X^{u_i}) 
    =\frac{e^{-\tfrac{1}{1+\lambda_i}\mathcal{W}_i(X^{u_i},0)}}{\Z_{i+1}} \quad \text{with} \quad \Z_{i+1} = \E\left[ e^{-\tfrac{1}{1+\lambda_i}\mathcal{W}_i(X^{u_i},0)}\right].
\end{equation}
Contrary to the log-variance loss, the moment loss is not shift-invariant, thus requiring $\Z_{i+1}$ which is commonly not available. As such, \cite{hartmann2019variational} proposes to treat $\Z_{i+1}$ as a learnable parameter. 
Using~\eqref{eq:lv_appendix_girsanov} and~\eqref{eq:lv_appendix_rnd} imply that
\begin{align}
\begin{split}
\mathcal{L}_{\mathrm{moment}}&(u,\Z_{i+1}) = \E\Bigg[ 
\Bigg(-\frac{1}{1+\lambda_i}\left(\frac{1}{2} \int_0^T \| u_i(X^{u_i}_s,s)\|^2\dd s + \int_0^T u_i(X^{u_i}_s,s) \cdot \dd W_s + \mathcal{W}(X^{ u_i},0)\right)
\\ & +
 \frac{1}{2} \int_0^T\| u_i(X^{u_i}_s,s)-u(X^{u_i}_s,s)\|^2\dd s + \int_0^T (u_i-u)(X^{u_i}_s,s) \cdot \dd W_s -\log \Z_{i+1}\Bigg)^2   
\Bigg],
\end{split}
\end{align}
which is optimized as $\min_{u\in\mathcal{U}, \ \Z_{i+1} \in \R} \mathcal{L}_{\mathrm{moment}}(u,\Z_{i+1})$.

\paragraph{Using $w = u_{i+1}$ as reference control.} Using $w = u_{i+1}$ yields 
\begin{align}
    \mathcal{L}_{\mathrm{moment}}(u,\Z_{i+1}) & =\E\left[\log \left(\frac{\dd \P^{u_{i+1}}}{\dd \P^{u}}(X^{u_{i+1}})  \right)^2\right] 
    \\ & = \E\left[ 
    \log \left(\frac{\dd \P^{u_{i+1}}}{\dd \P^{u_i}} (X^{u_i}) \frac{\dd\P^{u_{i}}}{\dd\P^{u}}(X^{u_i}) \right)^2\frac{\dd \P^{u_{i+1}}}{\dd \P^{u_i}} (X^{u_i}) 
    \right],
\end{align}
where ${\dd \P^{u_{i+1}}}/{\dd \P^{u_i}}$ depends on $\Z_{i+1}$, see \eqref{eq: moment app Z}.
Hence, the difference between using $w=u_i$ and $w=u_{i+1}$ lies in the additional importance weights ${\dd \P^{u_{i+1}}}/{\dd \P^{u_i}}$.

\subsection{Cross-entropy loss}
The cross entropy loss is defined as the forward KL divergence between $u_{i+1}$ and $u$, i.e.,
\begin{subequations}
\begin{align}
    \mathcal{L}_{\mathrm{CE}}(u) & = 
    D_{\mathrm{KL}}\left(\P^{u_{i+1}}|{\P}^{u}\right) 
    = 
    \E\left[\log \frac{\dd \P^{u_{i+1}}}{\dd{\P}^{u}}(X^{u_{i+1}})\right]
    \\ & 
    = 
    \E\left[\log \left(\frac{\dd \P^{u_{i+1}}}{\dd \P^{u_i}} (X^{u_{i+1}}) \frac{\dd\P^{u_{i}}}{\dd\P^{u}}(X^{u_{i+1}}) \right)\right]
    \\ & = 
    \E\left[\log \left(\frac{\dd \P^{u_{i+1}}}{\dd \P^{u_i}} (X^{u_i}) \frac{\dd\P^{u_{i}}}{\dd\P^{u}}(X^{u_i}) \right)\frac{\dd \P^{u_{i+1}}}{\dd \P^{u_i}} (X^{u_i})\right].
\end{align}
\end{subequations}
Using~\eqref{eq:lv_appendix_girsanov} and~\eqref{eq:lv_appendix_rnd} implies that
\begin{align}
\begin{split}
&\mathcal{L}_{\mathrm{CE}}(u) = \E\Bigg[ \Bigg(
-\frac{1}{1+\lambda_i}\left( \frac{1}{2} \int_0^T \| u_i(X^{u_i}_s,s)\|^2\dd s + \int_0^T u_i(X^{u_i}_s,s) \cdot \dd W_s + \mathcal{W}(X^{ u_i},0)\right)
\\ & +
\frac{1}{2} \int_0^T \| u_i(X^{u_i}_s,s)-u(X^{u_i}_s,s)\|^2\dd s + \int_0^T (u_i-u)(X^{u_i}_s,s) \cdot \dd W_s \Bigg)  \frac{\dd \P^{u_{i+1}}}{\dd \P^{u_i}} (X^{u_i})
\Bigg] - \log \Z,
\end{split}
\end{align}
with importance weights ${\dd \P^{u_{i+1}}}/{\dd \P^{u_i}}$.

\subsection{Stochastic optimal control matching via adjoint method }
\label{app:tr_adjoint_matching}
Here, we provide further details on the trust region version of the stochastic optimal control matching (SOCM) loss introduced in \cite{domingoenrich2024stochastic}.
We start from the cross-entropy loss, i.e., the forward KL divergence between $u_{i+1}$ and $u$, that is,
\begin{equation}
    \mathcal{L}_{\mathrm{CE}}(u) = 
    D_{\mathrm{KL}}\left(\P^{u_{i+1}}|{\P}^{u}\right) = 
    \E\left[\log \frac{\dd \P^{u_{i+1}}}{\dd{\P}^{u}}(X^{u_{i+1}})\right].
\end{equation}
Using Girsanov's theorem (see~\Cref{sec: useful identities}), the cross-entropy loss can be written as 
\begin{subequations}
\begin{align}
    \mathcal{L}_{\mathrm{CE}}(u) & = 
    \E\left[\frac{1}{2}\int_0^T \|u_{i+1}(X^{u_{i+1}}_s,s) - u(X^{u_{i+1}}_s,s)\|^2 \dd s\right]
    \\ & =
   \E\left[\frac{1}{2}\int_0^T \|u_{i+1}(X^{u_i}_s,s) - u(X^{u_i}_s,s)\|^2 \dd s \frac{\dd \P^{u_{i+1}}}{\dd \P^{u_i}} \right].
\end{align}
\end{subequations}

Using the expression for the optimal control, $u_{i+1} = \frac{\lambda_i}{1 + \lambda_i}u_i - \frac{1}{1 + \lambda_i}\nabla V_{i+1}$, see \Cref{prop: tr_soc_optimality}, yields
\begin{align}
    {\mathcal{L}}_{\mathrm{CE}}(u) & =
    \E\left[\frac{1}{2}\int_0^T \|\tfrac{\lambda_i}{1+\lambda_i}u_i(X_s^{u_i},s) - \tfrac{1}{1+\lambda_i} \sigma^{\top} \nabla V_{i+1}(X_s^{u_i},s) - u(X_s^{u_i},s)\|^2 \dd s \frac{\dd \P^{u_{i+1}}}{\dd \P^{u_i}} \right] 
\end{align}
with
\begin{equation}
\label{eq:gamma_socm}
      \nabla_x V_{i+1}(x,t) = -(1+\lambda_i) \frac{\nabla_x\E\left[e^{-\tfrac{1}{1+\lambda_i}\mathcal{W}_i(X^{u_i},t)}\Big|X^{u_i}_t=x\right]}{\E\left[e^{-\tfrac{1}{1+\lambda_i}\mathcal{W}_i(X^{u_i},t)}\Big|X^{u_i}_t=x\right]}.
\end{equation}
We use the adjoint method \cite[see Lemma 5]{domingoenrich2025adjoint} to evaluate the conditional expectation \eqref{eq:gamma_socm}\footnote{Note that there exist other methods for computing derivatives of functionals of stochastic processes. We refer the interested reader to  \cite{domingoenrich2024stochastic}.}, giving
\begin{align}
    & \nabla_x \E\left[e^{-\tfrac{1}{1+\lambda_i} \mathcal{W}_i(X^{u_i},t)}\Big|X^{u_i}_t=x\right] 
    = \E\left[\widetilde{a}(t,u_i, X^{u_i})e^{-\tfrac{1}{1+\lambda_i}\mathcal{W}_i(X^{u_i},t)}\Big|X^{u_i}_t=x\right]
\end{align}
where the adjoint state $\widetilde{a}(t,u_i, X^{u_i})$ satisfies the ordinary differential equation (ODE)
\highprio{JB: shouldn't it be a VJP instead of a JVP?}
\begin{align}
    \frac{\dd}{\dd s} \widetilde{a}(s, u_i, X_s^{u_i}) = - \Big[ 
    (\nabla  (b(X_s^{u_i},s) & + \sigma u_i(X_s^{u_i}, s))^{\top} \widetilde{a}(u_i, X_s^{u_i}, s) \\ & + \tfrac{1}{1+\lambda_i}\nabla (f(X_s^{u_i},s) +\tfrac{1}{2} \|u_i(X_s^{u_i},s)\|^2 )
    \Big]
\end{align}
with $\widetilde{a}(T,u_i, X_T^{u_i}) = \tfrac{1}{1+\lambda_i}\nabla g(X_T)$. Using the argument from \cite[Theorem 1]{domingoenrich2024stochastic}, replacing the path-wise reparameterization trick with the adjoint method, we arrive at the trust region version of the stochastic optimal control loss given by
\begin{align}
    {\mathcal{L}}_{\mathrm{SOCM}}(u) & = 
    \E\left[\frac{1}{2}\int_0^T \|\tfrac{\lambda_i}{1+\lambda_i}u_i(X^{u_i},s) -  \sigma^{\top} \widetilde{a}(u_i, X_s^{u_i}, s) - u(X^{u_i},s)\|^2 \dd s \frac{\dd \P^{u_{i+1}}}{\dd \P^{u_i}} \right]  + K
\end{align}
for some $K$ independent of $ u$. However, the adjoint state contains the Jacobian  $\nabla u_i$ and the derivative $\nabla \|u_i\|$, which can be expensive in practice. In what follows, we rewrite the objective such that we can get rid of these terms. 

\subsection{Stochastic optimal control matching via lean adjoint method }
\label{app:tr_lean_adjoint_matching}
Starting again from the cross-entropy loss, we now employ the alternative expression for the optimal control as stated in \Cref{it:solution_value_fn_app}. This yields the objective
\begin{align}
    {\mathcal{L}}_{\mathrm{CE}}(u) & =
    \E\left[\frac{1}{2}\int_0^T \|-\sigma^{\top}\nabla\widetilde{V}_{i+1}(X_s,s)  - u(X_s,s)\|^2 \dd s \frac{\dd \P^{u_{i+1}}}{\dd \P} \right],
\end{align}
where the gradient of the smoothed value function is given by
\begin{equation}
    \nabla_x \widetilde{V}_{i+1}(x,t) = - \frac{\nabla_x 
    \E \left[e^{-\beta_{i+1}\mathcal{W}(X_t,0)}|X_t=x\right]}{\E \left[e^{-\beta_{i+1}\mathcal{W}(X_t,0)}|X_t=x\right]}.
\end{equation}
We evaluate the conditional expectation using the adjoint method:
\begin{equation}
\nabla_x 
    \E \left[e^{-\beta_{i+1}\mathcal{W}(X_t,0)}|X_t=x\right] =  
    \E \left[a_{i+1}(X_s, s)e^{-\beta_{i+1}\mathcal{W}(X_t,0)}|X_t=x\right],
\end{equation}
where $a_{i+1}(X_s, s)$ denotes the lean adjoint state~\cite{domingoenrich2025adjoint}, which satisfies the backward differential equation
\begin{equation}
\label{eq:lean_adjoint_ode}
    \frac{\dd}{\dd s} a_{i+1}(X_s, s) = - \left[
    \left(\nabla  b(X_s,s\right)^{\top} a_{i+1}(X_s, s) + \beta_{i+1}\nabla f(X_s,s)
    \right]
\end{equation}
with terminal condition $a_{i+1}(X_T, T) = \beta_{i+1} \nabla g(X_T)$.
Following the derivations in~\cite{domingoenrich2024stochastic}, we arrive at the objective
\begin{equation}
\label{eq: socm lean}
   \mathcal{L}_{\mathrm{SOCM}}(u) =  \E\left[\tfrac{1}{2}\int_0^T  \| \sigma^{\top}a_{i+1}(X_s,s) - u(X_s,s)\|^2 \dd s\frac{\dd {\P}^{u_{i+1}}}{\dd{\mathbb{P}}}(X)\right].
\end{equation}
Finally, performing a change of measure to the previous control $u_i$ gives the expression:
\begin{equation}
   \mathcal{L}_{\mathrm{SOCM}}(u) =  \E\left[\tfrac{1}{2}\int_0^T  \| \sigma^{\top}a_{i+1}(X^{u_i}_s,s) - u(X^{u_i}_s,s)\|^2 \dd s\frac{\dd \P^{u_{i+1}}}{\dd \P^{u_i}}(X^{u_i})\right].
\end{equation}
We remark that the adjoint ODE in~\eqref{eq:lean_adjoint_ode} can be solved as
\begin{equation}
    \label{eq: adjoint zero f}
    a_{i+1}(X_s, s) = \beta_{i+1} \exp\left(\int_s^T \nabla b(X_t,t)^\top \dd t\right)  \nabla g(X_T)
\end{equation}
if $f=0$ and $\nabla b(X_t,t)\nabla b(X_s,s) = \nabla b(X_s,s) \nabla b(X_t,t)$ for all $s,t \in [0,T]$ (i.e., the matrices at different times commute). This allows us to solve the adjoint ODE exactly for our applications of sampling from unnormalized densities; see~\Cref{app:sampling}.

\paragraph{Extensions for diffusion-based sampling.} Consider the case where $f = 0$ and $b(x,t) = b_1(t)x$ with $b_1 \in C([0,T],\R)$, which holds in certain settings for diffusion-based sampling \cite{zhang2021path, vargas2023denoising}. In this case \eqref{eq: adjoint zero f} becomes
\begin{equation}
    a_{i+1}(X_T, s) = \beta_{i+1} \gamma(s)  \nabla g(X_T) \quad \text{with} \quad \gamma(s) \coloneqq \exp\left(\int_s^T b_1(t) \dd t\right).
\end{equation}
The SOCM loss in \eqref{eq: socm lean} therefore reads 
\begin{equation}
   \mathcal{L}_{\mathrm{SOCM}}(u) =  \E_{\P^{u_{i+1}}}\left[\tfrac{1}{2}\int_0^T  \| \beta_{i+1} \gamma(s)\sigma^{\top}  \nabla g(X^{u_{i+1}}_T) - u(X^{u_{i+1}}_s,s)\|^2 \dd s\right].
\end{equation}
From \Cref{prop: tr_soc_optimality app}~\Cref{it:solution_value_fn_app} it directly follows that
\begin{equation}
        \frac{\dd \P^{u_{i+1}}}{\dd \P^{u_i}}(X) = \frac{\dd \P^{u_{i+1}}}{\dd \P} (X)\frac{\dd \P}{\dd \P^{u_i}}(X) 
        \propto e^{-(\beta_{i+1} - \beta_{i}) g(X_T)}
\end{equation}
for $u_0 = \mathbf{0}$. Thus, the SOCM loss can be rewritten as 
\begin{subequations}
\begin{align}
   \mathcal{L}_{\mathrm{SOCM}}(u) 
   & =
   \E_{\P^{u_{i}}}\left[\tfrac{1}{2}\int_0^T  \| \beta_{i+1} \gamma(s)\sigma^{\top}  \nabla g(X^{u_{i}}_T) - u(X^{u_{i}}_s,s)\|^2 \dd s \frac{\dd \P^{u_{i+1}}}{\dd \P^{u_i}}(X^{u_i})\right]
   \\ & \propto
   \E_{\P^{u_{i}}}\left[\tfrac{1}{2}\int_0^T  \| \beta_{i+1} \gamma(s)\sigma^{\top}  \nabla g(X^{u_{i}}_T) - u(X^{u_{i}}_s,s)\|^2 \dd s \ e^{-(\beta_{i+1} - \beta_{i}) g(X^{u_{i}}_T)}\right].
\end{align}
\end{subequations}
Lastly, using that
$
\P^{u_{i}} = \P^{u_{i}}_{\cdot|T} \P^{u_{i}}_{T} =  \P_{\cdot|T} \P^{u_{i}}_{T},
$
we arrive at 
\begin{subequations}
\begin{align}
   \mathcal{L}_{\mathrm{SOCM}}(u) 
   & \propto
   \E_{\P^{u_{i}}_T\P_{\cdot|T}}\left[\tfrac{1}{2}\int_0^T  \| \beta_{i+1} \gamma(s)\sigma^{\top}  \nabla g(X^{u_{i}}_T) - u(X^{u_{i}}_s,s)\|^2 \dd s \ e^{-(\beta_{i+1} - \beta_{i}) g(X^{u_{i}}_T)}\right]
   \\ & =
   \int_0^T\E_{\P^{u_{i}}_T\P_{s|T}}\left[\tfrac{1}{2}  \| \beta_{i+1} \gamma(s)\sigma^{\top}  \nabla g(X^{u_{i}}_T) - u(X^{u_{i}}_s,s)\|^2  \ e^{-(\beta_{i+1} - \beta_{i}) g(X^{u_{i}}_T)}\right] \dd s,
\end{align}
\end{subequations}
where we marginalized out all $t \in [0,T)$ except for $t=s$. Note that for certain $b_1$, $\P_{s|T}$ can be sampled directly, removing the necessity for storing intermediate samples in the buffer.

\section{Trust regions for probability measures}
\label{app:trust_region_densities}

Our goal is to sample from a probability density of the form
\begin{equation}
    p_{\mathrm{target}}(x) = \frac{\rho_\mathrm{target}(x)}{\Z}, \quad \text{with} \quad \Z = \int \rho_\mathrm{target}(x) \dd x,
\end{equation}
where we can evaluate $\rho_\mathrm{target}$ but typically do not have access to samples from $p_{\mathrm{target}}$. To tackle this problem, one can again formulate this problem as a variational problem by minimizing a divergence between some $q$ and the target density $p_{\mathrm{target}}$. We can again incorporate an additional trust region constraint, that is, an upper bound on the change of the variational distribution $q$ within a single update step.  Formally, we are trying to solve the following problem:
\begin{equation}
\label{eq: trust region KL}
    q_{i+1} = \argmin_q \ \KL(q\|p_{\mathrm{target}}) \quad \text{s.t.} \quad \KL(q\|q_{i}) \leq \varepsilon, \quad \int \dd q = 1,
\end{equation}
where $q_{i}$ is the variational distribution from the previous iteration. We again tackle the constrained optimization problem in \eqref{eq: trust region KL} using Lagrangian multipliers. The Lagrangian is given by 
\begin{align}
    \mathcal{L}^{(i)}_{\mathrm{TR}}(q,\lambda,\omega) &= \KL(q\|p_{\mathrm{target}}) + \lambda \left(\KL(q\|q_{i}) - \varepsilon\right) + \omega\left(\int \dd q - 1\right) 
\end{align}
with Lagrangian multipliers $\lambda,\omega$. Taking the functional derivative $\delta \mathcal{L}^{(i)}_{\mathrm{TR}}(q,\lambda,\omega) /\delta q$ and setting it to zero admits a closed-form solution for the optimal density $q_{i+1}$ as the geometric average between the old distribution and the (unnormalized) optimal distribution, that is,\footnote{Note the dependence of $\mathcal{L}^{(i)}_{\mathrm{TR}}$ on $\omega$ vanishes as $q_{i+1}$ satisfies the normalization constraint.}
\begin{equation}
    q_{i+1}(\lambda) = \argmin_q \ \mathcal{L}^{(i)}_{\mathrm{TR}}(q,\lambda) = \frac{q_{i}^{\frac{\lambda}{1+\lambda}} \rho_{\mathrm{target}}^{\frac{1}{1+\lambda}}}{\Z_i(\lambda)}, \quad \text{with}\quad \Z_i(\lambda) = \int\dd  q_{i}^{\frac{\lambda}{1+\lambda}} \rho_{\mathrm{target}}^{\frac{1}{1+\lambda}} .
\end{equation}
Plugging the optimal distribution back into the Lagrangian yields the dual function
\begin{equation}
\mathcal{L}_{\mathrm{Dual}}^{(i)}(\lambda)
 =  \mathcal{L}^{(i)}_{\mathrm{TR}}(q_{i+1}(\lambda),\lambda) = -(1+\lambda)\log \Z_i(\lambda) - \lambda \varepsilon.
\end{equation}
Note that we can use any non-linear optimizer for solving for the optimal Lagrangian multiplier by maximizing the dual function, i.e.,
\begin{equation}
    \lambda_i = \argmax_{\lambda \in \R^+} \ \mathcal{L}_{\mathrm{Dual}}^{(i)}(\lambda).
\end{equation}

\section{Diffusion-based sampling}
\label{app:sampling}

We consider the task of sampling from densities of the form
\begin{equation}
    p_\mathrm{target} = \frac{\rho_\mathrm{target}}{\Z} \quad \text{with} \quad \Z \coloneqq \int_{\R^d} \rho_\mathrm{target}(x) \mathrm d x,
\end{equation}
where $\rho_\mathrm{target} \in C(\mathbb{R}^d, \mathbb{R}_{\ge 0})$ can be evaluated pointwise, but the normalizing constant $\Z$ is typically intractable. 

Here, we approach the sampling problem by using denoising diffusion-based sampling based on the work of \cite{vargas2023denoising} (see \cite{berner2022optimal,richter2023improved} for a generalization). To that end, we consider a controlled ergodic Ornstein-Uhlenbeck (OU) process $X= (X_s)_{s\in[0,T]}$, i.e.,
\begin{align}
\label{eq: dds controlled}
\dd X^u_s = \left(-\zeta(s)X^u_s + u(X^u_s,s)\right)\, \dd s +  \eta \sqrt{2 \zeta(s)} \, {\dd} W_s, && X_0 \sim p_0,
\end{align}
with noise schedule $\zeta \in C([0, T], \R)$, $p_0(x) = \mathcal{N}(0, \eta^2I)$ and corresponding path measure $\P^u$. The target path space measure $\Q$ is induced by an uncontrolled ergodic Ornstein-Uhlenbeck (OU) process, starting from the target $p_{\mathrm{target}}$ and running backward in time, that is,
\begin{align}
\label{eq: dds uncontrolled}
\dd X_s = \zeta(s)X_s\, \dd s +  \eta \sqrt{2 \zeta(s)} \, {\dd} W_s, && X_T \sim p_{\mathrm{target}},
\end{align}
which fulfills $\Q_0 \approx p_0$ for a suitable choice of $\zeta$.
For integration, we follow \cite{vargas2023denoising} and use an exponential integrator. Lastly, it can be shown that the optimal control fulfills
\begin{equation}
      u^*(x, s) = \eta \sqrt{2 \zeta(s)}\nabla_x \log \frac{\Q_s}{\P_s}(x),
\end{equation}
which is later used to analytically compute the optimal control for Gaussian mixture model target densities, see e.g. \cite{vargas2023denoising}. Please note that $\P_s = \mathcal{N}(0, \eta^2I)$ for all $s \in [0,T]$ as the uncontrolled SDE is initialized at its equilibrium distribution.

\subsection{Experimental setup} \label{appendix:sampling_setup}

Here, we provide further details on our experimental setup.

\paragraph{General setting.} 
The codebase used in this work was developed from scratch but is loosely inspired by \href{https://github.com/facebookresearch/SOC-matching}{\texttt{github.com/facebookresearch/SOC-matching}}.
All experiments are conducted using the Jax library \cite{bradbury2021jax} and are run on a single 40GB NVIDIA A40 GPU. Our default experimental setup, unless specified otherwise, is as follows: We use the Adam optimizer \cite{kingma2014adam} with a learning rate of $5 \times 10^{-4}$ and gradient clipping with a value of $1$. We utilized 50 discretization steps using exponential integrators. The control function $u$ is parameterized as a fully-connected 6-layer neural network with 256 neurons and GELU activations \cite{hendrycks2016gaussian}. Time embedding is achieved via Fourier features \cite{tancik2020fourier}. For all experiments, we used a time horizon of $T=1$.

The control is parameterized as 
\begin{equation}
    u^{\theta}(x,t) = f^{\theta}_1(x,t) + f^{\theta}_2(t) \frac{x}{\eta^2},
\end{equation}
and for experiments using Langevin preconditioning (LP), it is parameterized as
\begin{equation}
    u_{\mathrm{LP}}^{\theta}(x,t) = f^{\theta}_1(x,t) + f^{\theta}_2(t) \left(\frac{x}{\eta^2} + \nabla_x \log \rho_{\mathrm{target}}(x)\right),
\end{equation}
where $f_1^\theta$ and $f_2^\theta$ are neural networks parameterized by $\theta$.

For non-trust methods, we train for $60k$ gradient steps with a batch size of 2000, amounting to a total of $120M$ target evaluations. In contrast, trust region methods use a buffer of length 50k refreshed 150 times during training, resulting in a total of $60k \times 150 = 7.5M$ target evaluations. To optimize for the next control $u_{i+1}$, we perform $400$ gradient steps on the replay buffer using randomly sampled batches of size 2000. All experiments use a trust region bound of $\varepsilon = 0.1$. The dual function is optimized using a line search method.

For the \textit{Many Well} target, we set the standard deviation of the prior distribution to 1 and to 2.5 for the Gaussian mixture target. For the randomization of the mixing weights, we uniformly sample positive values that are normalized and rescaled such that the ratio between the maximum mixing weight and the minimum is 3. The diffusivity is scheduled according to $\zeta(t) = (C_{\text{max}} - C_{\text{min}}) \cos^2\left(\frac{t \pi}{2T}\right) + C_{\text{min}}$ with $C_{\text{min}} = 0.01$ and $C_{\text{max}} = 10$.

\paragraph{Evaluation protocol and model selection.}
We follow the evaluation protocol of prior work \cite{blessing2024beyond} and evaluate all performance criteria 100 times during training, using 2000 samples for each evaluation. We apply a running average with a window of 5 evaluations to smooth out short-term fluctuations and obtain more robust results within a single run. We conducted each experiment using four different random seeds and averaged the best results for each run.

\paragraph{Benchmark problem details.} 
The \textit{Many Well} target involves a $d$-dimensional \emph{double well} potential, corresponding to the (unnormalized) density
\[
\rho_{\mathrm{target}}(x) = \exp\left(-\sum_{i=1}^m(x_i^2 - \delta)^2 - \frac{1}{2}\sum_{i=m+1}^{d} x_i^2\right),
\]
with $m \in \mathbb{N}$ representing the number of combined double wells (resulting in $2^m$ modes), and a separation parameter $\delta \in (0, \infty)$ (see also~\cite{wu2020stochastic}). In our experiments, we set $m=5$ leading to $2^m = 32$ modes. The separation parameter is set to $\delta=4$. Since $\rho_{\mathrm{target}}$ factorizes across dimensions, we can compute a reference solution for $\log \mathcal{Z}$ via numerical integration, as described in \cite{midgley2022flow}.

Moreover, we consider a Gaussian mixture model (GMM) target of the form
\begin{equation}
p_{\mathrm{target}}(x) = \sum_{k=1}^K \pi_k \mathcal{N}(x \mid \mu_k, \Sigma_k),
\end{equation}
where $\mu_k \in \mathbb{R}^d$, $\Sigma_k \in \mathbb{R}^{d \times d}$, $\pi_k \geq 0$, and $\sum_{k=1}^K \pi_k = 1$.
To compute the optimal control $u^*$, we exploit the fact that the optimal marginal path measures $\Q_t(x)$ can be derived analytically \cite{noble2024learned},
\begin{equation}
\label{eq: opt gmm path measure}
    \Q_t(x) = \sum_{k=1}^K \pi_k\mathcal{N}\left(x\,\Big|\,\mu_k e^{-\int_t^T \zeta(s)  \dd s} , \Sigma_k e^{-2 \int_t^T \zeta(s) \dd s} + \eta^2\int_t^T 2 \zeta(s) e^{-2 \int_t^s \zeta(u) \dd u} \dd s
    \right)
\end{equation}
and used this for computing the optimal control $u^*$.
Finally, to compute the total variation distance, we leverage the known true mixing weights $\pi_k$ and define the mode partitions $S_k \subset \mathbb{R}^d$ as
\begin{equation}
    S_k = \{x \in \R^d | \argmax_j \ \pi_j \mathcal{N}(x|\mu_j, \Sigma_j)=k\}.
\end{equation}

\subsection{Evaluation criteria}
\label{sec:eval_criteria}
Here, we provide further information on how our evaluation criteria are computed.

\paragraph{Control $L^2$ error.} Assuming access to the optimal control $u^*$, we can compute the $L^2$ error between the optimal and the learned control, i.e.,
\begin{equation}
    \text{control} \ L^2 \ \text{error} \coloneqq \E \left[\frac{1}{2} \int_0^T \|u^* - u\|^2(X^{u^*},s) \dd s\right],
\end{equation}
where $X^{u^*}$ is obtained by simulating the controlled process with $u^*$, and compute the error using a Monte Carlo estimate. Note that this quantity is equivalent to the forward Kullback-Leibler divergence
\begin{equation}
    \KL\left(\Q|\P^u\right) = \E \left[\log \frac{\dd \Q}{\dd \P^u}(X^{u^*})\right].
\end{equation}
Via Girsanov's theorem (see \Cref{sec: useful identities}) we have that
\begin{equation}
\label{eq: rnd Q to u}
    \frac{\dd \Q}{\dd \P^u}(X^{u^*}) = \int_0^T (u^* - u)(X^{u^*},s)\cdot \dd W_s + \frac{1}{2} \int_0^T \|u^* - u\|^2(X^{u^*},s) \dd s.
\end{equation}
The desired equivalence follows from the fact that, under mild regularity assumptions, the stochastic integral in \eqref{eq: rnd Q to u} is a martingale and has vanishing expectation.

\paragraph{Log-normalizing constant.}
By definition, the log-normalizing constant is given by
\begin{equation}
    \Z(X_0) = \E\left[e^{-\mathcal{W}(X,0)}\Big|X_0\right].
\end{equation}
Applying a change of measure to the controlled process yields
\begin{equation}
    \Z(X_0) = \E\left[e^{-\mathcal{W}(X^u,0)}\frac{\dd \P}{\dd \P^u}(X^u)\Big|X_0\right] = \E\left[e^{-\int_0^T \tfrac{1}{2}\|u(X^{u}_s,s)\|^2\dd s - \int_0^T u(X^{u}_s,s) \cdot \dd W_s-\mathcal{W}(X^u,0)}\Big|X_0\right],
\end{equation}
which can be estimated via Monte Carlo using samples from the current control $u$.

\paragraph{Sinkhorn distance.}
We estimate the Sinkhorn distance $\mathcal{W}^2_\gamma$~~\cite{cuturi2013sinkhorn}, an entropy-regularized optimal transport distance, between model and target samples using the JAX-based \texttt{ott} library~~\cite{cuturi2022optimal}.

\paragraph{Total variation distance.}
Inspired by recent work \cite{blessing2024beyond, grenioux2025improving}, we assume access to ground truth mixing weights $\pi_k$, $k \in \{1, \ldots, K\}$, along with a partition $\{S_1, \ldots, S_K\}$ of $\mathbb{R}^d$, where each region $S_k \subset \mathbb{R}^d$ corresponds to the $k$-th mode of the target distribution. We estimate the empirical mixing weights using
\begin{equation}
    \widehat{\pi}_k = \frac{\E\left[\mathbbm{1}_{S_k}(X^u_T)\right]}{\sum_{k'=1}^K \E\left[\mathbbm{1}_{S_{k'}}(X^u_T)\right]}.
\end{equation}
Using these estimates, we compute the total variation distance (TVD) between the empirical and true mode weights as
\begin{equation}
\text{TVD} = \sum_{k=1}^K \left| \pi_k - \widehat{\pi}_k \right|.
\end{equation}
Details on how the ground truth mixing weights and the corresponding mode regions $S_k$ are defined can be found in the descriptions of the target densities.

\subsection{Additional experiments}
\label{app:diffusion_exp}
Here, we provide results for additional numerical experiments. 

\paragraph{Gaussian Mixture 40 (GMM40).} 
We further evaluate the performance of trust-region-based losses by comparing them to existing SOC losses on the well-established GMM40 benchmark \cite{midgley2022flow}. In this task, the target distribution is a Gaussian mixture model with 40 components, where the means are uniformly sampled from the interval $[-40, 40]$, and each component has an initial variance of 1 . We set the prior’s standard deviation to $\eta = 30$. The results, presented in \Cref{fig:gmm40}, show that only two losses, Cross-Entropy (CE) and trust region with log-variance (TR-LV), can consistently learn all 40 modes. Notably, TR-LV achieves this with approximately ten times fewer target evaluations than CE.

\paragraph{Control $L^2$ error vs. target evaluations.}
We extend the results presented in \Cref{sec:application sampling} for the GMM benchmark by providing a detailed analysis of the control $L^2$ error as a function of the number of target evaluations across varying problem dimensionalities $d$. For $d = 2$, all SOC losses achieve low control error. However, at $d = 10$, some methods begin to exhibit elevated control error due to mode collapse. As the dimensionality increases further, only trust-region-based losses consistently maintain low control error. While these methods show partial mode collapse for $d \geq 150$, we anticipate that this issue can be mitigated by refining the control function architecture or by employing larger buffer and batch sizes. Importantly, trust region methods also require significantly fewer target evaluations -- a key advantage in many real-world applications where evaluations are costly.

\paragraph{Influence of trust region bounds.} 
We further investigate the effect of different trust region bound values $\varepsilon$ on the GMM target using TR-LV. The results are presented in \Cref{fig: different epsilon}. The left figure shows that smaller trust region bounds significantly improve performance: $\varepsilon = 0.01$ yields up to an order of magnitude lower control error compared to $\varepsilon = 1$. Additionally, smaller $\varepsilon$ values help stabilize training, as evidenced by the reduced standard deviation across random seeds. In contrast, training with $\varepsilon = 1$ becomes unstable. However, this improved stability comes at the cost of slower convergence -- smaller bounds require more training iterations to effectively anneal from the prior to the target path measure, as illustrated in the middle figure. Finally, the right figure shows that the empirically observed smoothed effective sample size (ESS) aligns well with its Taylor series approximation, $\operatorname{ESS} = \left(\Var\left(\frac{\dd \mathbb{P}^{(i+1)}}{\dd \mathbb{P}^{(i)}}\right)+1\right)^{-1} \approx \frac{1}{2\varepsilon + 1}$ for small values of $\varepsilon$; see see \Cref{app:variance iws} for further details.

\begin{figure*}[t!]
        \centering
        \begin{minipage}[t!]{\textwidth}
            \centering
            \begin{minipage}[t!]{0.245\textwidth}
            \includegraphics[width=\textwidth]{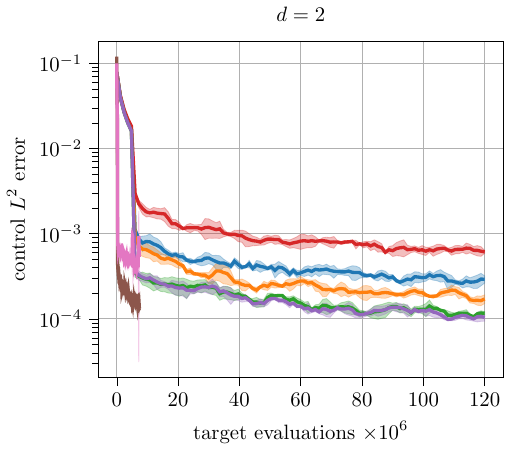}
            \end{minipage}
            \begin{minipage}[t!]{0.245\textwidth}
            \includegraphics[width=\textwidth]{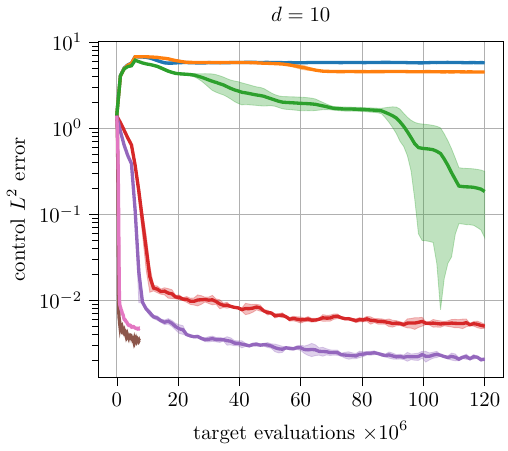}
            \end{minipage}
            \begin{minipage}[t!]{0.245\textwidth}
            \includegraphics[width=\textwidth]{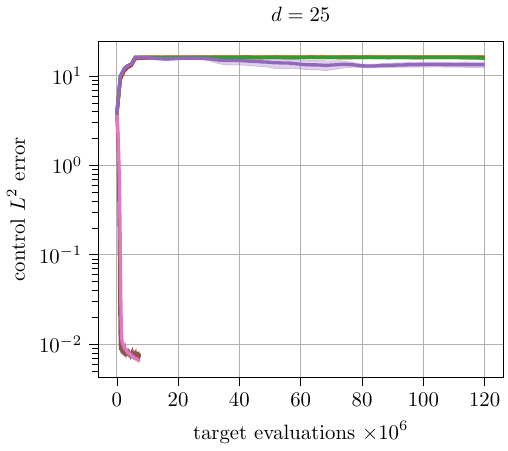}
            \end{minipage}
            \begin{minipage}[t!]{0.245\textwidth}
            \includegraphics[width=\textwidth]{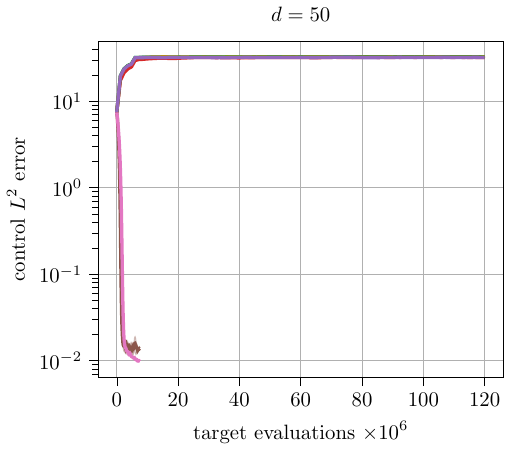}
            \end{minipage}
            \centering
            \begin{minipage}[t!]{0.245\textwidth}
            \includegraphics[width=\textwidth]{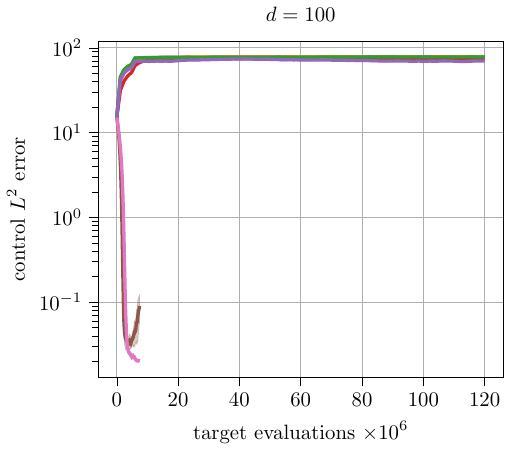}
            \end{minipage}
            \begin{minipage}[t!]{0.245\textwidth}
            \includegraphics[width=\textwidth]{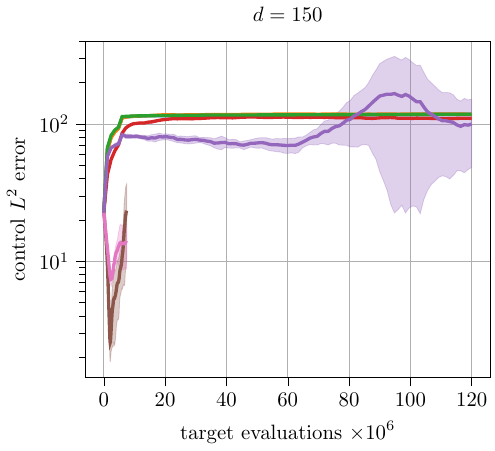}
            \end{minipage}
            \begin{minipage}[t!]{0.245\textwidth}
            \includegraphics[width=\textwidth]{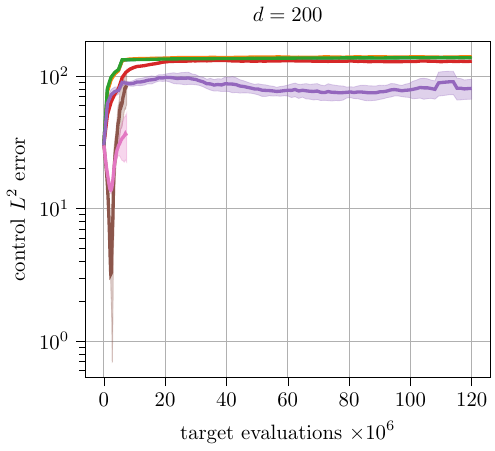}
            \end{minipage}
            \begin{minipage}[t!]{0.245\textwidth}
            \includegraphics[width=\textwidth]{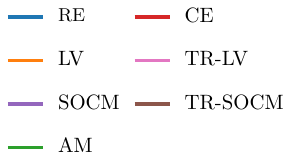}
            \end{minipage}
        \end{minipage}
        \caption[ ]
        {
Control $L^2$ error as a function of the number of target evaluations for the GMM target across varying problem dimensionalities $d$. All results are averaged across four
random seeds.
        }
        \label{fig:gmm_ctrl_err}
    \end{figure*}

\begin{figure*}[t!]
        \centering
        \begin{minipage}[t!]{\textwidth}
            \centering
            \begin{minipage}[t!]{0.245\textwidth}
            \includegraphics[width=\textwidth]{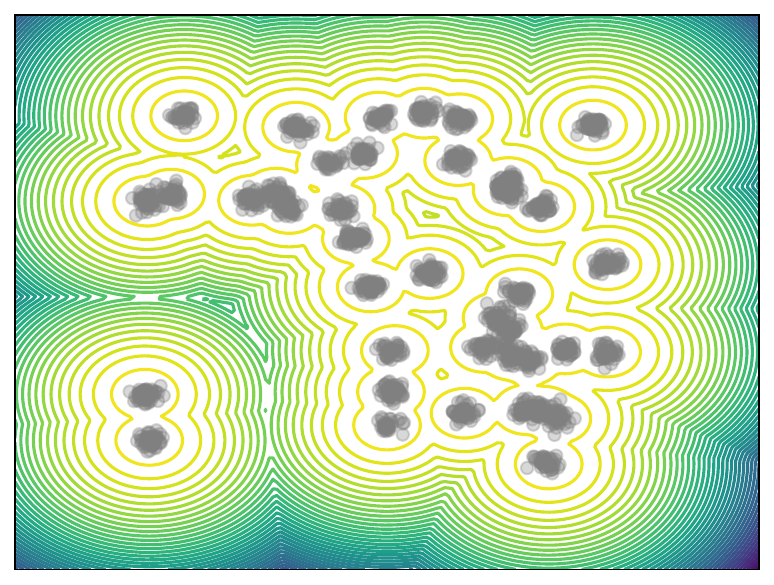}
            \end{minipage}
            \begin{minipage}[t!]{0.245\textwidth}
            \includegraphics[width=\textwidth]{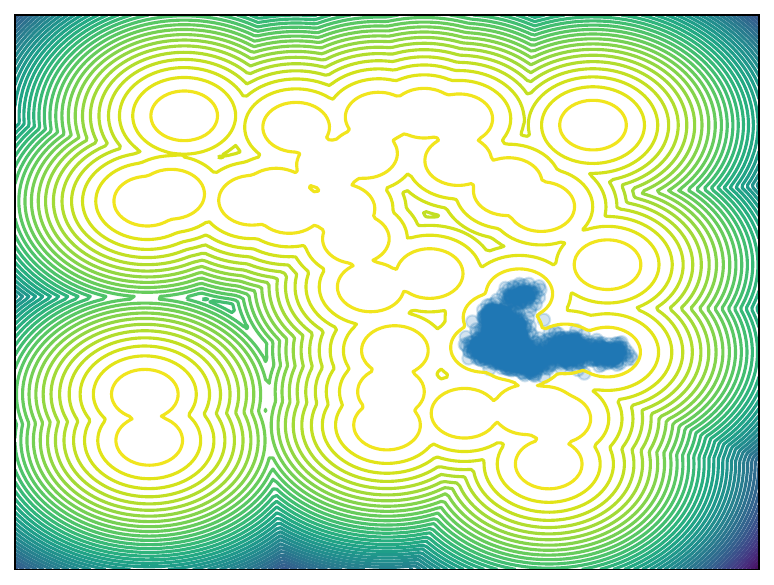}
            \end{minipage}
            \begin{minipage}[t!]{0.245\textwidth}
            \includegraphics[width=\textwidth]{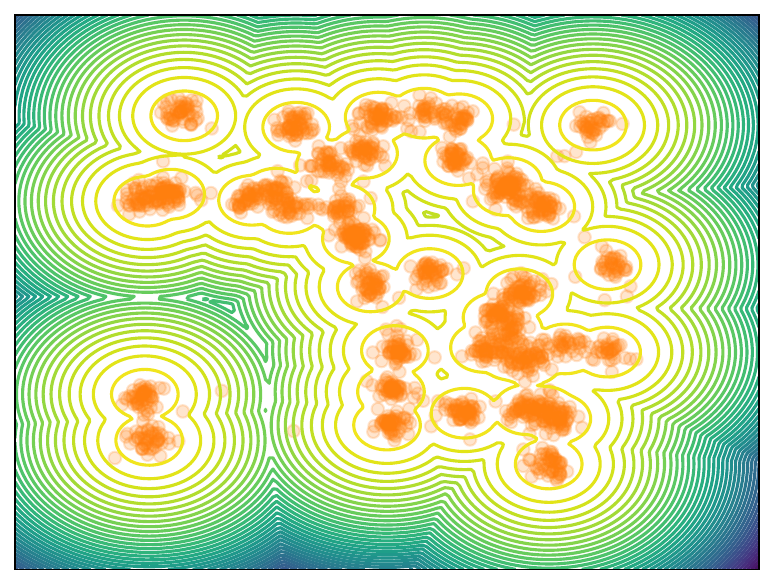}
            \end{minipage}
            \begin{minipage}[t!]{0.245\textwidth}
            \includegraphics[width=\textwidth]{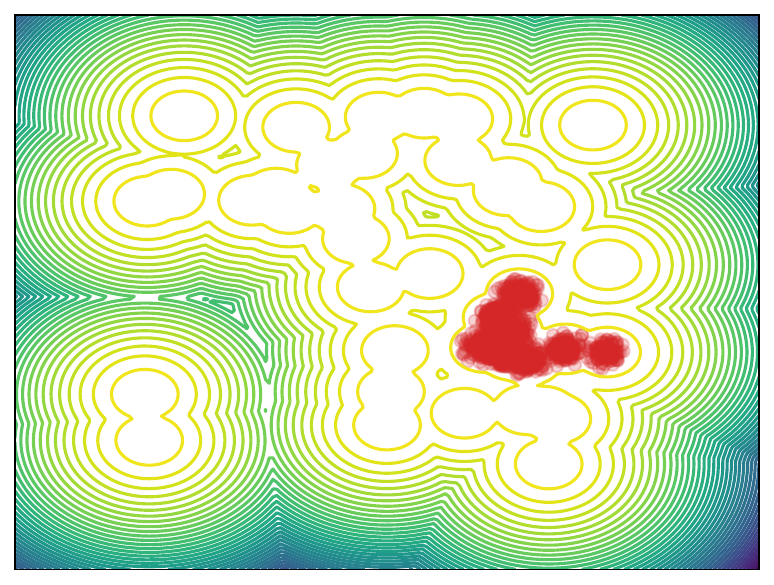}
            \end{minipage}
            \centering
            \begin{minipage}[t!]{0.245\textwidth}
            \includegraphics[width=\textwidth]{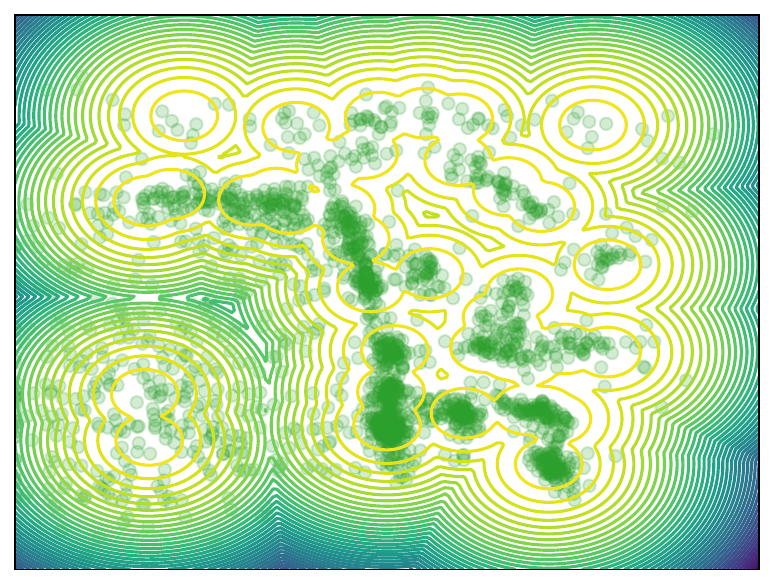}
            \end{minipage}
            \begin{minipage}[t!]{0.245\textwidth}
            \includegraphics[width=\textwidth]{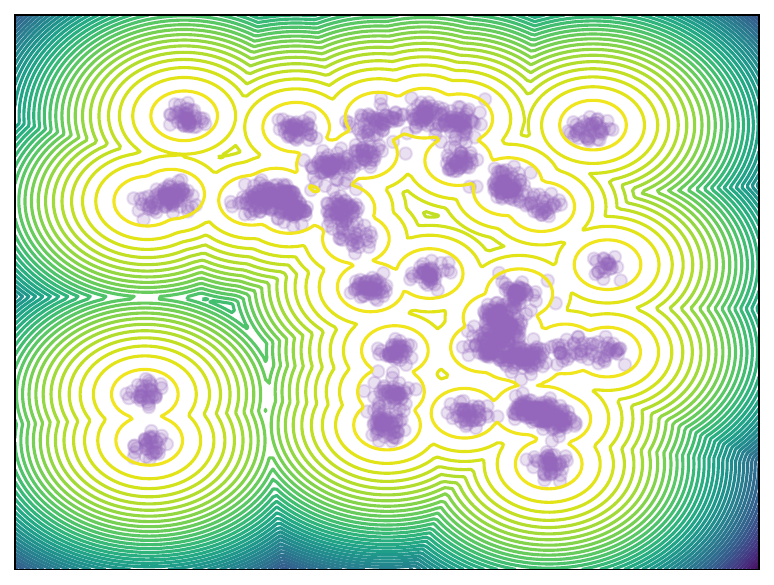}
            \end{minipage}
            \begin{minipage}[t!]{0.245\textwidth}
            \includegraphics[width=\textwidth]{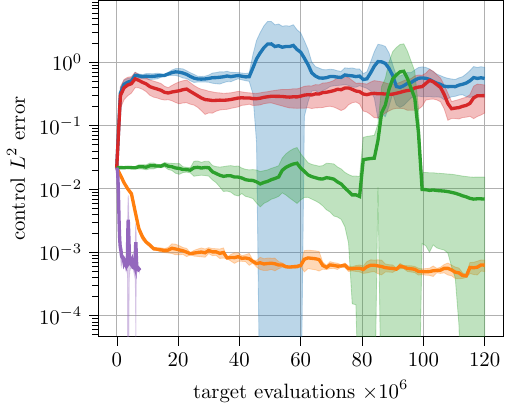}
            \end{minipage}
            \begin{minipage}[t!]{0.245\textwidth}
            \includegraphics[width=\textwidth]{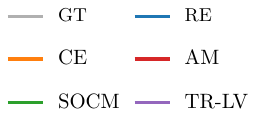}
            \end{minipage}

        \end{minipage}
        \caption[ ]
        {
        Qualitative and quantitative results for the GMM40 target. The qualitative plots demonstrate that only the CE (orange) and TR-LV (purple) losses successfully capture all 40 modes of the ground truth (GT, grey) distribution. This is further supported by the low $L^2$ control error observed for these two methods.
        Results are averaged across four random seeds and are not reported for the log-variance loss due to numerical instabilities.
        }
\label{fig:gmm40}
\end{figure*}

\vspace{1cm}
\begin{figure*}[t!]
        \centering
        \begin{minipage}[t!]{\textwidth}
            \centering
            \begin{subfigure}[t]{0.49\textwidth}
                \centering
                \begin{minipage}[t!]{\textwidth}
                \includegraphics[width=\textwidth]{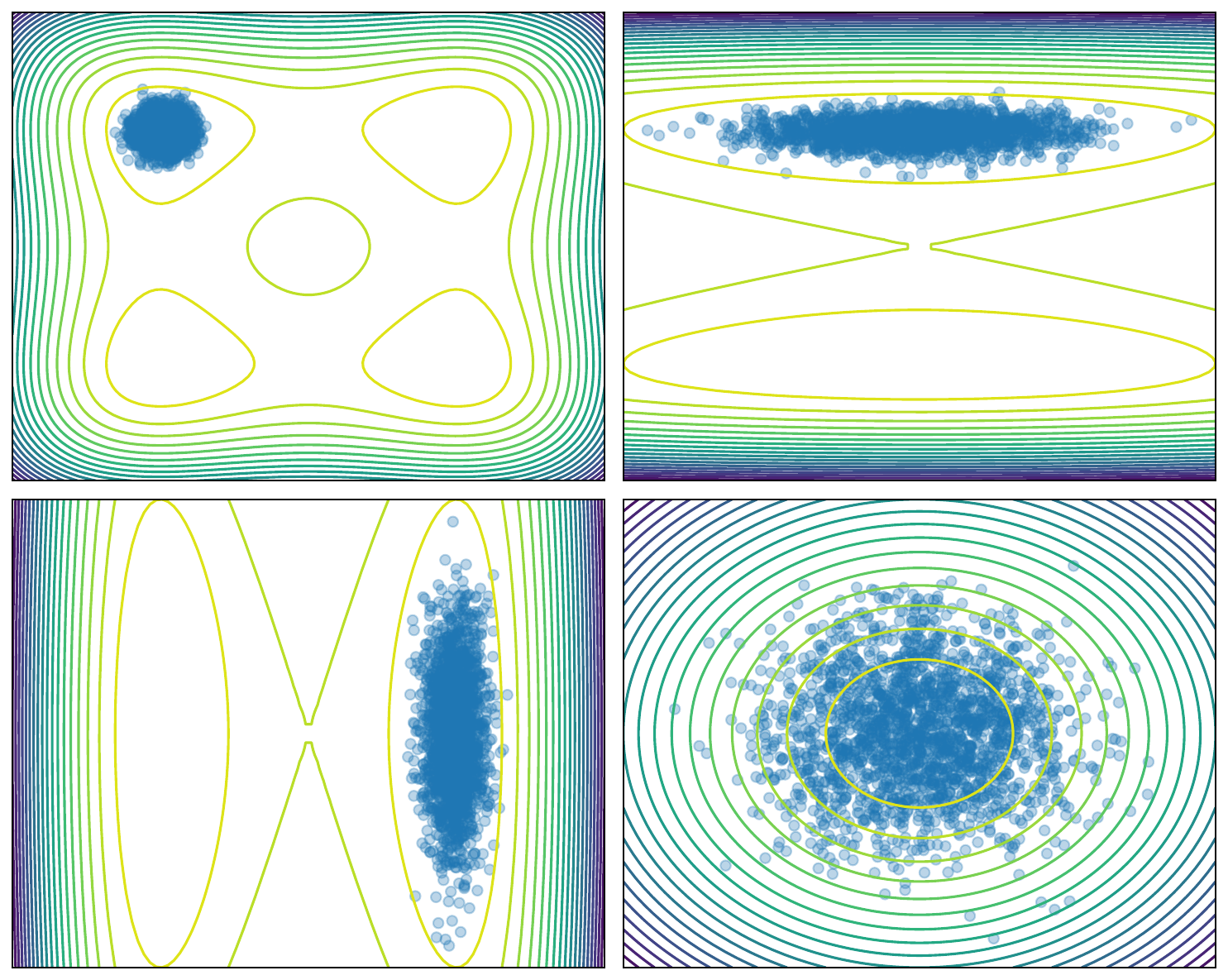}
                \end{minipage}
                \caption{Relative entropy (RE)}
            \end{subfigure}
            \begin{subfigure}[t]{0.49\textwidth}
                \centering
                \begin{minipage}[t!]{\textwidth}
                \includegraphics[width=\textwidth]{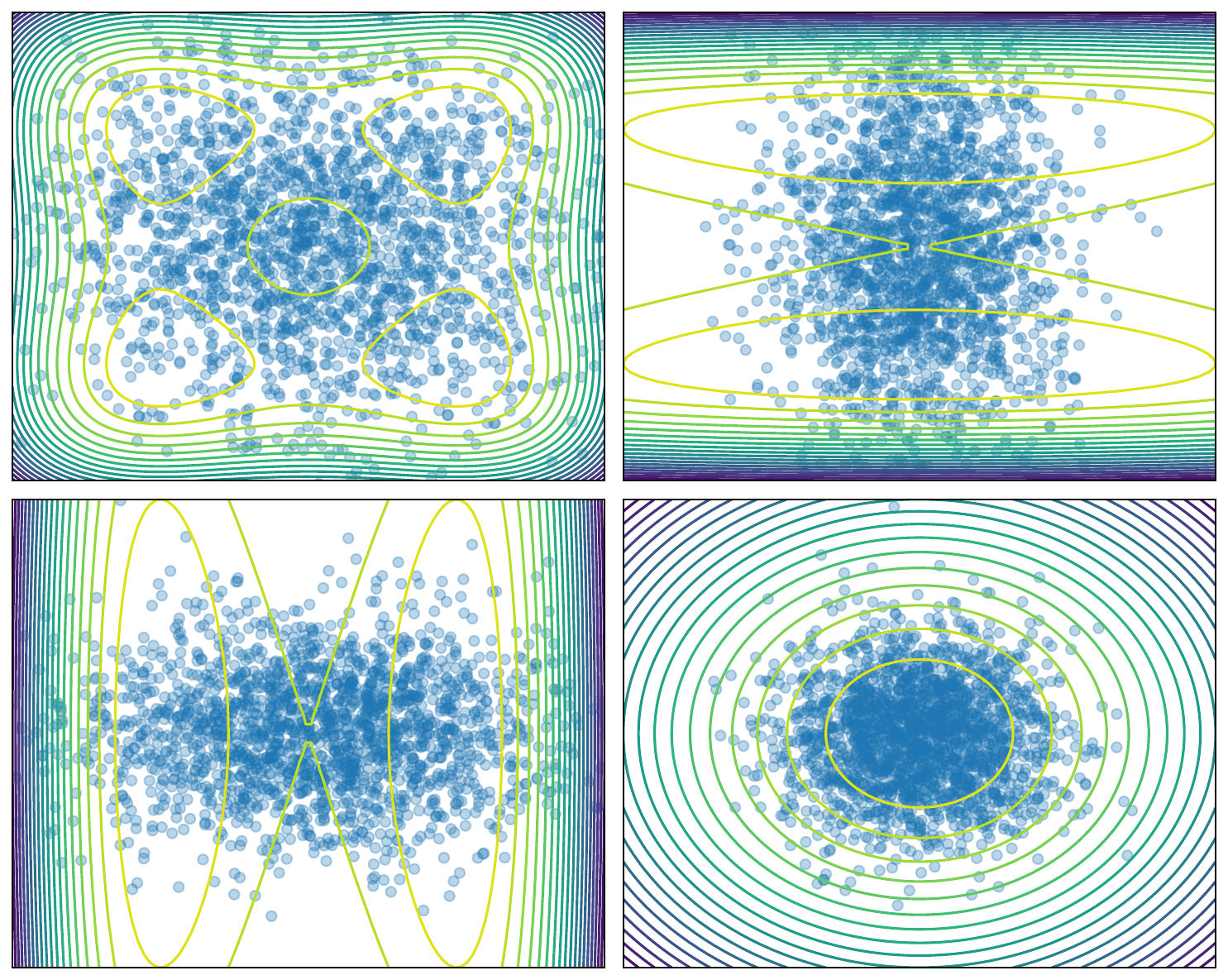}
                \end{minipage}
                \caption{Cross entropy (CE)}
            \end{subfigure}
            \begin{subfigure}[t]{0.49\textwidth}
                \centering
                \begin{minipage}[t!]{\textwidth}
                \includegraphics[width=\textwidth]{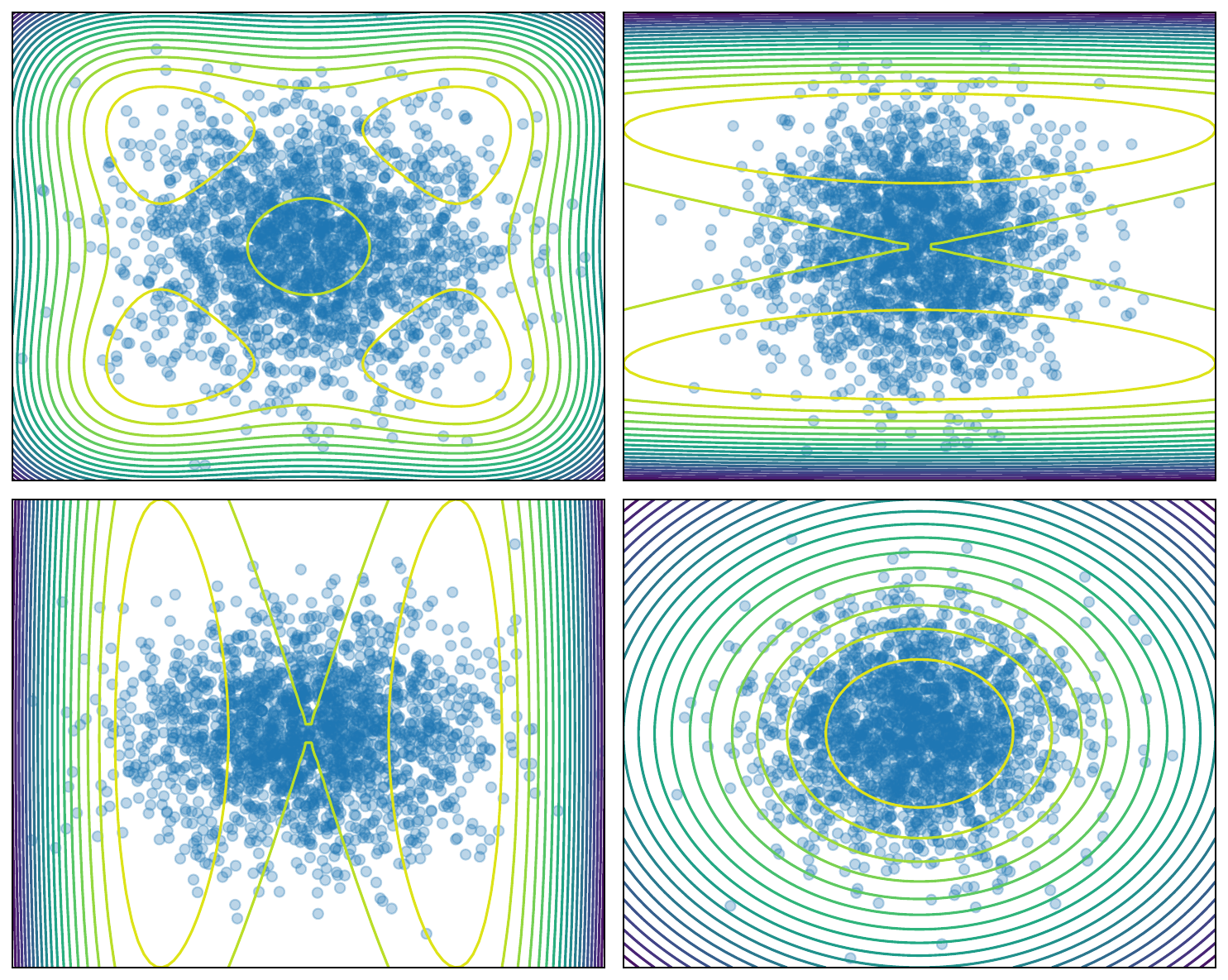}
                \end{minipage}
                \caption{Log-variance (LV)}
            \end{subfigure}
            \begin{subfigure}[t]{0.49\textwidth}
                \centering
                \begin{minipage}[t!]{\textwidth}
                \includegraphics[width=\textwidth]{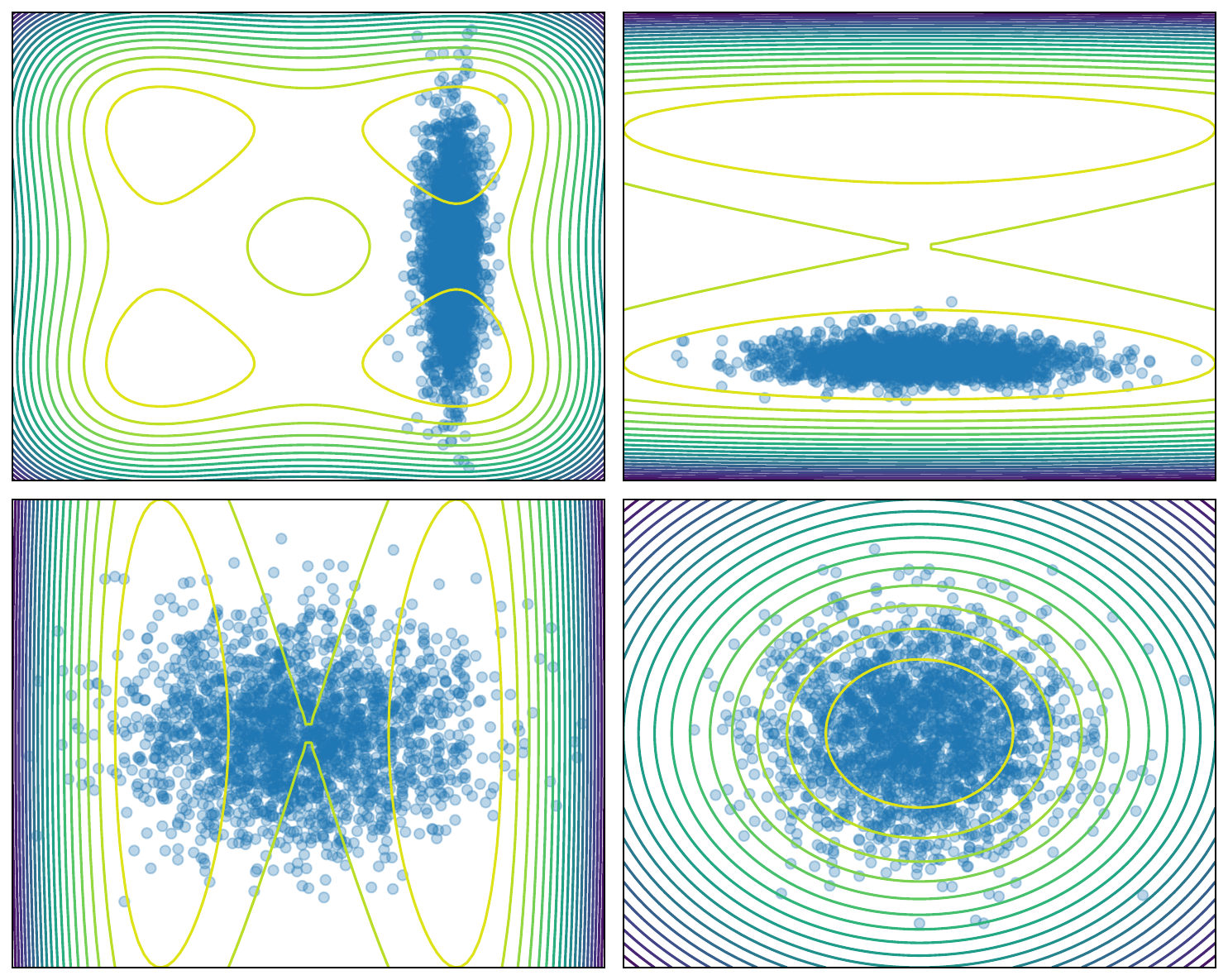}
                \end{minipage}
                \caption{Adjoint matching (AM)}
            \end{subfigure}
            \begin{subfigure}[t]{0.49\textwidth}
                \centering
                \begin{minipage}[t!]{\textwidth}
                \includegraphics[width=\textwidth]{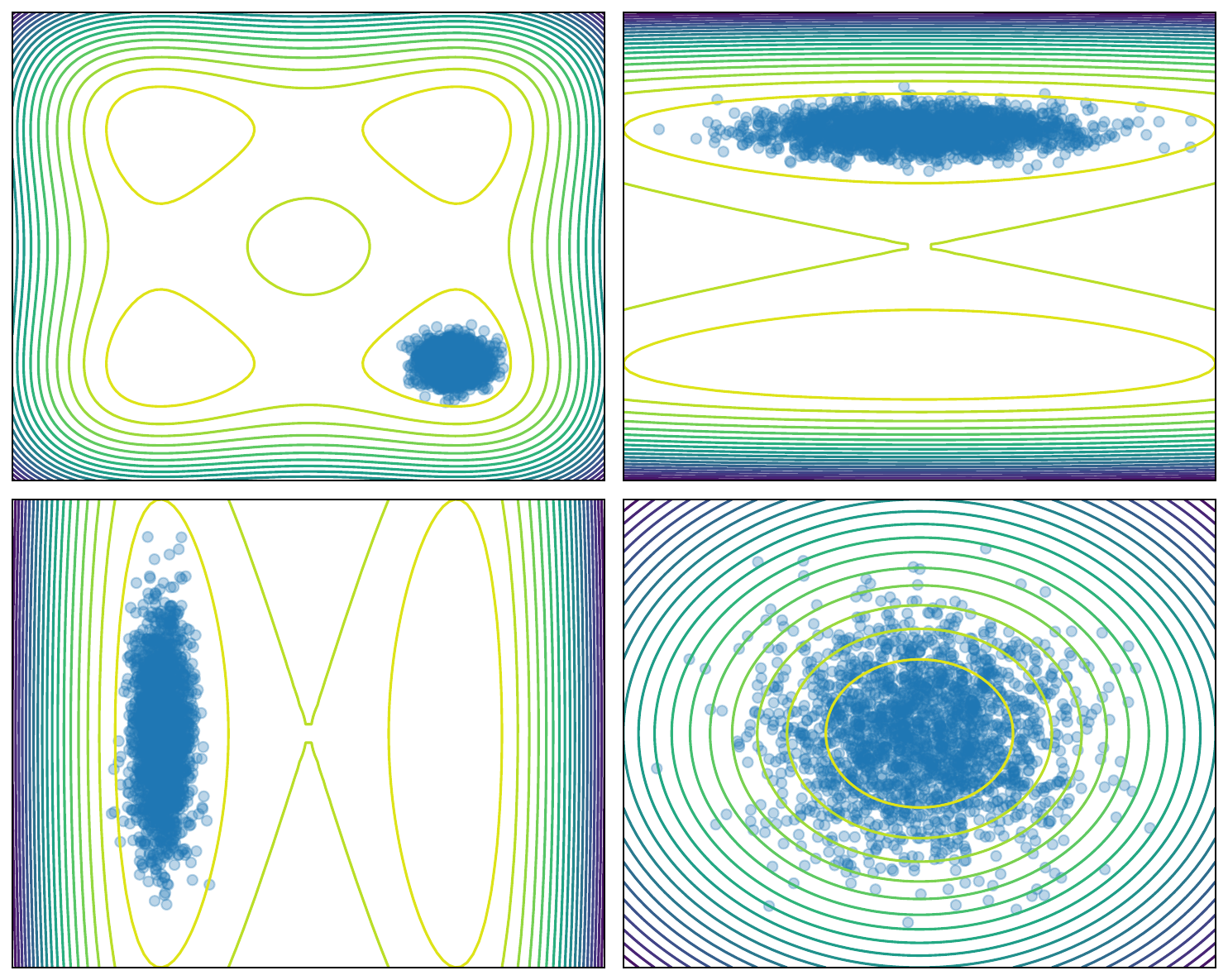}
                \end{minipage}
                \caption{SOC Matching (SOCM)}
            \end{subfigure}
            \begin{subfigure}[t]{0.49\textwidth}
                \centering
                \begin{minipage}[t!]{\textwidth}
                \includegraphics[width=\textwidth]{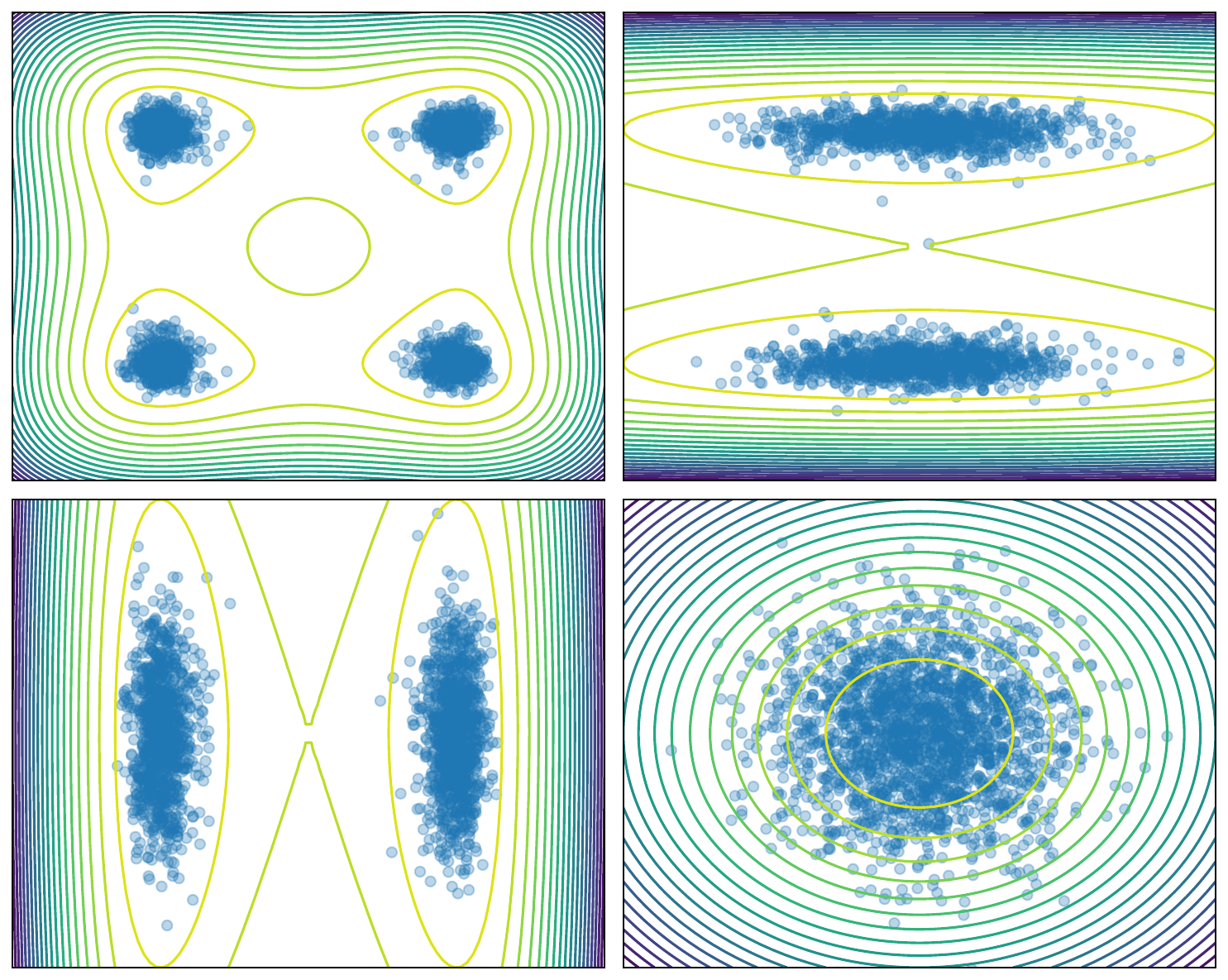}
                \end{minipage}
                \caption{Trust region log-variance (TR-LV)}
            \end{subfigure}
        \end{minipage}
        \vspace{0.1cm}
        \caption[ ]
        {
        Qualitative results for the \textit{Many Well} target with $d=200$. Level plots depict the ground truth density for pairs of marginal distributions, while blue dots represent samples generated by models trained using the respective loss functions (indicated in the sub-captions). Among all methods, only the trust-region-based log-variance loss successfully avoids mode collapse and convergence issues. Interestingly, although the cross-entropy loss achieves the second-lowest estimation error for $\log \mathcal{Z}$ (see \Cref{fig:combined_manywell}), the qualitative results suggest that the model fails to adequately capture the target distribution -- likely due to the high variance of the importance weights. All visualizations are generated using the same random seed for consistency.
        }
\label{fig:manywell}
\end{figure*}

\begin{figure*}[t!]
        \centering
        \begin{minipage}[t!]{\textwidth}
            \begin{minipage}[t!]{0.9\textwidth}
            \centering
            \includegraphics[width=0.8\textwidth]{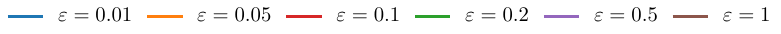}
            \end{minipage}
            \centering
            \begin{minipage}[t!]{0.32\textwidth}
            \includegraphics[width=\textwidth]{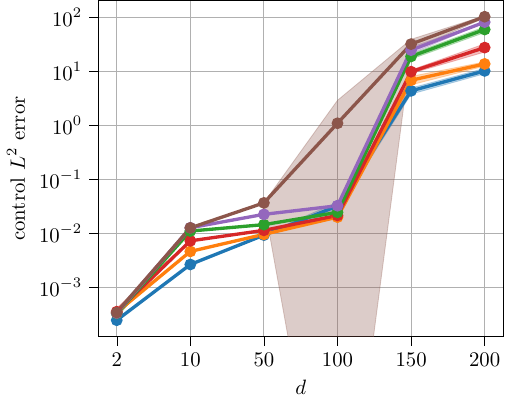}
            \end{minipage}
            \begin{minipage}[t!]{0.32\textwidth}
            \includegraphics[width=\textwidth]{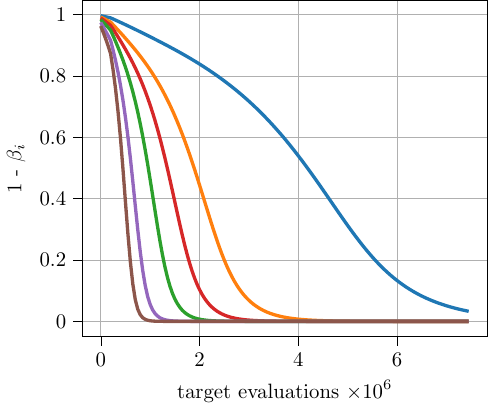}
            \end{minipage}
            \begin{minipage}[t!]{0.32\textwidth}
            \includegraphics[width=\textwidth]{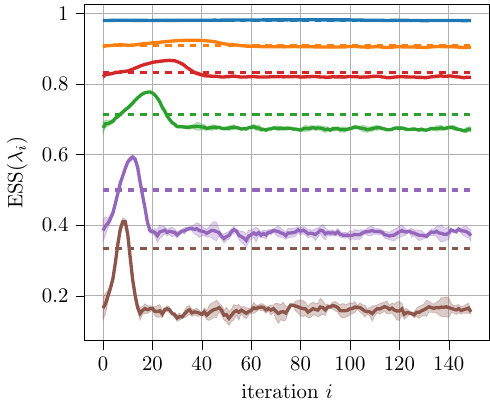}
            \end{minipage}
        \end{minipage}
        \caption[ ]
        {Influence of different trust region bound values $\varepsilon$ on the GMM target for TR-LV. The left figure considers varying problem dimensionalities $d$ whereas the middle and right figure report results for $d=100$.
        The figure on the right shows the empirically observed smoothed effective sample size (ESS) and its approximation via Taylor series approximation, i.e., $\operatorname{ESS} = \left(\Var\left(\frac{\dd \P^{u_{i+1}}}{\dd \P^{u_i}}\right)+1\right)^{-1} \approx \frac{1}{2\varepsilon+1}$, with solid and dashed lines, respectively. All results are averaged across four random seeds.
        }
        \label{fig: different epsilon}
    \end{figure*}

\section{Transition path sampling} \label{appendix:transitionpath}

\subsection{Experimental setup}

We build upon the codebase provided by TPS-DPS~\cite{seong2024transition} (\href{https://github.com/kiyoung98/tps-dps}{\texttt{github.com/kiyoung98/tps-dps}}). Our experimental setups also follow~\cite{seong2024transition} to ensure a fair comparison. The dual function is optimized using Brent's method \cite{brent1971algorithm}.

\textbf{MD simulation setup.} We run molecular dynamics simulation on the OpenMM platform. Both simulations are run at temperature 300K. For Alanine Dipeptide, we use the `amber99sbildn.xml' forcefield with a VVVR integrator to simulate in vaccum. Each timestep is set as 1 femtosecond. Each path sampled is of length 1,000. For Chignolin, we use the `protein.ff14SBonlysc.xml' forcefield with implicit solvant model `gbn2.xml' with a VVVR integrator. Each timestep is set as 1 femtosecond. Each path sampled is of length 5,000.

\textbf{Target hit.} For Alanine Dipeptide, target hit is defined over the two dihedral angles $\phi$ and $\psi$ and a distance radius within 0.75\AA. For Chignolin, a long MD simulation is pre-loaded with Time-lagged independent component analysis (TICA) to select the first two dimensions that capture most variance. The region is then defined over the two dimensions with a radius of 0.75.

\textbf{Training process.} Annealing is applied from 600K to 300K. A replay buffer is used with buffer size 1,000 and 200 for Alanine Dipeptide and Chignolin, respectively, and training over buffer per iteration is 1,000 times.

\textbf{Hyperparameters.} The trust region constraint is set to $\varepsilon = 0.01$ for Alanine Dipeptide and $\varepsilon = 0.2$ for Chignolin. Batch size for both systems is set to 16, Alanine Dipeptide is trained for 2000 iterations, while Chignolin is trained for 50 iterations.

\textbf{Computing resources.} Each experiment is run on a single 80GB NVIDIA H100 GPU.

\subsection{Additional experimental result discussion}
We discuss our results in comparison to~\cite{seong2024transition}. First of all, we evaluate three seed average as we notice the high variance nature of the transition path sampling problem--running several times can have huge variance in results (also evidenced in~\Cref{fig:tps}). We can also observe the trust region constraint helps to stabilize the training significantly and thus have much smaller variance across three runs. Notably, for Alanine Dipeptide, both methods start with zero hitting percentage, while in Chignolin, in the beginning both methods already have some trajectories that hit the target, trust region constraint is already effective in improving the efficiency. We use almost the exact same setup as in~\cite{seong2024transition} with the only difference being the batch size for Chignolin is 16 instead of 4. We do not tune the model as our goal is to show the trust region constraint improves the training stability and thus the efficiency and accuracy in terms of number of energy calls.

\section{Fine-tuning of diffusion models} \label{appendix:finetuning}
We take the adjoint matching (AM) implementation in \href{https://github.com/microsoft/soc-fine-tuning-sd}{\texttt{github.com/microsoft/soc-fine-tuning-sd}} as our baseline, and we modify it to implement TR-SOCM. 

\paragraph{Fine-tuning experimental details.}
We generate images using classifier-free guidance, with guidance scale 7.5. We use 50 inference timesteps to sample the trajectories during fine-tuning, and the evaluation samples are also generated at 50 inference timesteps.

We fine-tune using the default hyperparameters in the repo: we use AdamW, using learning rate \num{3e-6}, beta 1 set to 0.9, beta 2 set to 0.95, and weight decay 0. We use an effective batch size of 500 trajectories and 4 model backpropagations per trajectory. For the TR-SOCM loss, we use a trust-region bound $\varepsilon=0.1$, a buffer size of 100, and 10 passes per buffer. The dual function is optimized using Brent's method \cite{brent1971algorithm}.

We use the 10000 fine-tuning prompts taken from the repository for \cite{xu2023imagereward}, and the 100 validation prompts from the same repository (see \url{https://github.com/THUDM/ImageReward}). The two prompts used in Figure \ref{fig:images_diff_finetuning} are ``masterpiece, best quality, realistic photograph, 8k, high detailed vintage motorcycle parked on a wet cobblestone street at dusk, neon reflections, shallow depth of field'' and 
``close up photo of anthropomorphic fox animal dressed in white shirt, fox animal, glasses''.

\section{Classical SOC problems} 
\label{appendix:classicalSOC}
Here, we consider classical SOC problems, for which the optimal control can be computed analytically. These problems have been widely used in recent studies to compare different loss functions \cite{nuesken2021solving,domingoenrich2024stochastic,domingo2024taxonomy}. Here, we leverage them to showcase that importance sampling works in high dimensions when using trust-region-based losses. To that end, we consider the comparison between the SOCM loss and its trust-region-based counterpart. 

\subsection{Experimental setup} The experimental setup follows the setup used for diffusion-based sampling, as explained in \Cref{appendix:sampling_setup}, including control function architecture, hyperparameter evaluation protocol, and model selection. 

For discretizing the SDE, we leverage the Euler-Maruyama scheme, i.e.,
\begin{equation}
    \widehat{X}_{n+1} = \widehat{X}_n +\left( b + \sigma u \right)(\widehat{X}_n,n \Delta t) \Delta t + \sigma(n) \sqrt{\Delta t}\xi_n, \quad \xi_n \sim \mathcal{N}(0,I).
\end{equation}
Since the considered benchmark problems admit analytical solutions for the optimal control $u^*$, we consider the $L^2$ error between the learned and the optimal control for evaluating the models as explained in \Cref{appendix:sampling_setup}.

\subsection{Benchmark problem details}
We consider two problems taken from \cite{nuesken2021solving}, the \textit{Quadratic Ornstein-Uhlenbeck (OU) easy} and \textit{Quadratic Ornstein-Uhlenbeck (OU) hard}. For convenience, we briefly introduce them again here.

\paragraph{Quadratic Ornstein-Uhlenbeck (OU)} The choices for the functions of the control problem are
\begin{align}
    b(x,t) = A x, \quad f(x,t) = x^{\top} P x, \quad 
    g(x) = x^{\top} Q x, \quad \sigma(t) = \sigma_0,
\end{align}
where $Q$ is a positive definite matrix. Control problems of this form are better known as linear quadratic regulator (LQR) and they admit a closed form solution \cite{van2007stochastic}. The optimal control is given by
\begin{align}
    u^*(x,t) = - 2 \sigma_0^{\top} F(t) x,
\end{align}
where $F(t)$ is the solution of the Ricatti equation
\begin{align}
    \frac{\dd F(t)}{\dd t} + A^{\top} F(t) + F(t) A - 2 F(t) \sigma_0 \sigma_0^{\top} F(t) + P = 0
\end{align}
with the final condition $F(T) = Q$.
Within the Quadratic OU class, we consider two settings:
\begin{itemize}
    \item Easy: We set $A = 0.2 I$, $P = 0.2 I$, $Q = 0.1 I$, $\sigma_0 = I$, $\lambda = 1$, $T=1$, $x_{\mathrm{init}} \sim 0.5 \mathcal{N}(0,I)$.
    \item Hard: We set $A = I$, $P = I$, $Q = 0.5 I$, $\sigma_0 = I$, $\lambda = 1$, $T=1$, $x_{\mathrm{init}} \sim 0.5 \mathcal{N}(0,I)$.
\end{itemize}

\vspace{1cm}
\begin{figure*}[t!]
        \centering
        \begin{minipage}[t!]{\textwidth}
            \centering
            \begin{subfigure}[t]{\textwidth}
                \centering
                \begin{minipage}[t!]{0.245\textwidth}
                \includegraphics[width=\textwidth]{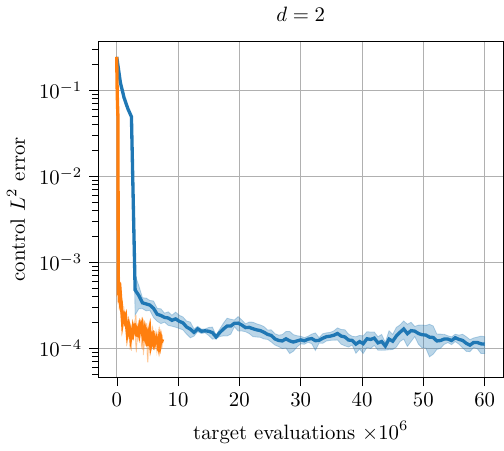}
                \end{minipage}
                \begin{minipage}[t!]{0.245\textwidth}
                \includegraphics[width=\textwidth]{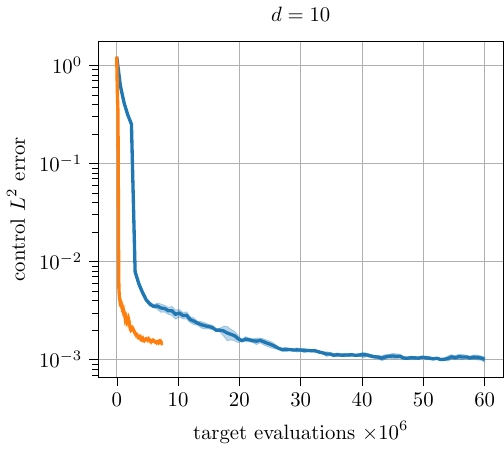}
                \end{minipage}
                \begin{minipage}[t!]{0.245\textwidth}
                \includegraphics[width=\textwidth]{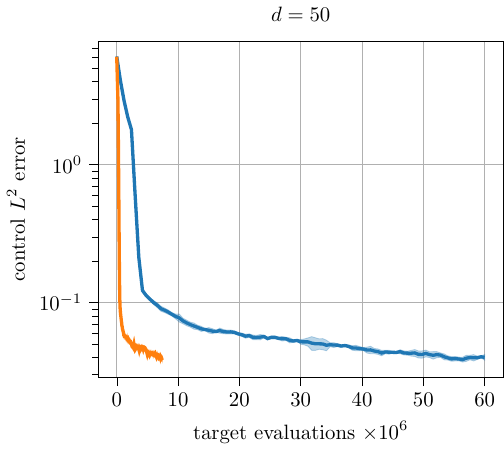}
                \end{minipage}
                \begin{minipage}[t!]{0.245\textwidth}
                \includegraphics[width=\textwidth]{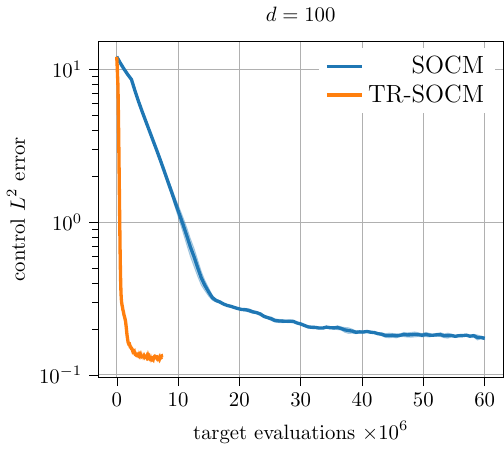}
                \end{minipage}
                \caption{Quadratic Ornstein-Uhlenbeck process (easy)}
            \end{subfigure}
            \centering
            \begin{subfigure}[t]{\textwidth}
                \centering
                \begin{minipage}[t!]{0.245\textwidth}
                \includegraphics[width=\textwidth]{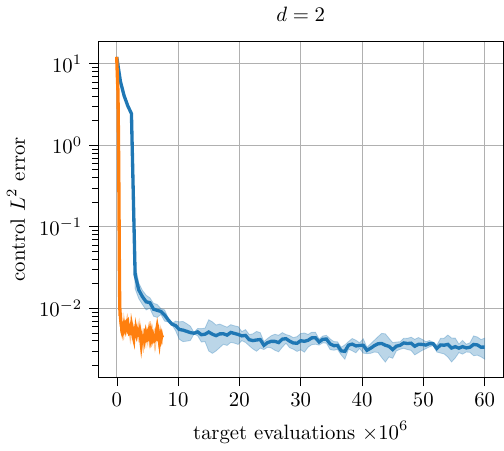}
                \end{minipage}
                \begin{minipage}[t!]{0.245\textwidth}
                \includegraphics[width=\textwidth]{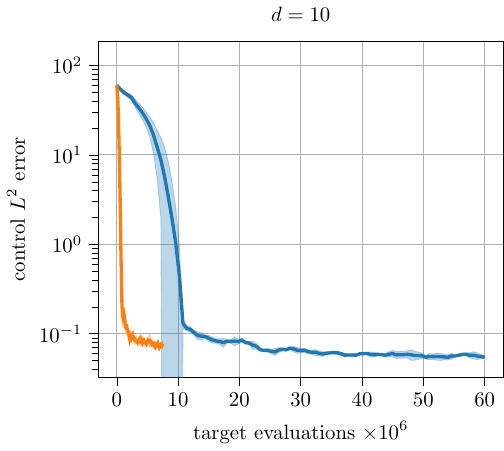}
                \end{minipage}
                \begin{minipage}[t!]{0.245\textwidth}
                \includegraphics[width=\textwidth]{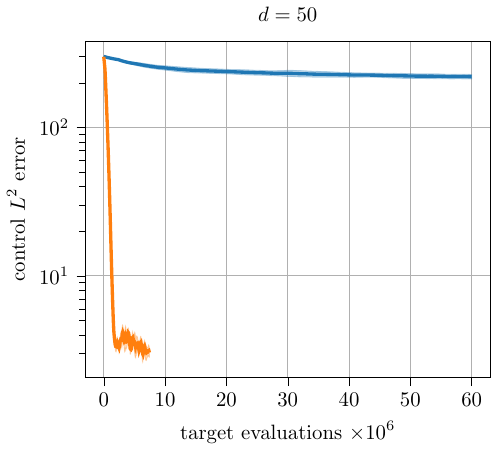}
                \end{minipage}
                \begin{minipage}[t!]{0.245\textwidth}
                \includegraphics[width=\textwidth]{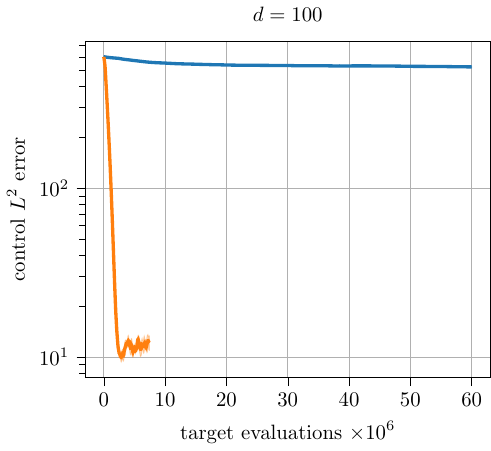}
                \end{minipage}
                \caption{Quadratic Ornstein-Uhlenbeck process (hard)}
            \end{subfigure}
        \end{minipage}
        \vspace{0.1cm}
        \caption[ ]
        {
        Control $L^2$ error as a function of the number of target evaluations for the quadratic OU problem across varying problem dimensionalities $d$. All results are averaged across four random seeds.
        }
\label{fig:quad_ou}
\end{figure*}

\subsection{Results}
We compare the performance of SOCM and its trust-region-based variant (TR-SOCM) on the quadratic Ornstein–Uhlenbeck (OU) problem across varying problem dimensionalities $d$. Both approaches rely on importance sampling, which is known to be challenging in high-dimensional settings. This experiment highlights the role of trust regions in scaling to such regimes. Results are presented in \Cref{fig:quad_ou}.

In low-dimensional settings ($d \leq 10$), both methods perform comparably, although TR-SOCM exhibits significantly better sample efficiency. As the dimensionality increases ($d \geq 50$), the performance of SOCM deteriorates markedly, while TR-SOCM continues to achieve low control error. For the more challenging variant of the quadratic OU problem, SOCM fails to meaningfully improve upon its initialization, whereas TR-SOCM demonstrates consistent error reduction.

These results suggest that trust regions are particularly beneficial in high-dimensional and difficult problem settings, where they provide stability and improved performance.

\end{document}